\documentclass[review]{elsarticle}

\usepackage{hyperref}

\journal{Journal of Artificial Intelligence}

\usepackage{bm}
\usepackage{graphicx}
\usepackage{subfigure}
\usepackage{amsmath}
\usepackage{amsthm}
\usepackage{amssymb}
\usepackage{algorithm,algorithmic}
\usepackage{thm-restate}

\usepackage[usenames,dvipsnames]{xcolor}
\usepackage{ifthen}

\newcommand{\compilehidecomments}{false}
\ifthenelse{ \equal{\compilehidecomments}{true} }{%
	\newcommand{\wei}[1]{}
	\newcommand{\siwei}[1]{}
}{
	\newcommand{\wei}[1]{{\color{blue!50!black}  [\text{Wei:} #1]}}
	\newcommand{\siwei}[1]{{\color{brown!60!black} [\text{Siwei:} #1]}}
}

\sloppy

\newtheorem{fact}{Fact}
\newtheorem{remark}{Remark}

\newtheorem{definition}{Definition}
\newtheorem{lemma}{Lemma}

\newtheorem{assumption}{Assumption}

\newcommand{\E}{\mathbb{E}}

\newcommand{\argmax}{\operatornamewithlimits{argmax}}
\newcommand{\argmin}{\operatornamewithlimits{argmin}}
\newcommand{\rad}{\operatornamewithlimits{rad}}

\newcommand{\cM}{\mathcal{M}}

\newcommand{\I}{\mathbb{I}}

\allowdisplaybreaks









\bibliographystyle{elsarticle-num}

\begin{document}

\begin{frontmatter}

\title{Thompson Sampling for Combinatorial Semi-Bandits}


\author[mymainaddress]{Siwei Wang\corref{mycorrespondingauthor}}
\cortext[mycorrespondingauthor]{Corresponding author}
\ead{wangsw2020@mail.tsinghua.edu.cn}

\author[mysecondaryaddress]{Wei Chen}
\ead{weic@microsoft.com}

\address[mymainaddress]{Tsinghua University, Beijing}
\address[mysecondaryaddress]{Microsoft Research Asia, Beijing}

\begin{abstract}
In this paper, we study the application of the Thompson sampling (TS) methodology to the stochastic combinatorial multi-armed bandit (CMAB) framework. 
We first analyze the standard TS algorithm for the general CMAB model when the outcome distributions of all the base arms are independent, and obtain a distribution-dependent regret bound of $O(m\log K_{\max}\log T / \Delta_{\min})$, where $m$ is the number of base arms, $K_{\max}$ is the size of the largest super arm, $T$ is the time horizon, and $\Delta_{\min}$ is the minimum gap between the expected reward of the optimal solution and any non-optimal solution. 
%
%
This regret upper bound is better than the $O(m(\log K_{\max})^2\log T / \Delta_{\min})$ bound in prior works. 
Moreover, our novel analysis techniques can help to tighten the regret bounds of other existing UCB-based policies (e.g., ESCB), as we improve the method of counting the cumulative regret.
Then we consider the matroid bandit setting (a special class of CMAB model), where we could remove the independence assumption across arms and achieve
	a regret upper bound that matches the lower bound.
Except for the regret upper bounds, we also point out that one cannot directly replace the exact offline oracle (which takes the parameters of an offline problem instance as input and outputs the exact best action under this instance) with an approximation oracle in TS algorithm for even the classical MAB problem. 
Finally, we use some experiments to show the comparison between regrets of TS and other existing algorithms, the experimental results show that TS outperforms existing baselines. 
\end{abstract}

\begin{keyword}
  Combinatorial Multi-armed Bandit \sep Thompson Sampling 
\end{keyword}

\end{frontmatter}


\section{Introduction}

Multi-armed bandit (MAB) \citep{Berry1985Bandit,Sutton2018Reinforcement} is a classical online learning model
	typically described as a game between a learning agent (player) and the environment with $m$ arms.
In each step, the environment generates an outcome, and the player uses a policy (or an algorithm), which
takes the feedback from the previous steps as input, to select an arm to pull.
After pulling an arm, the player receives a reward based on
	the pulled arm and the environment outcome.
In this paper, we consider stochastic MAB problem, which means the environment outcome is drawn
	from an unknown distribution \citep{Lai1985Asymptotically},
	not generated by an adversary \citep{Auer2002The}.
The goal of the player is to cumulate as much reward as possible over a total of $T$ steps ($T$ may be unknown).
The performance metric is the {\em (expected) regret}, which is the cumulative difference over $T$ steps
between always playing the arm with the optimal expected reward	and playing the arms according to the policy.

MAB models the key tradeoff between exploration --- continuing exploring new arms not observed often,
	and exploitation --- sticking to the best performing arm based on the observations so far.
A famous MAB algorithm is the upper confidence bound (UCB) policy \citep{gittins1989multi,Auer2002Finite},
	which achieves $O(m\log{T}/\Delta)$ distribution-dependent
	regret, where $\Delta$ is the minimum gap in the expected reward between an optimal arm and any non-optimal arm, and it matches
	the regret lower bound in \cite{Lai1985Asymptotically}. 

Combinatorial multi-armed bandit (CMAB) problem has recently become an active research area
	\citep{Yi2012,chen2016combinatorial,GMM14,KWAEE14,KWAS15,kveton2015combinatorial,wen2015efficient,Combes2015,ChenGeneral16,WangChen2017}.
In CMAB, the environment contains $m$ {\em base arms} (or simply arms), but
	the player needs to pull a set of base arms $S$ in each time slot, where $S$ is called a {\em super arm} (or an {\em action}).
The kind of reward and feedback varies in different settings.
In this paper, we consider the semi-bandit setting, where the feedback includes the outcomes of all base arms in the played super arm,
	and the reward is a function of $S$ and the observed outcomes of arms in $S$.
CMAB has found applications in many areas such as
	wireless networking, social networks, online advertising, etc \cite{WangChen2017,ontanon2013combinatorial,gai2010learning,qin2014contextual,ul2015flag,Chen2013Combinatorial}.
	For example, in an online advertising website, the system is conducting advertisements via the Internet, and it will receive one observation (e.g., whether the advertisement is clicked) after sending an advertisement to one user. 
However, to make the system more efficient, in each step we often choose a set of users (with size at most $K$) instead of a single user.
This forms a CMAB instance: the base arms are single users, and the super arms (actions) are all the sets of single users with size at most $K$.
Because of the widely adoption of CMAB model in real applications, it is important to investigate
	different approaches to solve CMAB problems.

An alternative approach different from UCB is the Thompson sampling (TS) approach,
	which is introduced much earlier by \cite{thompson1933likelihood}, but the theoretical analysis of the TS policy comes much later ---
	\cite{Kaufmann2012Thompson} and \cite{agrawal2012analysis} give the first regret bound for the TS policy, which essentially matches the UCB policy theoretically.
Moreover, TS policy often performs better than UCB in empirical simulation results,
	making TS an attractive policy for further studies.

TS policy follows the Bayesian inference framework to solve the MAB problems.
The unknown distribution of environment outcomes is parameterized, with an assumed prior distribution on the parameters.
TS updates the prior distribution in each step with two phases: first it uses the prior distribution to sample one set of parameters, which is used to determine the action to
	play in the current step; second it uses the feedback obtained in the current step to update the prior distribution to posterior distribution according to the
	Bayes' rule. To avoid confusion on these two kinds of random variables, in the rest of this paper, we use the word ``sample'' to denote the variable in the first phase, i.e. the random variable coming from the prior distribution on the parameters. The word ``observation'' represents the feedback random variable, which follows the unknown environment distribution.

In this paper, we study the application of the Thompson sampling approach to CMAB. The reason that we are interested in this approach is that it has good performance in experiments, especially when the reward function is linear, e.g., we found out that TS-based policy performs better than many kinds of UCB-based policy in experiments (see details in Section \ref{Section_e}). 
Another interesting thing is that TS-based policies do not require the reward function to be monotone, while UCB-based policies do need this assumption. This is because that a UCB-based policy needs to search for a parameter set (within the confidence region) such that under this set, the expected reward of all the super arms are overestimated. Only if the reward function is monotone, we can use the upper confidence bounds of all the parameters as this specific parameter set, otherwise it can be really complicate to search for that parameter set within the confidence region. Different with UCB-based policies, TS-based policies choose to draw random samples to form the parameter set and hence they do not require the monotonicity of the reward function. 
All of these make TS-based policy more competitive in real applications.

We first consider a general CMAB case similar to \cite{chen2016combinatorial}, i.e. we assume that
	(a) the problem instance satisfies a Lipschitz continuity assumption to handle non-linear
		reward functions, and
	(b) the player has access to an exact oracle for the offline optimization problem.
We use the standard TS policy together with the offline oracle,
	and refer it as the combinatorial Thompson sampling (CTS) algorithm.
CTS policy would first derive a set of parameters $\bm{\theta} = (\theta_1,\cdots,\theta_m)$ as sample set
	for the base arms, and then select the optimal super arm under parameter set $\bm{\theta}$.
	
	Comparing to the UCB-based solution in~\cite{chen2016combinatorial}, the advantages of CTS
	is that: a) we do not need
	to assume that the expected reward is monotone to the mean outcomes of the base arms; b) it has better behavior in experiments. CTS also suffers from some disadvantages. For example, in CTS policy we cannot directly replace the exact offline oracle with an approximation oracle as in \cite{chen2016combinatorial} (the regret becomes the approximation regret as well), which is useful in CMAB problems to accommodate 
		NP-hard combinatorial problems.
	However, we claim that it is because of the difference between TS-based algorithm and UCB-based algorithm. To show this, we provide a counter example for the origin MAB problem, which suffers an approximation regret of $\Theta(T)$ when using TS policy with an approximation oracle.


Another disadvantage is that we need to assume that the outcomes of all base arms are mutually independent. This is because TS policy maintains a prior distribution for every base arm's mean value $\mu_i$. Only when the arm outcome distributions are independent, we can use a simple 
	method to update those prior distributions on the parameters; otherwise the update method will be much more complicated for both the implementation and the analysis. 
This assumption is still reasonable and satisfied by many real-world applications.

The original analysis for TS on MAB model then faces a challenge in addressing the dependency issue: it essentially requires that
	different super arms be related with independent samples so that when comparing them and selecting the optimal super arm,
	the actual optimal one is selected with high probability.
But when super arms are based on the {\em same} sample set $\bm{\theta}$, dependency and correlation
	among super arms may likely fail the above high probability analysis.
%
%
%
%
One way to get around this issue is to independently derive a sample set $\bm{\theta}(S)$ for every super arm $S$ and compute its expected reward under $\bm{\theta}(S)$, and
	then select the optimal super arm.
Obviously this solution incurs exponential sampling cost and is what we want to avoid when solving CMAB.

To address the dependency challenge, we adapt an analysis of \cite{komiyama2015optimal} for
	selecting the top-$k$ arms to the general CMAB setting.
The adaptation is nontrivial since we are dealing with arbitrary combinatorial constraints
	while they only deal with super arms containing $k$ arms.
We also need an assumption that the outcomes of the base arms are independent. We will further discuss and justify this assumption shortly.



When the outcome distributions of all the base arms are independent, we show that CTS achieves $O(\sum_{i\in [m]} \log K_{\max}\log T / \Delta_{i,\min} + (2 / \varepsilon)^{2k^*})$
	distribution-dependent regret bound for some small $\varepsilon>0$, where $K_{\max}$ is the largest size of a super arm, $\Delta_{i,\min}$ is the
	minimum gap between the optimal expected reward and any non-optimal expected reward containing base arm $i$, and $k^*$ is the size of the optimal solution.
	Compare to the $O(\sum_{i\in [m]} K_{\max} \log T / \Delta_{i,\min})$ regret upper bound of applying CUCB in the general CMAB model \cite{chen2016combinatorial}, our regret bound is much better. This is because the independence assumption can significantly reduce the regret upper bound.
	For example, a similar regret bound $O(\sum_{i\in [m]} (\log K_{\max})^2 \log T / \Delta_{i,\min}) $ is achieved by a UCB-based policy
	ESCB \citep{Combes2015} in \cite{degenne2016combinatorial}, under the assumptions that the reward function is linear and outcome distributions are independent. \cite{perrault2020statistical} shows that CTS achieves the same $O(\sum_{i\in [m]} (\log K_{\max})^2 \log T / \Delta_{i,\min})$ regret upper bound when outcome distributions are independent (and it does not require the linear reward function assumption).
We adapt some basic ideas of the above results, and propose novel analysis techniques to further reduce one $\log K_{\max}$ factor in the existing regret upper bounds of CTS. 
Moreover, our novel analysis can also help to reduce one $\log K_{\max}$ factor in the regret upper bound of the ESCB policy in \cite{degenne2016combinatorial}, i.e., CTS and ESCB still have comparable regret upper bounds under the independent setting. 
However, ESCB achieves this bound with an exponential time complexity, because it needs to compute upper confidence bounds for {\em all 
	super arms} before making a selection in every time steps.
In contrast, as long as the combinatorial problem has an exact and efficient offline oracle implementation, our CTS algorithm can be implemented efficiently
	with only one call to the oracle in each time step.
Therefore, CTS algorithm is the only efficient algorithm that can achieve the above regret bound for CMAB with independent arms.
As for the exponential constant term $ O((2 / \varepsilon)^{2k^*})$, we show through an example that it is unavoidable for Thompson
	 sampling (e.g., the regret upper bound in \cite{perrault2020statistical} also contains a similar constant term). 

The independent arm assumption can be dropped if we further constrain the feasible solutions of the combinatorial problem to have a matroid 
	structure.
Matroid bandit is a special class of CMAB~\citep{KWAEE14},
	in which the base arms are the elements in the ground set and the super arms are the independent sets of a matroid. The reward of
	a super arm is the sum of outcomes of all base arms in the super arm.
We show that the regret of CTS is upper bounded by $O(\sum_{i\not \in S^*}\log {T}/\Delta_i+m / \varepsilon^4)$ for some small $\varepsilon>0$, where $S^*$ is an optimal solution
	and $\Delta_i$ is the minimum positive gap between the mean outcome of any arms in $S^*$ and the mean outcome of arm $i$. This result does not need to assume that all arm distributions are independent, and do not have a constant term exponential with $k^*$. 
It matches both the theoretical performance of the UCB-based algorithm and the lower bound given by Kveton et. al. \cite{KWAEE14}, 
	the constant term $O(m / \varepsilon^4)$ is similar to the results in Agrawal et. al. \cite{agrawal2012analysis} and appears in almost every TS analysis paper.

%
%

We further conduct empirical simulations, and show that CTS
	performs much better than the CUCB algorithm
	of~\cite{chen2016combinatorial} and C-KL-UCB algorithm based on KL-UCB of~\cite{Garivier2011The} on both matroid and non-matroid CMAB problem instances. We also compare CTS with ESCB policy given in \cite{Combes2015}, and the results show that CTS still behaves better. 


In summary, our contributions include that:
	(a) we provide a novel analysis of Thompson sampling for CMAB problems, and reduce one $\log K_{\max}$ factor in the regret upper bounds of CMAB with independence assumption in prior works \cite{degenne2016combinatorial,perrault2020statistical}; 
		(b) we show that the independence assumption can be dropped under matroid bandit problems, and obtain a regret upper bound that matches with the regret lower bound in this case; 
	(c) we demonstrate through experiments that Thompson sampling has better performance than UCB-based CMAB policies; and
	(d) we show the difficulty of incorporating approximation oracle and the inevitability of 
		the exponential constant term in the Thompson sampling approach for CMAB.

This paper is an extension of our conference paper~\citep{WC18}.
Comparing to our conference version, we improve the regret upper bound for using CTS in CMAB with independence assumption. 
As already discussed above, in this case we achieve a better regret upper bound than in \citep{Combes2015,degenne2016combinatorial,perrault2020statistical}, and our solution is an efficient algorithm while
	the solution in \citep{Combes2015,degenne2016combinatorial} requires an exponential-time computation.
We also correct a mistake in the proof of Theorem \ref{Proposition1}, i.e., Thompson Sampling policy suffers from a $\Theta(T)$ approximation regret when applying an approximation oracle to search for the pulled super arm. In our conference paper~\citep{WC18}, we ignored the case that pulling an arm can cause negative approximation regret. This makes the analysis unreliable. In this paper, we provided the corrected proof in Section \ref{Section_Proof_2}, which adapted the same idea as our conference paper, but with a more meticulous analysis.

\subsection{Related Work}

We have already mentioned a number of related works on the general context of multi-armed bandit and Thompson sampling.
In this section, we focus on the most relevant
	studies related to CMAB.

Our study follows the general CMAB framework of \cite{chen2016combinatorial}, which provides a UCB-style algorithm CUCB and show
	a $O(\sum_{i\in [m]} K_{\max}\log T / \Delta_{i,\min})$ regret bound.
CUCB policy and CTS policy both use an offline oracle and assume a Lipschitz continuity condition.
Their differences include: (a) CTS does not need the monotonicity property but CUCB requires that to compute the upper confidence bounds; 
	(b) CUCB allows an approximation oracle (and uses approximation regret), but CTS requires an exact oracle; 
	(c) CTS requires the independence assumption, and also achieves a better regret upper bound (note that even under the independence assumption, the CUCB policy still has $O(\sum_{i\in [m]} K_{\max}\log T / \Delta_{i,\min})$ regret upper bound).
Combes et. al. \cite{Combes2015} propose ESCB algorithm to solve CMAB with linear reward functions and independent arms, and prove that ESCB has the regret bound
	$O(\sum_{i\in [m]} \sqrt{K_{\max}}\log T / \Delta_{i,\min}) $, which is better than the CUCB policy.
	After that, Degenne et. al. \cite{degenne2016combinatorial} obtain an $O(\sum_{i\in [m]}( \log{K_{\max}})^2\log T / \Delta_{i,\min}) $ regret upper bound for ESCB algorithm, which is smaller than \citep{Combes2015}.
Under the same setting, we provide a better regret bound $O(\sum_{i\in [m]}\log{K_{\max}}\log T / \Delta_{i,\min}) $ for CTS, 
and our analysis can help ESCB algorithm to achieve the same regret upper bound.
CTS is better than ESCB in that ESCB requires exponential time in computation, but CTS only requires polynomial-time computation
	as long as the offline oracle is efficient.
Thus, CTS is the only known CMAB algorithm that achieves $O(\sum_{i\in [m]} \log K_{\max}\log T / \Delta_{i,\min}) $ regret bound with
	an efficient computation.

Matroid bandit is defined and studied by Kveton et. al. \cite{KWAEE14},
	who provide a UCB-based algorithm with regret bound almost exactly matches CTS algorithm.
They also prove a matching lower bound using a partition matroid bandit.

Thompson sampling has also been applied to settings with combinatorial actions.
Gopalan et. al. \cite{GMM14} study a general action space with a general feedback model,
	and provide analytical regret bounds for the exact TS policy.
However, their general model cannot be applied to our case.
In particular, they assume that the arm outcome distribution is from a known parametric family
	and the prior distribution on the parameter space is finitely supported.
We instead work on arbitrary nonparametric and unknown arm
	distributions with bounded support, and even if we work on a parametric family, we allow
		the support of prior distributions to be infinite or continuous.
The reason is that in our CMAB setting, we only need to learn the means of base arms (same as in~\cite{chen2016combinatorial}).
Moreover, their regret bounds are high probability bounds, not expected regret bounds, and
	their bounds contain a potentially very large constant, which will turn into
	a non-constant term when we convert them to expected regret bounds.

\cite{komiyama2015optimal} consider the TS-based policy for the top-$k$ CMAB problem,
	a special case of matroid bandits where the super arms are subsets of size at most $k$.
Thus we generalize top-$k$ bandits to matroid bandits, and our regret bound for matroid bandit still matches the one in \cite{komiyama2015optimal}.

\cite{perrault2020statistical} follows our conference paper \cite{WC18} and shows that CTS achieves a regret upper bound of $O(\sum_{i\in [m]}( \log{K_{\max}})^2\log T / \Delta_{i,\min})$ when the outcomes of all the base arms are independent (i.e., the same as the regret upper bound of ESCB in \cite{degenne2016combinatorial}). It is also worth to note that they do not require the reward function to be linear, while the analysis for ESCB in \cite{Combes2015,degenne2016combinatorial} needs such assumption. Our novel analysis in this manuscript adapts some of ideas in \cite{perrault2020statistical}, which further reduces one $\log K_{\max}$ term in their regret upper bounds, and does not require the linear reward function assumption as well. 

Wen et. al. \cite{wen2015efficient} analyze the regret of using TS policy for contextual CMAB problems.
The key difference between their work and ours is that they use the
	Bayesian regret metric.
Bayesian regret takes another expectation on the prior distribution of parameters,
	while our regret bound works for {\em any} given parameter.
This leads to very different analytical method, and they cannot provide distribution-dependent regret bounds.
Russo et. al. \cite{russo2016information} also use Bayesian regret to analyze the regret bounds
	of TS policy for MAB problems.
Again, due to the use of Bayesian regret, their analytical method is very different and cannot be used
	for our purpose.

%
The algorithms and analysis in these settings are also very different with ours, since the obtained information in these settings are not the same.


\section{Model and Definitions}

In this section, we introduce our model setting and some definitions.

\subsection{CMAB Problem Formulation}
A CMAB problem instance is modeled as a tuple $([m],\mathcal{I},D,R,Q)$.
	$[m] = \{1,2,\cdots,m\}$ is the set of base arms; $\mathcal{I} \subseteq 2^{[m]}$ is the set of super arms;
	 $D$ is a probability distribution in $[0,1]^m$, and is unknown to the player, $R$ and $Q$ are reward and
	 feedback functions to be specified shortly. Let $\bm{\mu} = (\mu_1,\cdots,\mu_m)$, where $\mu_i = \E_{\bm{X}\sim D}[X_i]$.
At discrete time slot $t \ge 1$, the player pulls a super arm $S(t) \in \mathcal{I}$, and the environment
	draws a random outcome vector $\bm{X}(t) = \{X_1(t),\cdots,X_m(t)\} \in [0,1]^m$ from $D$, independent of any other random variables.
Then the player receives an unknown reward $R(t) = R(S(t),\bm{X}(t))$, and observes
	the feedback $Q(t) = Q(S(t),\bm{X}(t))$.
As in \cite{chen2016combinatorial} and other papers studying CMAB, we consider semi-bandit feedback, that is,
	$Q(t) = \{(i,X_i(t))\mid i \in S(t)\}$.
At time $t$, the historical information is $\mathcal{F}_{t-1} = \{(S(\tau),Q(\tau)): 1 \le \tau \le t-1\}$,
	which is the input to the learning algorithm to select the action $S(t)$.
Similar to \cite{chen2016combinatorial}, we make the following two assumptions.
For a parameter vector $\bm{\mu}$, we use $\bm{\mu}_S$ to denote the projection of $\bm{\mu}$ on $S$, where $S\subseteq [m]$ is a subset of the base arms.


\begin{assumption}\label{Assumption2}
	The expected reward of a super arm $S\in \mathcal{I}$ only depends on the mean outcomes of base arms in $S$.
	That is, there exists a function $r$ such that $\E[R(t)] = \E_{\bm{X}(t) \sim D}[R(S(t),\bm{X}(t))] = r(S(t), \bm{\mu}_{S(t)})$.
\end{assumption}


The second assumption is a Lipschitz-continuity assumption of function $r$ to deal with non-linear reward functions (it is based on one-norm).
For a vector $\bm{x}=(x_1, \ldots, x_m)$, we use $||\bm{x}||_1 = \sum_{i\in [m]} |x_i|$ to denote the one-norm of $\bm{x}$.
\begin{assumption}\label{Assumption_Continouos1}
There exists a constant $B$, such that for every super arm $S$ and every pair of mean vectors $\bm{\mu}$ and $\bm{\mu'}$,
$ |r(S,\bm{\mu}) - r(S,\bm{\mu'})| \le B||\bm{\mu}_S-\bm{\mu'}_S||_1$.
\end{assumption}

%
The goal of the player is to minimize the total (expected) regret under time horizon $T$, as defined below:
\begin{equation*}
  Reg(T) \triangleq \E\left[\sum_{t=1}^T ( r(S^*,\bm{\mu}) - r(S(t),\bm{\mu}))\right],
\end{equation*}
%
%
%
where $S^* \in \argmax_{S \in \mathcal{I}} r(S,\bm{\mu})$ is a best super arm.

\subsection{Matroid Bandit}

In matroid bandit setting, $([m], \mathcal{I})$ is a matroid, which means that $\mathcal{I}\subseteq 2^{[m]}$ has two properties:
\begin{itemize}
  \item If $A \in \mathcal{I}$, then $\forall A' \subseteq A$, $A' \in \mathcal{I}$;
  \item If $A_1,A_2\in\mathcal{I}$, $|A_1| > |A_2|$, then there exists $i \in A_1\setminus A_2$ such that $A_2 \cup \{i\} \in \mathcal{I}$.
\end{itemize}
The reward function in matroid bandit is $R(S,\bm{x}) = \sum_{i\in S} x_i$, and thus the expected reward function is 
	$r(S,\bm{\mu}) = \sum_{i\in S} \mu_i$.

\section{Combinatorial Thompson Sampling} \label{sec:CTS}

We first consider the general CMAB setting.
For this setting, we assume that the player has an exact
	oracle ${\sf Oracle}(\bm{\theta})$ that takes a vector of parameters $\bm{\theta} = (\theta_1,\ldots,\theta_m)$ as input, and output a super arm
	$S = \argmax_{S \in \mathcal{I}} r(S,\bm{\theta})$.

The combinatorial Thompson sampling (CTS) algorithm is described in Algorithm \ref{Algorithm_TS}.
Initially we set the prior distribution of the means of all base arms as the Beta distribution $\beta(1,1)$,
	which is the uniform distribution on $[0,1]$.
After we get observation $Q(t)$, we update the prior distributions of all base arms in $S(t)$ using procedure
	{\sf Update} (Algorithm \ref{UP}):
	for each observation $X_i(t)$, we generate a Bernoulli random variable $Y_i(t)$ (the value of $Y_i$
	at time $t$) independently with mean $X_i(t)$, and then we update the prior Beta distribution of base arm $i$ using $Y_i(t)$ as the new observation.
It is easy to see that $\{Y_i(t)\}_t$'s
	are i.i.d.\ with the same mean $\mu_i$ as the samples $\{X_i(t)\}_t$'s., therefore we can use concentration inequalities (e.g., Chernoff-Hoeffding inequality) to analyze the empirical means of $\{Y_i(t)\}_t$'s.
Let $a_i(t)$ and $b_i(t)$ denote the values of $a_i$ and $b_i$ at the beginning of time step $t$ (here $a_i(t) - 1$ and $b_i(t) - 1$ represent the number of 1s and 0s in $\{Y_i(\tau)\}_{i \in S(\tau), \tau < t}$, respectively).
Then, following the Bayes' rule, the posterior distribution of parameter $\mu_i$ after observation $Q(t)$
	is $\beta(a_i(t) + Y_i(t), b_i(t) + 1- Y_i(t))$, which is what the {\sf Update} procedure does for
	updating $a_i$ and $b_i$.
When choosing a super arm, we simply draw independent samples from all base arms' prior distributions, i.e. $\theta_i(t) \sim \beta(a_i(t),b_i(t))$, and then send the sample vector $\bm{\theta}(t) = (\theta_1(t), \ldots, \theta_m(t))$
	to the oracle.
We use the output from the oracle $S(t)$ as the super arm to play.

We also need a further assumption to tackle the problem:
%
%
\begin{assumption}\label{Assumption_IND}
  $D = D_1 \times D_2 \times \cdots \times D_m$, i.e., the outcomes of all base arms are mutually independent.
\end{assumption}

This assumption is not necessary in the UCB-based CUCB algorithm~\citep{chen2016combinatorial}. However, when using the TS method, this assumption is needed. This is because that we are using the Bayes' rule, thus we need the exact likelihood function (as we can see in \cite{GMM14}). Only when the distributions for all the base arms are independent, we can use the {\sf Update} procedure (Algorithm \ref{UP}) to update their mean vector's prior distribution. When the distributions are correlated, the overall likelihood function does not equal to the production of likelihood functions of all the base arms, and this makes the update procedure too complicated to implement or analyze.

\begin{algorithm}[t]
    \centering
    \caption{Combinatorial Thompson Sampling (CTS) Algorithm for CMAB}\label{Algorithm_TS}
    \begin{algorithmic}[1]
    \STATE For each arm $i$, let $a_i = b_i = 1$
    \FOR {$t=1,2,\cdots$}
    \STATE For all arm $i$, draw a sample $\theta_i(t)$ from Beta distribution $\beta(a_i,b_i)$;
	    let $\bm{\theta}(t) = (\theta_1(t), \ldots, \theta_m(t))$
    \STATE Play action $S(t) = {\sf Oracle}(\bm{\theta}(t))$, get the observation $Q(t) = \{(i, X_i(t)): i\in S(t)\}$
    \STATE {\sf Update}$(\{(a_i,b_i) \mid i\in S(t)\}, Q(t))$
    \ENDFOR
    \end{algorithmic}
\end{algorithm}

\begin{algorithm}[t]
    \centering
    \caption{Procedure {\sf Update}}\label{UP}
    \begin{algorithmic}[1]
    \STATE \textbf{Input:} $\{(a_i,b_i)\mid  i\in S\}$, $Q = \{(i, X_i)\mid  i\in S\}$
    \STATE \textbf{Output:} updated $\{(a_i,b_i)\mid  i\in S\}$
    \FORALL{$(i, X_i) \in Q$}
    \STATE $Y_i \gets 1$ with probability $X_i$, $0$ with probability $1-X_i$
    \STATE $a_i \gets a_i+ Y_i$; $b_i \gets b_i+1 - Y_i$
    \ENDFOR
    \end{algorithmic}
\end{algorithm}

\subsection{Regret Upper Bound of CTS}
Let ${\sf OPT} = \argmax_{S \in \mathcal{I}} r(S,\bm{\mu})$ be the set of optimal super arms, and $S^* \in \argmin_{S\in {\sf OPT}} |S|$ is one of the optimal super arm with minimum size $k^*$. Then we can define $\Delta_{S} = r(S^*,\bm{\mu}) -  r(S,\bm{\mu})$ for all sub-optimal super arms $S$, and $\Delta_{\max} = \max_{S \in \mathcal{I}} \Delta_{S}, \Delta_{\min} = \min_{S \in \mathcal{I}} \Delta_{S}$. $K_{\max}$ is the maximum size of super arms, i.e. $K_{\max} = \max_{S\in \mathcal{I}} |S|$.

\begin{restatable}{theorem}{TheoremThree}\label{theorem_1}
  Under Assumptions~\ref{Assumption2}, \ref{Assumption_Continouos1}, and \ref{Assumption_IND}, for all $D$, Algorithm \ref{Algorithm_TS} has regret upper bound
  \begin{equation}\label{E1-6}
\sum_i {\left(2\log{K_{max}} +6 \right)B^2\log (2^m|\mathcal{I}|T)\over \min_{S: i\in S}(\Delta_{S} -({k^*}^2+2)B\varepsilon)} + \alpha_1 {8\Delta_{\max}\over \varepsilon^2}({4\over \varepsilon^2} + 1)^{k^*}\log{k^*\over \varepsilon^2} + (3m + {mK_{\max}^2\over \epsilon^2})\Delta_{\max}.
  \end{equation}
for any $\varepsilon$ such that $\forall S$ with $\Delta_S > 0$, $\Delta_{S} > 2B({k^*}^2+2)\varepsilon$, where $B$ is the Lipschitz
	constant in Assumption~\ref{Assumption_Continouos1}, and $\alpha_1$ is a constant
	not dependent on the problem instance. 

\end{restatable}

When $\varepsilon$ is sufficiently small, the leading $\log T$ term in the regret bound is 
\begin{eqnarray*}
&&\sum_i {\left(2\log{K_{max}} +6 \right)B^2\log (2^m|\mathcal{I}|T)\over \min_{S: i\in S}(\Delta_{S} -({k^*}^2+2)B\varepsilon)} \\
&=& \sum_{i=1}^m {\left(2\log{K_{max}} +6 \right)B^2\log T\over \min_{S: i\in S}(\Delta_{S} -({k^*}^2+2)B\varepsilon)} + O\left({m\left(2\log{K_{max}} +6 \right)B^2\log (2^m|\mathcal{I}|)\over \min_{S: i\in S}(\Delta_{S} -({k^*}^2+2)B\varepsilon)}\right).
\end{eqnarray*}
Since $|\mathcal{I}|$ is the number of super arms, which is upper bounded by $2^m$, we know the second term is always upper bounded by $O\left({B^2m^2\log K_{\max}\over \Delta_{\min} - (k^*+3)\varepsilon}\right)$ and does not depend on $T$. Thus, the $\log T$ term in Eq. \eqref{E1-6} is $O(m\log  K_{\max}\log T / \Delta_{\min})$, which is $\log K_{\max}$ better than the regret upper bound of ESCB in 
\citep{degenne2016combinatorial} and the regret upper bound of CTS in \cite{perrault2020statistical}. 
In fact, our novel theoretical analysis method can be used to reduce the regret upper bound of ESCB to $O(m\log  K_{\max}\log T / \Delta_{\min})$ as well.
Besides, in the ESCB policy, the player must compute the upper confidence bound for every super arm independently, which leads to an exponential time cost. In our CTS policy, we can use an efficient oracle to avoid the exponential cost. For example, if we want to find a path with minimum weight in a graph, CTS can use the Dijkstra's algorithm to find out the pulling path in each time slot (given the set of parameters $\bm{\theta}(t) = [\theta_1(t),\cdots,\theta_m(t)]$), but ESCB needs to list all the paths, and compute the lower confidence bounds for all of them. Thus, CTS can be more efficient, which makes it more attractive in applications. 

The term related to $\varepsilon$ in Eq. \eqref{E1-6} is to handle continuous Beta prior --- since we will never be able to sample a $\theta_i(t)$
	to be exactly the true value $\mu_i$, we need to consider the $\varepsilon$ neighborhood of $\mu_i$. 
This $\varepsilon$ term is common in most Thompson sampling analysis.
The exponential dependency on $k^*$ is because we need all the $k^*$ base arms in the best super arm $S^*$ to have 
	samples close to their means to make sure that it is the best super arm in sampling.
In contrast, for the top-$k$ MAB of \cite{komiyama2015optimal}, there is no such
	exponential dependency, because they only compare one base arm at a time (this can also be seen in our matroid bandit analysis).
When dealing with the general actions, the regret result of \cite{GMM14} also contains an exponentially
	large constant term without a close form, which is likely to be much larger than ours.
In Section~\ref{sec:exponentialconstant}, we show that this exponential constant
	is unavoidable for the general CTS.
	
\subsubsection{Proof of Theorem \ref{theorem_1}}\label{Section_Proof_T1}

We now provide the main proof of Theorem \ref{theorem_1}. Chernoff-Hoeffding inequality is very useful in the analysis, and thus we state it at the beginning.

\begin{fact}[Chernoff-Hoeffding inequalities \citep{chernoff1952measure,hoeffding1963probability}] \label{Fact_Chernoff}
When $X_1, X_2,\cdots,X_n$ are identical independent random variables such that $X_i \in [0,1]$ and $\E[X_i] = \mu$, we have the following inequalities:
\begin{equation*}
\Pr\left[{1\over n} \sum_{i=1}^n X_i \ge \mu + \epsilon \right] \le \exp(-2\epsilon^2 n),
\end{equation*}
 \begin{equation*}
\Pr\left[{1\over n} \sum_{i=1}^n X_i \le \mu - \epsilon \right] \le \exp(-2\epsilon^2 n).
\end{equation*}

\end{fact}

In this paper, we use $a_i(t)$ and $b_i(t)$ to denote the value of $a_i$ and $b_i$ at the beginning of time $t$. $\hat{\mu}_i(t) = {a_i(t)-1 \over N_i(t)} = {1\over N_i(t)}\sum_{\tau: \tau < t, i\in S(\tau)} Y_i(t)$ is the empirical mean of arm $i$ at the beginning of time $t$, where $N_i(t) = a_i(t)+ b_i(t)-2$ is the number of observations of arm $i$ at the beginning of time $t$.
 Notice that for fixed arm $i$, in different time $t$ with $i\in S(t)$, $X_i(t)$'s are i.i.d. with mean $\mu_i$, and $Y_i(t)$ is a Bernoulli random variable with mean $X_i(t)$, thus the Bernoulli random variables $Y_i(t)$'s are also i.i.d. with mean $\mu_i$. Based on $\hat{\mu}_i(t)$, we can define the following five events:

\begin{itemize}
	\item $\mathcal{A}(t) = \{S(t)\notin {\sf OPT}\}$;
	\item $\mathcal{B}(t) = \{\exists i \in S(t), |\hat{\mu}_i(t) - \mu_i| > {\varepsilon\over |S(t)|}\}$;
	\item $\mathcal{C}(t) = \{||\bm{\theta}_{S(t)}(t) - \bm{\mu}_{S(t)}||_1 > {\Delta_{S(t)}\over B} - ({k^*}^2+1)\varepsilon\}$;
	\item $\mathcal{G}(t) = \{\sum_{i\in S(t)} |\theta_{i}(t) -\hat{\mu}_i(t)| > {\Delta_{S(t)}\over B} - ({k^*}^2+2)\varepsilon\}$;
	\item $\mathcal{H}(t) = \{\sum_{i\in S(t)} {1\over N_i(t)} \le {2({\Delta_{S(t)}\over B}- ({k^*}^2+2) \varepsilon)^2 \over \log (2^m|\mathcal{I}|T)}\}$.
	
  \end{itemize}


Then the total regret can be written as:
\begin{eqnarray*}
\sum_{t=1}^T\E\left[\I[\mathcal{A}(t)] \times \Delta_{S(t)}\right]
&\le& \sum_{t=1}^T\E\left[\I[\mathcal{B}(t) \land \mathcal{A}(t)] \times \Delta_{S(t)}\right] \\
&& + \sum_{t=1}^T\E\left[\I[\neg \mathcal{B}(t) \land \mathcal{C}(t) \land \mathcal{A}(t)] \times \Delta_{S(t)}\right] \\
&& + \sum_{t=1}^T\E\left[\I[\neg \mathcal{C}(t)\land \mathcal{A}(t)] \times \Delta_{S(t)}\right].
\end{eqnarray*}

\noindent\textbf{The first term:}

We can use the following lemma to bound the first term. 
\begin{lemma}\label{Lemma_M1}
In Algorithm \ref{Algorithm_TS}, we have
\begin{equation*}
  \E[|t:1\le t \le T, i\in S(t), |\hat{\mu}_i(t) - \mu_i| > \varepsilon|] \le 1+{1\over \varepsilon^2}
\end{equation*}
for any $1 \le i \le m$.
\end{lemma}
\begin{proof} Let $\tau_1,\tau_2,\cdots$ be the time slots such that $i\in S(t)$ and define $\tau_0 = 0$, then 
  \begin{eqnarray}
    \nonumber&&\E[|t: 1\le t \le T; i\in S(t),|\hat{\mu}_i(t) - \mu_i| > \varepsilon|]\\ 
    \nonumber=&& \E\left[\sum_{t=1}^T \I[i\in S(t), |\hat{\mu}_i(t) - \mu_i| > \varepsilon]\right] \\
    \nonumber\le&& \E\left[\sum_{w=0}^T \E\left[\sum_{t=\tau_w}^{\tau_{w+1} - 1} \I[i\in S(t), |\hat{\mu}_i(t) - \mu_i| > \varepsilon]\right] \right]\\
    \nonumber\le&& \E\left[\sum_{w=0}^T \Pr[|\hat{\mu}_i(t) - \mu_i| > \varepsilon, N_i = w] \right]\\
    \nonumber\le&& 1+\sum_{w=1}^T \Pr[|\hat{\mu}_i(t) - \mu_i| > \varepsilon, N_i = w]  \\
    \label{Eq_8901}\le&& 1+\sum_{w=1}^T \exp(-2w \varepsilon^2) + \sum_{w=1}^T \exp(-2w \varepsilon^2)\\
    \nonumber\le&& 1+ 2\sum_{w=1}^{\infty} (\exp(-2\varepsilon^2))^w\\
    \nonumber\le&& 1+2\cdot {\exp(-2\varepsilon^2) \over 1 - \exp(-2\varepsilon^2)}  \\
   \nonumber\le&& 1+{2\over 2\varepsilon^2}\\
   \nonumber=&& 1+{1\over \varepsilon^2},
  \end{eqnarray}
where Eq. \eqref{Eq_8901} is because of the Chernoff-Hoeffding inequality (Fact \ref{Fact_Chernoff}). 
\end{proof}

By Lemma \ref{Lemma_M1}, we know that the first term is upper bounded by $ \left({mK_{\max}^2\over \varepsilon^2}+m\right)\Delta_{\max}$. 

\noindent\textbf{The second term:}

Now we come to the second term. For this regret term, we improve the upper bound from $O(m(\log K_{\max})^2\log T/\Delta_{\min})$ (in prior works \cite{degenne2016combinatorial,perrault2020statistical}) to $O(m\log K_{\max}\log T/\Delta_{\min})$, and \emph{this is one of the main contributions of this manuscript}.  

Notice that under $\neg \mathcal{B}(t) \land \mathcal{C}(t) $, we must have that $\sum_{i\in S(t)} |\theta_{i}(t) -\hat{\mu}_i(t)| > {\Delta_{S(t)}\over B} - ({k^*}^2+2)\varepsilon$, i.e., event $\mathcal{G}(t)$ must happen.

Then the second term can be bounded by 
\begin{eqnarray*}
\sum_{t=1}^T\E\left[\I[\neg \mathcal{B}(t) \land \mathcal{C}(t) \land\mathcal{A}(t)] \times \Delta_{S(t)}\right]&\le& \sum_{t=1}^T\E\left[\I[\mathcal{G}(t) \land \mathcal{H}(t) \land\mathcal{A}(t)] \times \Delta_{S(t)}\right] \\
&& + \sum_{t=1}^T\E\left[\I[\mathcal{G}(t) \land \neg \mathcal{H}(t) \land\mathcal{A}(t)] \times \Delta_{S(t)}\right]. 
\end{eqnarray*}

The following fact in \cite{perrault2020statistical} shows that $\sum_{t=1}^T\E\left[\I[\mathcal{G}(t) \land \mathcal{H}(t) \land\mathcal{A}(t)] \times \Delta_{S(t)}\right]$ is upper bounded by $\Delta_{\max}$.

\begin{fact}[Lemma 2 in \cite{perrault2020statistical}]\label{Fact_Subgaussian}
For any $t>0$, we have that
\begin{equation*}
\Pr[\mathcal{G}(t) \land \mathcal{H}(t)] \le {1\over T}.
\end{equation*}
\end{fact}

Now we come to bound the regret term $\sum_{t=1}^T \E[\I[\mathcal{G}(t) \land \neg \mathcal{H}(t) \land \mathcal{A}(t)]\times \Delta_{S(t)}]$. Here we use a regret allocation method to count this regret term. That is, for any time step $t$ such that $\mathcal{G}(t) \land \neg \mathcal{H}(t) \land \mathcal{A}(t)$ happens, we allocate regret $g_i(N_i(t))$ to each base arm $i\in S(t)$. We say the allocation functions $g_i$'s are correct if the sum of allocated regrets in this step is larger than $\Delta_{S(t)}$, i.e., $\sum_{i\in S(t)} g_i(N_i(t)) \ge \Delta_{S(t)}$. Note that under event $\mathcal{G}(t) \land \neg \mathcal{H}(t) \land \mathcal{A}(t)$, for all the base arms $i\in S(t)$, $N_i(t+1) = N_i(t) + 1$. Therefore, for each base arm $i$, the allocated regret $g_i(n)$ appears at most once for any $n\ge 0$. If we have correct allocation functions $g_i$'s, then the regret term $\sum_{t=1}^T \E[\I[\mathcal{G}(t) \land \neg \mathcal{H}(t) \land \mathcal{A}(t)]\times \Delta_{S(t)}]$ can be upper bounded by $\sum_i \sum_{n=0}^\infty g_i(n)$.

Then we describe our allocation functions $g_i$'s. Here we define $L_{i,1} =  {K_{\max}\log (2^m|\mathcal{I}|T)\over \min_{S: i\in S} ({\Delta_{S}\over B} -({k^*}^2+2)\varepsilon)^2}$ and $L_{i,2} = {\log (2^m|\mathcal{I}|T)\over \min_{S: i\in S} ({\Delta_{S}\over B} -({k^*}^2+2)\varepsilon)^2}$, and $g_i(n)$ is defined as follows:
\begin{equation*}
  g_i(n) =  \begin{cases}
    \Delta_{\max}, & n = 0\\
   2B\sqrt{\log (2^m|\mathcal{I}|T) \over n},  & 0 < n \le L_{i,2} \\
{2B\log (2^m|\mathcal{I}|T)\over n\min_{S: i\in S}({\Delta_{S}\over B} -({k^*}^2+2)\varepsilon)}, &  L_{i,2} < n \le L_{i,1}\\
0, & n > L_{i,1}
  \end{cases}
\end{equation*}

Now we prove that these allocation functions $g_i$'s satisfy the correctness condition when $\varepsilon \le {\Delta_{\min}\over 2B({k^*}^2+2)}$, i.e., if event $\mathcal{G}(t) \land \neg \mathcal{H}(t) \land \mathcal{A}(t)$ happens, then $\sum_{i\in S(t)} g_i(N_i(t)) \ge \Delta_{S(t)}$.

If there exists $i\in S(t)$ such that $N_i(t) = 0$, then $g_i(N_i(t)) = \Delta_{\max} \ge \Delta_{S(t)}$. Since $g_i(n)$ is always non-negative, we know that $\sum_{i\in S(t)} g_i(N_i(t)) \ge \Delta_{S(t)}$.

If there exists $i\in S(t)$ such that $1\le N_i(t) \le {\log (2^m|\mathcal{I}|T) \over  ({\Delta_{S(t)}\over B} -({k^*}^2+2)\varepsilon)^2}$, then $N_i(t) \le L_{i,2}$ and therefore 
\begin{eqnarray*}
g_i(N_i(t)) &=& 2B\sqrt{\log (2^m|\mathcal{I}|T) \over N_i(t)} \\
&\ge& 2B\sqrt{\log (2^m|\mathcal{I}|T) \over {\log (2^m|\mathcal{I}|T) \over  ({\Delta_{S(t)}\over B} -({k^*}^2+2)\varepsilon)^2}} \\
&=& 2B \left({\Delta_{S(t)}\over B} -({k^*}^2+2)\varepsilon\right)\\
&\ge& \Delta_{S(t)},
\end{eqnarray*}
where the last inequality is because that $\varepsilon \le {\Delta_{\min}\over 2B({k^*}^2+2)}$. From the above inequalities, we know that $\sum_{i\in S(t)} g_i(N_i(t)) \ge \Delta_{S(t)}$.

If for all $i\in S(t)$, $N_i(t) > {\log (2^m|\mathcal{I}|T) \over  ({\Delta_{S(t)}\over B} -({k^*}^2+2)\varepsilon)^2}$, then we use $S_1(t)$ to denote the set of arms $i\in S(t)$ such that $N_i(t) > L_{i,1}$, $S_2(t)$ to denote the set of arms $i\in S(t)$ such that $L_{i,2} < N_i(t) \le L_{i,1}$, $S_3(t)$ to denote the set of arms $i\in S(t)$ such that $N_i(t) \le L_{i,2}$, by definition of allocation functions $g_i$'s, we have that
\begin{eqnarray}
\nonumber&&\sum_{i\in S(t)} g_i(N_i(t))\\
\nonumber&=&\sum_{i\in S_3(t)} 2B\sqrt{\log (2^m|\mathcal{I}|T) \over N_i(t)} + \sum_{i\in S_2(t)} {2B\log (2^m|\mathcal{I}|T)\over N_i(t)\min_{S: i\in S}({\Delta_{S}\over B} -({k^*}^2+2)\varepsilon)}\\ 
\nonumber&\ge& \sum_{i\in S_3(t)} 2B\sqrt{\log (2^m|\mathcal{I}|T) \over N_i(t)} + \sum_{i\in S_2(t)} {2B\log (2^m|\mathcal{I}|T)\over N_i(t)({\Delta_{S(t)}\over B} -({k^*}^2+2)\varepsilon)}\\
\nonumber&=& \sum_{i\in S_3(t)} {2B\log (2^m|\mathcal{I}|T)\over N_i(t)({\Delta_{S(t)}\over B} -({k^*}^2+2)\varepsilon)} \cdot \sqrt{ N_i(t)({\Delta_{S(t)}\over B} -({k^*}^2+2)\varepsilon)^2\over \log (2^m|\mathcal{I}|T)} \\
\nonumber&&+ \sum_{i\in S_2(t)} {2B\log (2^m|\mathcal{I}|T)\over N_i(t)({\Delta_{S(t)}\over B} -({k^*}^2+2)\varepsilon)}\\
\label{Eq---1221}&\ge& \sum_{i\in S_3(t)}  {2B\log (2^m|\mathcal{I}|T)\over N_i(t)({\Delta_{S(t)}\over B} -({k^*}^2+2)\varepsilon)} + \sum_{i\in S_2(t)} {2B\log (2^m|\mathcal{I}|T)\over N_i(t)({\Delta_{S(t)}\over B} -({k^*}^2+2)\varepsilon)}\\
\nonumber&=& \sum_{i\in S(t) \setminus S_1(t)}  {2B\log (2^m|\mathcal{I}|T)\over N_i(t)({\Delta_{S(t)}\over B} -({k^*}^2+2)\varepsilon)} \\
\nonumber&=& {2B\log (2^m|\mathcal{I}|T)\over ({\Delta_{S(t)}\over B} -({k^*}^2+2)\varepsilon)} \left(\sum_{i\in S(t)}  {1\over N_i(t)} - \sum_{i\in S_1(t)}  {1\over N_i(t)}\right) \\
\label{Eq---1222}&\ge& {2B\log (2^m|\mathcal{I}|T)\over ({\Delta_{S(t)}\over B} -({k^*}^2+2)\varepsilon)} \left( {2({\Delta_{S(t)}\over B} -({k^*}^2+2)\varepsilon)^2 \over \log (2^m|\mathcal{I}|T)}
- \sum_{i\in S_1(t)}  {\min_{S: i\in S} ({\Delta_{S}\over B} -({k^*}^2+2)\varepsilon)^2\over K_{\max}\log (2^m|\mathcal{I}|T)}\right)\\
\nonumber&\ge& {2B\log (2^m|\mathcal{I}|T)\over ({\Delta_{S(t)}\over B} -({k^*}^2+2)\varepsilon)} \left( {2({\Delta_{S(t)}\over B} -({k^*}^2+2)\varepsilon)^2 \over \log (2^m|\mathcal{I}|T)}
- \sum_{i\in S_1(t)}  {({\Delta_{S(t)}\over B} -({k^*}^2+2)\varepsilon)^2\over K_{\max}\log (2^m|\mathcal{I}|T)}\right)\\
\nonumber&\ge& {2B\log (2^m|\mathcal{I}|T)\over ({\Delta_{S(t)}\over B} -({k^*}^2+2)\varepsilon)} \left( {2({\Delta_{S(t)}\over B} -({k^*}^2+2)\varepsilon)^2 \over \log (2^m|\mathcal{I}|T)}
- K_{\max}\cdot {({\Delta_{S(t)}\over B} -({k^*}^2+2)\varepsilon)^2\over K_{\max}\log (2^m|\mathcal{I}|T)}\right)\\
\nonumber&=& {2B\log (2^m|\mathcal{I}|T)\over ({\Delta_{S(t)}\over B} -({k^*}^2+2)\varepsilon)} \cdot {({\Delta_{S(t)}\over B} -({k^*}^2+2)\varepsilon)^2 \over \log (2^m|\mathcal{I}|T)}\\
\nonumber&=& 2B\left({\Delta_{S(t)}\over B} -({k^*}^2+2)\varepsilon\right)^2\\
\nonumber&\ge& \Delta_{S(t)}.
\end{eqnarray}
Here Eq. \eqref{Eq---1221} is because that $N_i(t) > {\log (2^m|\mathcal{I}|T) \over  ({\Delta_{S(t)}\over B} -({k^*}^2+2)\varepsilon)^2}$ (as we assumed in the beginning of this paragraph), Eq. \eqref{Eq---1222} comes from the definition of $\neg \mathcal{H}(t)$ (the first term) and the definition of $S_1(t)$ (the second term). And this finishes the proof that allocation functions $g_i$'s satisfy the correctness condition when $\varepsilon \le {\Delta_{\min}\over 2B({k^*}^2+2)}$.

Because of this, the second regret term is upper bounded by
\begin{align}
\nonumber&\sum_{t=1}^T\E\left[\I[\neg \mathcal{B}(t) \land \mathcal{C}(t) \land \mathcal{A}(t)] \times\Delta_{S(t)}\right] \\
\nonumber &\le (m+1)\Delta_{\max} + \sum_{i} \sum_{n=1}^{L_{i,2}} 2B\sqrt{\log (2^m|\mathcal{I}|T)\over n} + \sum_{i} \sum_{n=L_{i,2}+1}^{L_{i,1}} {1\over n} \cdot {2B\log (2^m|\mathcal{I}|T)\over \min_{S: i\in S} ({\Delta_{S}\over B} -({k^*}^2+2)\varepsilon)}\\
\label{Eq_m101-2}&\le (m+1)\Delta_{\max} + \sum_{i} 4B\sqrt{\log (|\mathcal{I}|T) L_{i,2}} + \sum_{i} \left(1+\log{L_{i,1} \over L_{i,2}}\right)\cdot {2B\log (2^m|\mathcal{I}|T)\over \min_{S: i\in S} ({\Delta_{S}\over B} -({k^*}^2+2)\varepsilon)}.
\end{align} 
Here Eq. \eqref{Eq_m101-2} is because that
 $\sum_{n=1}^N \sqrt{1/n} \le 2\sqrt{N}$ (by a simple inductive proof on $N$)
 and $\sum_{n=N_1}^{N_2} {1\over n} \le 1+\log{N_2 \over N_1}$.
 
The value $\sqrt{\log (2^m|\mathcal{I}|T) L_{i,2}}$ equals to ${\log (2^m|\mathcal{I}|T)\over \min_{S: i\in S}({\Delta_{S}\over B} -({k^*}^2+2)\varepsilon)}$, and $\log{L_{i,1} \over L_{i,2}} = \log{K_{max}}$, therefore, the total regret in the second term is upper bounded by
\begin{equation*}
2m\Delta_{\max} + \sum_i \left(2\log{K_{max}} +6 \right){B^2\log (2^m|\mathcal{I}|T)\over \min_{S: i\in S}(\Delta_{S} -({k^*}^2+2)B\varepsilon)}.
\end{equation*}

\begin{remark}
Note that the above analysis implies that using $g'_i(n) = {2B\log (2^m|\mathcal{I}|T)\over n\min_{S: i\in S}({\Delta_{S}\over B} -({k^*}^2+2)\varepsilon)}$ for any $n \le L_{i,1}$ is sufficient to satisfy the correctness condition. However, these allocation functions are too large when $n$ is small. Therefore we use piecewise functions $g_i$'s to count the cumulative regret instead.
\end{remark}

\begin{remark}
The analysis in \cite{degenne2016combinatorial, perrault2020statistical} has the similar idea with using the following allocation functions $g''_i$'s to count the regret (although their counting method is very different with ours). Define $L''_{i,2} = {B^2\log K_{\max}\log T\over \min_{S: i\in S} \Delta_{S}^2}$, then $g''(n) \approx \min_{S: i\in S} \Delta_{S}$ for $n < L''_{i,2}$ and $g''(n)\approx {B^2\log K_{\max}\log T\over n\min_{S: i\in S} \Delta_{S}}$ for $L''_{i,2} < n \le K_{\max}L''_{i,2}$. We can see that this leads to $O(m(\log K_{\max})^2\log T/\Delta_{\min})$ regret upper bound.
%
Compare to their analysis, ours reduces one $\log K_{\max}$ factor in both $L''_{i,2}$ and $g''_i$ (i.e., our $L_{i,2}$ is $\log K_{\max}$ smaller than $L''_{i,2}$ and our $g_i$ is also $\log K_{\max}$ smaller than $g''_i$), thus it also reduces one $\log K_{\max}$ factor in the regret upper bound.  
\end{remark}

\noindent\textbf{The third term:}

The difficulty in the analysis of the third term mainly lies in the intersections between super arms, and how we deal with it is one of the main contributions in our conference paper.  Therefore, the proofs of Lemmas \ref{Lemma45} and \ref{Lemma3} (which are stated below) are referred to \ref{sec:proofTheorem1}.

Although the $\theta_i(t)$'s of all base arms are mutually independent,
	when it comes to super arms, the value $r(S,\bm{\theta}(t)_S)$'s for different super arms $S$
	are not mutually independent, because super arms may overlap one another.
For example, Lemma 1 in \cite{agrawal2013further} is not true for considering super arms because of the lack of independence.
This means that we cannot simply use the technique of \cite{agrawal2013further}.
In this paper, we deal with it in the following way.

Let $\bm{\theta} = (\theta_1,\cdots,\theta_m)$ be a vector of parameters, $Z \subseteq [m]$ and $Z \ne \emptyset$ be some base arm set and $Z^c$ be the complement of $Z$.
Recall that $\bm{\theta}_Z$ is the sub-vector of $\bm{\theta}$ projected onto $Z$, and
	we use notation $(\bm{\theta'}_Z, \bm{\theta}_{Z^c})$ to denote replacing $\theta_i$'s with $\theta'_i$'s for $i \in Z$ and keeping
	the values $\theta_i$ for $i\in Z^c$ unchanged.

Given a subset $Z \subseteq S^*$, we consider the following property for $\bm{\theta}_{Z^c}$. For any $\bm{\theta'}_Z$ such that $||\bm{\theta'}_Z - \bm{\mu}_Z||_\infty \le \varepsilon$, let $\bm{\theta'} = (\bm{\theta'}_Z, \bm{\theta}_{Z^c})$, then:
\begin{itemize}
  \item $Z \subseteq {\sf Oracle}(\bm{\theta'})$;
  \item Either ${\sf Oracle}(\bm{\theta'}) \in {\sf OPT}$ or $||\bm{\theta}_{{\sf Oracle}(\bm{\theta'})}' - \bm{\mu}_{{\sf Oracle}(\bm{\theta'})}||_1 > {\Delta_{{\sf Oracle}(\bm{\theta'})}\over B} - ({k^*}^2+1)\varepsilon $.
\end{itemize}

The first one is to make sure that if we have normal samples in $Z$ at time $t$ (i.e., the sample value $\theta_i(t)$ is within
	$\varepsilon$ neighborhood of  $\mu_i$ for all $i\in Z$), then all the arms in $Z$ will be played and observed.
These observations would update the Beta distributions of these base arms to be more accurate, such that the probability of the next time that the samples from these base arms are also within
	$\varepsilon$ neighborhood of their true mean value becomes larger.
This fact would be used later in the quantitative regret analysis.
The second one says that if the samples in $Z$ are normal,
	then $\neg \mathcal{C}(t)\land \mathcal{A}(t)$ can not happen (similar to the analysis in \cite{agrawal2013further} and \cite{komiyama2015optimal}). 
We use $\mathcal{E}_{Z,1}(\bm{\theta})$ to denote the event that the vector $\bm{\theta}_{Z^c}$ has such a property,
and emphasize that this event only depends on the values in vector $\bm{\theta}_{Z^c}$. 

What we want to do is to find some exact $Z$ such that $\mathcal{E}_{Z,1}(\bm{\theta}(t))$ happens when $\neg \mathcal{C}(t)\land \mathcal{A}(t)$ happens. 
If such $Z$ exists, then for any $t$ such that $\mathcal{E}_{Z,1}(\bm{\theta}(t))$ happens, there are two possible cases: i) the samples of all the arms $i\in Z$ are normal, which means $\neg \mathcal{C}(t)\land \mathcal{A}(t)$ cannot happen, and will update the posterior distributions of all the arms $i\in Z$ to increase the probability that the samples of all the arms $i\in Z$ are normal; ii) the samples of some arms $i\in Z$ are not normal, and $\neg \mathcal{C}(t)\land \mathcal{A}(t)$ may happen in this case. 
As time going on, the probability that the samples in $Z$ are normal becomes larger and larger, and therefore the probability that $\neg \mathcal{C}(t)\land \mathcal{A}(t)$ happens becomes smaller and smaller. Thus, $\sum_{t=1}^T \E[\I[\neg \mathcal{C}(t)\land \mathcal{A}(t)]]$ has a constant upper bound.

The following lemma shows that such $Z$ must exist, it is the key lemma in the analysis of the third term.

\begin{restatable}{lemma}{KeyLemma}\label{Lemma45}
  Suppose that $\neg \mathcal{C}(t)\land \mathcal{A}(t)$ happens, 
  then there exists $Z \subseteq S^*$ and $Z \ne \emptyset$ such that $\mathcal{E}_{Z,1}(\bm{\theta}(t))$ holds.
\end{restatable}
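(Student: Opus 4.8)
The plan is to prove existence by decoupling the two bullet conditions of $\mathcal{E}_{Z,1}(\bm\theta(t))$: the second bullet is an analytic consequence of the Lipschitz assumption together with the gap structure, whereas the first bullet (the robust inclusion $Z\subseteq{\sf Oracle}(\bm\theta')$) carries the genuine combinatorial content. Throughout I write $\bm\theta=\bm\theta(t)$ and recall that $\neg\mathcal{C}(t)\wedge\mathcal{A}(t)$ says precisely that $S(t)={\sf Oracle}(\bm\theta)$ is \emph{confusing}: $S(t)\notin{\sf OPT}$ and yet $\|\bm\theta_{S(t)}-\bm\mu_{S(t)}\|_1\le \Delta_{S(t)}/B-({k^*}^2+1)\varepsilon$. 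The second bullet is exactly the statement that ${\sf Oracle}(\bm\theta')$ is \emph{not} confusing under $\bm\theta'$, so the lemma reduces to finding a nonempty $Z\subseteq S^*$ that is robustly forced into the oracle output while keeping the output non-confusing.

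First I would dispose of the second bullet uniformly. Fix any nonempty $Z\subseteq S^*$ whose in-$S^*$ complement has small sample error, namely $\|\bm\theta_{S^*\setminus Z}-\bm\mu_{S^*\setminus Z}\|_1+|Z|\varepsilon<({k^*}^2+1)\varepsilon$. For a good perturbation $\|\bm\theta'_Z-\bm\mu_Z\|_\infty\le\varepsilon$ set $\bm\theta'=(\bm\theta'_Z,\bm\theta_{Z^c})$ and $S'={\sf Oracle}(\bm\theta')$, so that $\|\bm\theta'_{S^*}-\bm\mu_{S^*}\|_1\le|Z|\varepsilon+\|\bm\theta_{S^*\setminus Z}-\bm\mu_{S^*\setminus Z}\|_1<({k^*}^2+1)\varepsilon$. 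Assuming toward a contradiction that $S'$ is confusing, I would chain the oracle optimality $r(S',\bm\theta')\ge r(S^*,\bm\theta')$ with the two Lipschitz bounds $r(S^*,\bm\theta')\ge r(S^*,\bm\mu)-B\|\bm\theta'_{S^*}-\bm\mu_{S^*}\|_1$ and $r(S',\bm\mu)\ge r(S',\bm\theta')-B\|\bm\theta'_{S'}-\bm\mu_{S'}\|_1$, then substitute $\Delta_{S'}=r(S^*,\bm\mu)-r(S',\bm\mu)$. All reward terms cancel and one is left with $({k^*}^2+1)\varepsilon\le\|\bm\theta'_{S^*}-\bm\mu_{S^*}\|_1$, contradicting the displayed bound; hence the second bullet holds for free. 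In particular, if $S^*\setminus Z$ contains only arms with $|\theta_i-\mu_i|\le\varepsilon$ then the budget is automatically met, since the total is at most $k^*\varepsilon<({k^*}^2+1)\varepsilon$.

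It therefore remains to exhibit a \emph{nonempty} $Z\subseteq S^*$ that contains every bad arm $i\in S^*$ with $|\theta_i-\mu_i|>\varepsilon$ (so that the second bullet is free) and satisfies $Z\subseteq{\sf Oracle}((\bm\theta'_Z,\bm\theta_{Z^c}))$ for \emph{every} good $\bm\theta'_Z$; this robust-inclusion requirement is the main obstacle. It is exactly where the generalization beyond the size-$k$ setting of \cite{komiyama2015optimal} bites: there each arm is compared in isolation, while here arbitrary constraints $\mathcal{I}$ couple the arms, so no \emph{single} arm of $S^*$ need be forced into the output, and one must instead show that some sub-collection is. My plan is an iterative peeling/exchange argument: start from $Z=S^*$ (for which the second bullet already holds) and, whenever some good perturbation of the current $Z$ yields an oracle output missing an arm of $Z$, invoke exact-oracle optimality together with the fact that the fully-good $S^*$-vector produces a near-optimal (hence non-confusing) output to perform an exchange that shrinks $Z$ while retaining all bad $S^*$-arms; the process must halt at a nonempty forced core because $S^*$ is a minimum-size optimal super arm and the exact oracle cannot exceed $r(S^*,\bm\mu)$ once the $S^*$-coordinates are good. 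Making this exchange step and the nonemptiness of the terminal core precise for a general constraint family $\mathcal{I}$, rather than for a uniform matroid, is the delicate point on which I expect to spend the bulk of the effort.
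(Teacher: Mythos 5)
Your disposal of the \emph{second} bullet is a correct implication, and your overall skeleton (start from $Z=S^*$, peel using exact-oracle optimality and Lipschitz continuity) is the same as the paper's. But the load-bearing invariant of your peeling --- that $Z$ can always be shrunk \emph{while retaining every bad arm of $S^*$}, so that the second bullet stays free --- is unattainable in general, and this is a genuine gap, not a technicality. Concretely, take $m=4$, $\mathcal{I}=\{\{1,2\},\{1,3\},\{4\}\}$, linear reward ($B=1$), $\bm{\mu}=(0.5,0.5,0.45,0.8)$, so $S^*=\{1,2\}$, $k^*=2$, $\Delta_{\{1,3\}}=0.05$, $\Delta_{\{4\}}=0.2$, and $\varepsilon=0.004$ (which satisfies $\Delta_S>2B({k^*}^2+2)\varepsilon$ for all $S$). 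Suppose $\bm{\theta}(t)=(0.1,0.1,0.5,0.8)$. Then $S(t)=\{4\}\notin{\sf OPT}$ and $\|\bm{\theta}_{S(t)}(t)-\bm{\mu}_{S(t)}\|_1=0\le\Delta_{\{4\}}-5\varepsilon$, so $\neg\mathcal{C}(t)\land\mathcal{A}(t)$ holds. Both arms of $S^*$ are bad, so your only admissible set is $Z=S^*$ itself; but the good perturbation $\theta'_1=0.5+\varepsilon$, $\theta'_2=0.5-\varepsilon$ gives $r(\{1,3\},\bm{\theta}')=1.004>1.0=r(\{1,2\},\bm{\theta}')$, so the oracle drops arm $2$ and robust inclusion fails for $Z=S^*$. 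The lemma nevertheless holds here, with $Z=\{1\}$: for every good $\theta'_1$ the oracle returns $\{1,3\}\ni 1$, and $\|\bm{\theta}'_{\{1,3\}}-\bm{\mu}_{\{1,3\}}\|_1\ge 0.05>\Delta_{\{1,3\}}-5\varepsilon$. Note that this valid $Z$ \emph{excludes} the bad arm $2$, and your sufficient condition for the second bullet fails badly for it ($\|\bm{\theta}_{S^*\setminus Z}-\bm{\mu}_{S^*\setminus Z}\|_1+|Z|\varepsilon=0.404\not<5\varepsilon$), yet the second bullet is still true --- because the winning super arm's own $\ell_1$ deviation is forced by optimality (arm $3$ must be inflated for $\{1,3\}$ to win), a mechanism your reduction cannot see.

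This points to the two ideas you are missing, which are exactly how the paper closes the argument. First, the shrinking step does not attempt to keep bad arms: given a violating good perturbation $\bm{\theta}'$ of the current $Z_i$ with output $S_{i+1}={\sf Oracle}(\bm{\theta}')$, the paper recurses on $Z_{i+1}=S_{i+1}\cap Z_i$, and nonemptiness comes from a disjointness claim --- any $S'$ with $S'\cap Z_i=\emptyset$ satisfies $r(S',\bm{\theta}')=r(S',\bm{\theta}(t))\le r(S(t),\bm{\theta}(t))\le r(S^*,\bm{\mu})-B({k^*}^2+1)\varepsilon<r(S_i,\bm{\theta}')$ --- where the comparison super arm is $S^*$ at the first stage but the \emph{previous output} $S_i$ afterwards (not $S^*$, whose coordinates outside $Z_i$ have reverted to possibly bad values). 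Your nonemptiness justification (minimality of $|S^*|$, and ``the oracle cannot exceed $r(S^*,\bm{\mu})$ once the $S^*$-coordinates are good'') is not right: minimality is irrelevant here, and the sampled reward of the output can exceed $r(S^*,\bm{\mu})$ when arms outside $S^*$ are inflated. Second, the terminal second bullet is derived from this same chain of optimality inequalities through the intermediate outputs, $r({\sf Oracle}(\bm{\theta}'),\bm{\theta}')\ge r(S_i,\bm{\theta}')\ge\cdots\ge r(S^*,\bm{\mu})-B{k^*}^2\varepsilon$, where the accumulated switching cost $k^*+2(k^*-1)+\cdots+2\le{k^*}^2$ is precisely why event $\mathcal{C}(t)$ carries the $({k^*}^2+1)\varepsilon$ slack; by Lipschitz continuity this forces either optimality of the output or a large deviation $\|\bm{\theta}'_{S_0}-\bm{\mu}_{S_0}\|_1>\Delta_{S_0}/B-({k^*}^2+1)\varepsilon$. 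With these two replacements your peeling terminates correctly; without them it stalls at the first stage of the example above.
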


By Lemma~\ref{Lemma45}, for some nonempty $Z$, $\mathcal{E}_{Z,1}(\bm{\theta}(t))$ occurs when $\neg \mathcal{C}(t)\land \mathcal{A}(t)$ happens. Another fact is that $||\bm{\theta}_Z(t) - \bm{\mu}_Z||_\infty > \varepsilon$. The reason is that if $||\bm{\theta}_Z(t) - \bm{\mu}_Z||_\infty \le \varepsilon$, by definition of the property, either $S(t) \in {\sf OPT}$ or $||\bm{\theta}_{S(t)}(t) - \bm{\mu}_{S(t)}||_1 > {\Delta_{S(t)}\over B} - ({k^*}^2+1)\varepsilon$, which means $\neg \mathcal{C}(t)\land \mathcal{A}(t)$ can not happen.
Let $\mathcal{E}_{Z,2}(\bm{\theta})$ be the event $\{||\bm{\theta}_Z - \bm{\mu}_Z||_\infty > \varepsilon \}$.
Then $\{\neg \mathcal{C}(t)\land \mathcal{A}(t)\}\to \lor_{Z \subseteq S^*, Z \ne \emptyset} (\mathcal{E}_{Z,1}(\bm{\theta}(t)) \land \mathcal{E}_{Z,2}(\bm{\theta}(t)))$.

Using similar techniques in \cite{komiyama2015optimal}, we can get the upper bound $O\left({8\over \varepsilon^2}({4\over \varepsilon^2})^{|Z|}\log{|Z|\over \varepsilon^2} \right)$ for $\sum_{t=1}^T  \E\left[\I\{\mathcal{E}_{Z,1}(\bm{\theta}(t)),\mathcal{E}_{Z,2}(\bm{\theta}(t))\}\right]$, which is stated in the following lemma.

\begin{restatable}{lemma}{LemmaThree}\label{Lemma3}
  In Algorithm \ref{Algorithm_TS}, under Assumption \ref{Assumption_IND}, for all $Z \subset S^*$ and $Z \ne \emptyset$, 
  \begin{equation*}
    \sum_{t=1}^T \E[\I\{\mathcal{E}_{Z,1}(\bm{\theta}(t)),\mathcal{E}_{Z,2}(\bm{\theta}(t))\}] \le 13\alpha'_2 \cdot \left({2^{2|Z|+3}\log{|Z|\over \varepsilon^2} \over \varepsilon^{2|Z|+2}}\right).
  \end{equation*}
\end{restatable} 

From Lemma \ref{Lemma3}, we have
\begin{eqnarray*}
  \sum_{Z \subseteq S^*, Z \ne \emptyset} \left(\sum_{t=1}^T \E\left[\I\{\mathcal{E}_{Z,1}(\bm{\theta}(t)) ,\mathcal{E}_{Z,2}(\bm{\theta}(t))\}\right]\right) &\le& \sum_{Z \subseteq S^*, Z \ne \emptyset} 13\alpha'_2 \cdot \left({2^{2|Z|+3}\log{|Z|\over \varepsilon^2} \over \varepsilon^{2|Z|+2}}\right) \\
  &\le& 13\alpha'_2 {8\over \varepsilon^2}\log{k^*\over \varepsilon^2} \sum_{Z \subseteq S^*, Z \ne \emptyset}  {2^{2|Z|} \over \varepsilon^{2|Z|}}\\
  &\le& 13\alpha'_2 {8\over \varepsilon^2}({4\over \varepsilon^2} + 1)^{k^*}\log{k^*\over \varepsilon^2}.
\end{eqnarray*}

\noindent\textbf{Sum of all the terms:}

The regret upper bound of CTS is the sum of these three terms, i.e., 
\begin{equation*}
\sum_i {\left(2\log{K_{max}} +6 \right)B^2\log (2^m|\mathcal{I}|T)\over \min_{S: i\in S}(\Delta_{S} -({k^*}^2+2)B\varepsilon)} + 13\alpha'_2 {8\Delta_{\max}\over \varepsilon^2}({4\over \varepsilon^2} + 1)^{k^*}\log{k^*\over \varepsilon^2} + (3m + {mK_{\max}^2\over \epsilon^2})\Delta_{\max}.
\end{equation*}


Let $\alpha_1 = 13\alpha'_2$, we know it is a constant that does not depend on the problem instance.

%

\subsection{Approximation Oracle}

We consider using an approximation oracle in our CTS algorithm as well, like what the authors did in \citep{chen2016combinatorial} or \citep{wen2015efficient}. 
However, we found out that Thompson sampling does not directly work with an
	approximation oracle even in the original MAB model, as shown in Theorem \ref{Proposition1}.
Notice that here we do not consider the Bayesian regret, so it does not contradict with the results in \citep{wen2015efficient}.

To make it clear, we need to show the definitions of approximation oracle and approximation regret here.

\begin{definition}
  An approximation oracle with approximation ratio $\lambda$ for the MAB problem is a function ${\sf Oracle}:[0,1]^m \to \{1,\cdots,m\}$ such that for all $\bm{\mu}\in [0,1]^m$, $\mu_{\sf Oracle(\bm{\mu})} \ge \lambda\max_i\mu_i$.
\end{definition}

\begin{definition}
  The approximation regret with an approximation ratio $\lambda$ of an MAB algorithm on mean vector $\bm{\mu}$ is defined as:
  \begin{equation*}
    \sum_{t=1}^T (\lambda\max_i\mu_i - \mu_{i(t)}),
  \end{equation*}
  where $i(t)$ is the arm pulled by the algorithm at time step $t$.
\end{definition}

The TS algorithm using approximation oracle works the same as Algorithm \ref{Algorithm_TS}, where
	 ${\sf Oracle}$ now is the approximation oracle.

\begin{restatable}{theorem}{Theorem}\label{Proposition1}
There exist an MAB instance $\cM$ and an approximation oracle ${\sf Oracle}$ such that
	the approximation regret of Algorithm \ref{Algorithm_TS} with ${\sf Oracle}$ on $\cM$
	is $\Omega(T)$.
	
\end{restatable}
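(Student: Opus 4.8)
The plan is to exhibit a small Bernoulli MAB instance together with an adversarial (but legal) $\lambda$-approximation oracle under which Algorithm~\ref{Algorithm_TS} pulls arms of mean well below $\lambda\max_i\mu_i$ a constant fraction of the time, so that the approximation regret $\sum_{t=1}^T(\lambda\max_i\mu_i-\mu_{i(t)})$ grows linearly. I would use three arms: an \emph{optimal} arm with mean $\mu_1=1$, a \emph{cover} arm with mean $\mu_c\in(\lambda,1)$, and a \emph{trap} arm with mean $\mu_r$ chosen so that $\lambda\mu_c<\mu_r<\lambda$. The oracle returns, among all arms whose sample satisfies $\theta_i\ge\lambda\max_j\theta_j$, the trap if it qualifies, otherwise the cover, otherwise arm~$1$; this is a legal $\lambda$-approximation oracle because whenever neither the trap nor the cover qualifies the maximal sample must belong to arm~$1$, which therefore qualifies itself. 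The two design facts I would record first are that (i)~every trap pull costs positive approximation regret $\lambda-\mu_r>0$, and (ii)~once the cover is learned its sample satisfies $\theta_c\approx\mu_c>\lambda\ge\lambda\theta_1$, so the cover is always available to ``absorb'' rounds in which arm~$1$'s sample is high.

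The mechanism I would aim to establish is that arm~$1$ is pulled only $o(T)$ times, so its posterior $\beta(a_1,b_1)$ never concentrates and $\Pr[\theta_1(t)\le\mu_r/\lambda]$ stays bounded below; conditioned on this, the trap qualifies and is played a constant fraction $q$ of the rounds, giving $\Omega(T)$ trap pulls. The first quantitative step bounds the expected number of arm-$1$ pulls by $\sum_{t}\Pr[\theta_c(t)<\lambda]$, since arm~$1$ is selected only when the cover fails to qualify, which forces $\theta_c(t)<\lambda\max_j\theta_j(t)\le\lambda$. I would then argue the cover and trap are each played $\Theta(t)$ times by step~$t$, feed the resulting linear growth of $N_c(t)$ into a Beta/Chernoff tail bound to get $\Pr[\theta_c(t)<\lambda]\le e^{-c\,N_c(t)}$, and conclude that this sum, hence the total arm-$1$ count, is controlled. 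The final accounting splits the rounds into those in which the trap is played (probability $q=\Pr[\theta_1\le\mu_r/\lambda]$, regret $\lambda-\mu_r>0$) and those in which the cover is played (probability $1-q$, regret $\lambda-\mu_c<0$), and chooses $\mu_c,\mu_r,\lambda$ so that the per-round rate $q(\lambda-\mu_r)+(1-q)(\lambda-\mu_c)$ is a strictly positive constant; multiplying by $T$ and subtracting the lower-order contribution of the few arm-$1$ pulls gives the $\Omega(T)$ bound.

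I expect the main obstacle to be exactly this last balance, because covering and trapping pull in opposite directions and are coupled through the optimal arm's posterior. Reliable covering requires $\mu_c$ bounded away from $\lambda$, but a large margin $\mu_c-\lambda$ both makes each cover pull expensive and, more importantly, governs how fast the cover is learned: a weak cover is pulled slowly, so arm~$1$ is forced more often early on, its posterior begins to concentrate, and—since $\mu_r<\lambda\mu_1$—once $\theta_1$ settles above $\mu_r/\lambda$ the trap stops qualifying and the positive regret shuts off. The delicate part of the proof is therefore to pin down the constants (and, if needed, replace the single cover by a short ladder of arms, or take $\mu_1$ strictly below $1$ to slow the optimal arm's concentration) so that the burn-in arm-$1$ pulls stay few enough to keep $q$ bounded below while the cover's negative contribution remains dominated by the trap's positive one; quantifying this coupled stochastic dynamics, rather than any single tail estimate, is where the real work lies.
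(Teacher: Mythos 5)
Your instance and oracle are essentially the paper's own construction (the paper takes $\bm{\mu}=[0.9,0.82,0.7]$ with $\lambda=0.8$: arm $2$ is your cover with $\mu_2>\lambda\max_i\mu_i$, arm $3$ is your trap with $\lambda\mu_2<\mu_3<\lambda\max_i\mu_i$, and the oracle has the same preference order trap $\succ$ cover $\succ$ optimal), and your final accounting is the paper's as well ($0.8625\times 0.02-0.1375\times 0.1=0.0035>0$ per round). The genuine gap is in the middle step, and it is exactly the step you defer as ``where the real work lies'': the inference ``arm $1$ is pulled $o(T)$ times, \emph{so} its posterior never concentrates and $\Pr[\theta_1(t)\le\mu_r/\lambda]$ stays bounded below'' is false. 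With $\mu_1=1$ every pull of arm $1$ yields observation $1$, so after $n$ pulls its posterior is $\beta(n+1,1)$ and $\Pr[\theta_1\le\mu_r/\lambda]=(\mu_r/\lambda)^{n+1}$, which decays exponentially in $n$; even $\Theta(\log T)$ pulls (certainly $o(T)$) drive the trap-qualification probability $q$ to a polynomially small quantity, the trap pulls become sublinear, and since cover pulls and arm-$1$ pulls both carry \emph{negative} approximation regret, linear regret no longer follows. Your suggested fixes do not repair this: taking $\mu_1<1$ cannot help, because the trap's illegality forces $\mu_r<\lambda\mu_1$, i.e.\ $\mu_1>\mu_r/\lambda$, so arm $1$'s posterior still concentrates above the qualification threshold exponentially fast in its pull count; and a ladder of covers does nothing about arm $1$'s posterior. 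What is actually needed is that arm $1$ is pulled $O(1)$ times --- in fact zero times --- on an event of constant probability.

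This is precisely what the paper's proof arranges, and it is the bulk of the argument. The paper conditions on a constant-probability burn-in (arms $2$ and $3$ are played $T_0=10^7$ times before arm $1$ is ever played), then runs an epoch-based union bound: events $\mathcal{A}_n$ (samples of arms $2,3$ stay within $0.01$ of their means throughout epoch $n$), $\mathcal{B}_n$ (both arms' observation counts grow linearly in each epoch, which holds because the cover is guaranteed selection probability at least $0.1$ whenever $\theta_1>0.9$), and $\mathcal{G}$ (empirical means stay within $0.005$ forever). Jointly these have probability at least $0.3$, and on their intersection the cover's sample qualifies at \emph{every} round, so arm $1$ is never played, its prior remains exactly uniform, and $q=0.8625$ is a genuine constant. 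Note also that your bound on arm-$1$ pulls by $\sum_t\Pr[\theta_c(t)<\lambda]\le\sum_t e^{-cN_c(t)}$ silently presupposes both $\hat{\mu}_c(t)>\lambda$ for all $t$ and $N_c(t)=\Theta(t)$, which are exactly the paper's events $\mathcal{G}$ and $\mathcal{B}_n$ and must be established jointly with the conclusion you want; circumventing this coupled analysis is not possible, and since your plan leaves it open while resting on a false quantitative claim, the proposal does not yet constitute a proof.
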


\subsubsection{Proof of Theorem \ref{Proposition1}}\label{Section_Proof_2}

Now we give the theoretical analysis of Theorem \ref{Proposition1}. \emph{This is another main contribution of this manuscript.}

Azuma's inequality, Markov's inequality and Beta-Binomial trick are very useful in the analysis, thus we state them at the beginning.

\begin{fact}[Azuma's inequality, sub-martingale case \citep{azuma1967weighted}] \label{Fact_Azuma}
 If $X_0 = 0, X_1,\cdots,X_n$ satisfies that $\E[X_i|X_0,X_1,\cdots X_{i-1}] \ge X_{i-1} + \mu$ for some constant $\mu$, and $|X_i - X_{i-1}| \le 1$, then for any $\epsilon > 0$, 
\begin{equation*}
\Pr[X_n \le n\mu-\epsilon] \le \exp\left(-{\epsilon^2 \over 2n}\right).
\end{equation*}

\end{fact}


\begin{fact}[Markov's inequality \citep{stein2003princeton}]  \label{Fact_Markov}
If $X$ is a non-negative random variable, then for any $\epsilon > 0$,
\begin{equation*}
\Pr[X\ge \epsilon] \le {\E[X] \over \epsilon}.
\end{equation*}

\end{fact}

\begin{fact}[Beta-Binomial trick, Fact 3 in \cite{agrawal2013further}]\label{Fact33}
	Let $F^{Beta}_{a,b}(x)$ be the CDF of Beta distribution with parameters $(a,b)$, let $F^B_{n,p}(x)$ be the CDF of Binomial distribution with parameters $(n,p)$. Then for any positive integers $(a,b)$, we have
	\begin{equation*}
		F^{Beta}_{a,b}(x) = 1 - F^B_{a+b-1,x}(a-1).
	\end{equation*}
\end{fact}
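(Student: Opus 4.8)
The plan is to prove this classical identity through a probabilistic order-statistics argument, which converts the analytic equality into a transparent counting statement. Let $n = a+b-1$ and let $U_1,\dots,U_n$ be i.i.d.\ uniform random variables on $[0,1]$. First I would invoke the standard fact that the $k$-th smallest of these $n$ uniforms, $U_{(k)}$, is distributed as $\mathrm{Beta}(k,\,n-k+1)$. Setting $k=a$ gives second parameter $n-k+1 = (a+b-1)-a+1 = b$, so $U_{(a)} \sim \mathrm{Beta}(a,b)$ and therefore $F^{Beta}_{a,b}(x) = \Pr[U_{(a)} \le x]$.

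Next I would rephrase the event $\{U_{(a)} \le x\}$ in terms of how many of the $U_i$ fall at or below $x$. The $a$-th smallest value is at most $x$ precisely when at least $a$ of the $n$ variables are at most $x$. Writing $N = \sum_{i=1}^n \I[U_i \le x]$, each indicator is an independent Bernoulli trial with success probability $x$, so $N \sim \mathrm{Binomial}(n,x)$. Hence
\begin{equation*}
  F^{Beta}_{a,b}(x) = \Pr[U_{(a)} \le x] = \Pr[N \ge a] = 1 - \Pr[N \le a-1] = 1 - F^B_{a+b-1,\,x}(a-1),
\end{equation*}
which is exactly the claimed identity.

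For a self-contained analytic alternative that does not quote the order-statistic distribution, I would instead verify that both sides agree as functions of $x$: both vanish at $x=0$, and the derivative of $\sum_{j=a}^{a+b-1}\binom{a+b-1}{j}x^j(1-x)^{a+b-1-j}$ telescopes, with the negative contribution arising from term $j$ canceling the positive contribution from term $j+1$, leaving only $\frac{(a+b-1)!}{(a-1)!\,(b-1)!}\,x^{a-1}(1-x)^{b-1}$, which is precisely the $\mathrm{Beta}(a,b)$ density. Equivalently, repeated integration by parts on $\int_0^x t^{a-1}(1-t)^{b-1}\,dt$ strips off one binomial term per step.

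I expect the only delicate part to be index bookkeeping rather than genuine difficulty: one must confirm $n-a+1 = b$ so that the order statistic carries the correct second Beta parameter, and must check that the normalizing constant $\frac{(a+b-1)!}{(a-1)!\,(b-1)!}$ matches the reciprocal of the Beta function so the density integrates to one. Since $a$ and $b$ are assumed to be positive integers, all factorials are well defined and the order-statistic fact applies directly, so no separate treatment of boundary or degenerate cases is required.
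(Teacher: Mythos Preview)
Your proposal is correct: the order-statistics argument is the standard and cleanest proof of this identity, and your bookkeeping ($n=a+b-1$, $k=a$, so $n-k+1=b$) is accurate. Note that the paper itself states this as a \emph{Fact} without proof, treating it as a well-known classical identity, so there is no ``paper's own proof'' to compare against; your argument would serve perfectly well as a self-contained justification.
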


We also provide some useful lemmas (and their proofs) here. 

\begin{lemma}\label{Fact_M15}
In Algorithm \ref{Algorithm_TS}, for any base arm $i$, we have that:
\begin{equation*}
  \Pr\left[\theta_i(t)-\hat{\mu}_i(t)  > \epsilon | a_i(t), b_i(t)\right] \le \exp(-2{N_i(t)\epsilon^2}),
\end{equation*}
\begin{equation*}
  \Pr\left[\hat{\mu}_i(t) - \theta_i(t) > \epsilon | a_i(t), b_i(t)\right] \le \exp(-2{N_i(t)\epsilon^2}).
\end{equation*}
\end{lemma}

\begin{proof}
We only need to prove the first inequality, the second one can be done by symmetry.

Note that given $a_i(t), b_i(t)$, $N_i(t) = a_i(t) + b_i(t) - 2$ is also fixed. Therefore, 
\begin{eqnarray}
		\nonumber&&\Pr[\theta_i(t) > \hat{\mu}_i(t) + \epsilon| a_i(t), b_i(t)] \\
		\nonumber&=& 1 - F^{Beta}_{a_i(t),b_i(t)}(\hat{\mu}_i(t) + \epsilon)\\
		\label{eq_m100}&=& 1 - (1 - F^B_{a_i(t)+b_i(t)-1, \hat{\mu}_i(t) + \epsilon}(a_i(t) - 1))\\
		\nonumber&=& F^B_{a_i(t)+b_i(t)-1, \hat{\mu}_i(t) + \epsilon}(a_i(t) - 1)\\
         \label{eq_m301}&=& F^B_{a_i(t)+b_i(t)-1, \hat{\mu}_i(t) + \epsilon}(\hat{\mu}_i(t)(a_i(t)+b_i(t)-2))\\
		\nonumber&\le& F^B_{a_i(t)+b_i(t)-1, \hat{\mu}_i(t) + \epsilon}(\hat{\mu}_i(t)(a_i(t)+b_i(t)-1))\\
		\label{eq_m101}&\le&\exp(-2(a_i(t)+b_i(t)-1)\epsilon)\\
		\nonumber&\le& \exp(-2{N_i(t)\epsilon^2}).
	\end{eqnarray}
	
Eq. \eqref{eq_m100} is given by Beta-Binomial Trick (Fact \ref{Fact33}), Eq. \eqref{eq_m301} is given by the definition $\hat{\mu}_i(t) = {a_i(t) -1 \over a_i(t)+b_i(t) - 2}$, Eq. \eqref{eq_m101} is given by Chernoff-Hoeffding inequality (Fact \ref{Fact_Chernoff}).
\end{proof}

\begin{lemma}\label{Fact_M5}
In Algorithm \ref{Algorithm_TS}, for any base arm $i$, we have the following two inequalities:
\begin{equation*}
  \Pr\left[\theta_i(t)-\hat{\mu}_i(t)  > \sqrt{4\log t\over N_i(t)}\right] \le {1\over t^2},
\end{equation*}
\begin{equation*}
  \Pr\left[\hat{\mu}_i(t) - \theta_i(t) > \sqrt{4\log t\over N_i(t)}\right] \le {1\over t^2}.
\end{equation*}
\end{lemma}

\begin{proof}
We only need to prove the first inequality, the second one can be done by symmetry.
\begin{eqnarray}
\nonumber&&\Pr\left[\theta_i(t)-\hat{\mu}_i(t)  > \sqrt{4\log t\over N_i(t)}\right]\\
\nonumber =&& \sum_{a,b > 0} \Pr\left[\theta_i(t)-\hat{\mu}_i(t)  > \sqrt{4\log t\over N_i(t)}, a_i(t) = a, b_i(t) = b \right]\\
\nonumber=&& \sum_{a,b > 0} \Pr[a_i(t) = a, b_i(t) = b] \Pr\left[\theta_i(t)-\hat{\mu}_i(t)  > \sqrt{4\log t\over N_i(t)}\mid a_i(t) = a, b_i(t) = b \right]\\
\label{eq_m222}\le&& \sum_{a,b > 0} \Pr[a_i(t) = a, b_i(t) = b] \exp\left(-2{N_i(t)\left(\sqrt{4\log t\over N_i(t)}\right)^2}\right)\\
\nonumber = &&\sum_{a,b > 0} \Pr[a_i(t) = a, b_i(t) = b]\exp(-4\log t)\\
\nonumber=&&  \exp(-4\log t)\\
\nonumber\le&&  {1\over t^2},
\end{eqnarray}
where Eq. \eqref{eq_m222} is given by Lemma \ref{Fact_M15}.
\end{proof}

Now we start the main proof of Theorem \ref{Proposition1}, here
we analyze the regret of Algorithm \ref{Algorithm_TS} on the following MAB instance:

\begin{restatable}{problem}{Problemtwo}\label{Problem2}
	There are totally $m$ arms, $\bm{\mu} = [1,0.5,\cdots,0.5,0.35]$, approximation ratio $\lambda = 0.49$. Given a parameter set $\bm{\theta}$, we say arm $i$ is feasible if $\theta_i \ge \lambda \max_j \theta_j$.
The {\sf Oracle} works as follows: 
(a) if arm $m$ is feasible, and there exist feasible arms in $[2, m-1]$, then {\sf Oracle} returns arm $m$ with probability ${1\over 2}$, and returns one feasible arm in $[2,m-1]$ uniformly with total probability ${1\over 2}$; 
(b) if arm $m$ is feasible and there is no feasible arm in $[2,m-1]$, then {\sf Oracle} returns arm $m$; 
(c) if arm $m$ is not feasible but there exist feasible arms in $[2,m-1]$, then {\sf Oracle} returns one feasible arm in $[2,m-1]$ uniformly;
(d) when all arms in $[2,m]$ are not feasible, {\sf Oracle} returns arm $1$.

\end{restatable}

In this example, the approximation regret of pulling arm $i \in [2,m-1]$ is $0.49 \times 1 - 0.5 = -0.01$, and the approximation regret of pulling arm $m$ is $0.49 \times 1 - 0.35 = 0.14$.

The key idea is that we may never play the true best arm (arm $1$ above) 
	when using the approximation oracle. 
When the sample of the best arm from the prior distribution is good (i.e., close to 1), we choose an approximate arm (arm $2,3,\cdots,m-1$ above) but not the best arm; 
	otherwise we choose the bad arm $m$ with a positive approximation regret.
In this case, the expected approximation regret of each time slot
depends on whether the prior distribution of the best arm at the beginning of this time slot
is good or not.
Since the best arm is never observed, we never update its prior distribution. Thus the expected regret in each time slot can remain a positive constant forever. 

In our conference paper \cite{WC18}, we only prove that arm $m$ in Problem \ref{Problem2} will be pulled for $\Theta(T)$ number of times. However, this is not enough to achieve an $\Omega(T)$ regret lower bound, since pulling an arm $i \in [1,m-1]$ leads to a negative regret and this may compensate for the $\Theta(T)$ positive regret of pulling arm $m$. In this manuscript, we correct this mistake by using a more meticulous analysis, which is explained in detail as follows.

We first define the following event for any $2 \le i \le m-1$:
\begin{equation*}
\mathcal{J}_i(n) = \{\forall t=1,2,\ldots, (N_i(t) \ge n) \rightarrow (\hat{\mu}_i(t) \ge \mu_i - 0.005) \}.
\end{equation*}

By Chernoff-Hoeffding inequality (Fact \ref{Fact_Chernoff}), we know that for any $2 \le i \le m-1$,
\begin{eqnarray}
\nonumber \Pr[\mathcal{J}_i(n)] &\ge& 1 - \sum_{t=1}^\infty\sum_{n' \ge n}  \Pr[N_i(t) = n', \hat{\mu}_i(t) \ge \mu_i - 0.005]\\
\nonumber&=& 1 - \sum_{n' \ge n} \sum_{t=1}^\infty \Pr[N_i(t) = n', \hat{\mu}_i(t) \ge \mu_i - 0.005]\\
\nonumber &\ge &1 - \sum_{n'=n}^\infty \exp(-2n'\cdot(0.005)^2) \\
\nonumber &=& 1-\sum_{n'=n}^\infty \exp(-n'/20000) \\
\label{Eq_901}&\ge& 1-20000\exp(-n/20000).
\end{eqnarray}

Then we can choose $n_1$ such that $1-20000\exp(-n_1/20000) \ge 0.9$, this means that $\Pr[\mathcal{J}_i(n_1)] \ge 0.9$. We emphasize that $n_1$ is a constant that does not depend on $m$. 

Let $A_1(n)$ denote the set of arms $2\le i \le m-1$ such that $\mathcal{J}_i(n)$ holds. Then we define the next event $\mathcal{K}_1(n) $ as
\begin{equation*}
\mathcal{K}_1(n) = \left\{|A_1(n)| \ge {m-2\over 2} \right\}.
\end{equation*}

Since the events $\mathcal{J}_i(n)$'s are independent for different $i$'s and recall that $\Pr[\mathcal{J}_i(n_1)] \ge 0.9$,  by Chernoff-Hoeffding inequality (Fact \ref{Fact_Chernoff}), we have that
\begin{equation}\label{Eq_979}
\Pr[\mathcal{K}_1(n_1)] \ge 1 - \exp(-2(m-2)\times (0.9 - 0.5)^2) = 1 - \exp(-0.32(m-2)).
\end{equation}

Recall that $F^{Beta}_{a,b}(x)$ is the CDF of beta distribution with parameter $(a,b)$.
For any action $i \in A_1(n_1)$ and any $t$ with $N_i(t) < n_1$, 
	the probability that $\theta_i(t) > 0.49$ is $1 - F^{Beta}_{a_i(t),b_i(t)}(0.49)
	\ge  1 - F^{Beta}_{1,n_1}(0.49) = 0.51^{n_1}$.
Let $p(n_1) \triangleq 0.51^{n_1}$ be this probability lower bound.
When $\theta_i(t) > 0.49$, we know arm $i$ must be feasible for round $t$, which means it is chosen by the {\sf Oracle} 
	with probability at least ${1\over 2(m-2)}$ in round $t$, regardless of other randomness in the system. 

Thus, when considering the random variables $N_i(0), N_i(1),\cdots,N_i(t)$ for $i \in [2,m-1]$, we have that $\E[N_i(t)|N_i(0),\cdots N_i(t-1)] \ge N_i(t-1) + {p(n_1) \over 2(m-2)}$ if $N_i(t-1) < n_1$. To deal with the cases that $N_i(t-1) \ge n_1$, we define another series of random variables $N_i'(0), N_i'(1), \cdots , N_i'(t)$ as follows: if $N_i(\tau-1) < n_1$, we set that $N_i'(\tau) = N_i(\tau)$; otherwise we set that $N_i'(\tau) = N_i'(\tau-1) + 1$. Under this definition, we could see that $\E[N_i'(t)|N_i'(0),\cdots,N_i'(t-1)] \ge N_i'(t-1) + {p(n_1) \over 2(m-2)}$ for any $N_i'(t-1) < n_1$, and $\E[N_i'(t)|N_i'(0),\cdots,N_i(t-1)] = N_i'(t-1) + 1 \ge N_i'(t-1) + {p(n_1) \over 2(m-2)}$ for any $N_i'(t-1) \ge  n_1$. 
Thus, we could use the Azuma's inequality (Fact \ref{Fact_Azuma}), and get that
\begin{equation} \label{eq:Ntconcentration}
\Pr\left[N_i'(t) \le {t\cdot p(n_1) \over 2(m-2)} - \epsilon\right] \le \exp\left(-{\epsilon^2 \over 2t} \right).
\end{equation}

Under our definition, $\{N_i'(t) \ge n_1\} \iff \{N_i(t) \ge n_1\}$, this implies that for any $t$ with ${t\cdot p(n_1)\over 4(m-2)} \ge n_1$, we can obtain 
\begin{eqnarray}
\nonumber\Pr[N_i(t) \le n_1] &=& \Pr[N_i'(t) \le n_1] \\
\nonumber&\le& \Pr\left[N_i'(t) \le {t\cdot p(n_1) \over 4(m-2)}\right]\\
\nonumber&=& \Pr\left[N_i'(t) \le {t\cdot p(n_1) \over 2(m-2)} - {t \cdot p(n_1) \over 4(m-2)}\right]\\
\label{Eq_7777}&\le& \exp\left(-{1\over 2t} \cdot \left({t \cdot p(n_1) \over 4(m-2)}\right)^2 \right)\\
\nonumber&=& \exp\left(-  {t \cdot (p(n_1))^2 \over 32(m-2)^2} \right),
\end{eqnarray}
where Eq. \eqref{Eq_7777} is given by applying inequality~\eqref{eq:Ntconcentration}.

Now we define event $\mathcal{K}_2(n_2)$ as:
\begin{equation*}
\mathcal{K}_2(n_2) = \{\forall i \in A_1(n_1), (t\ge n_2)\rightarrow ( N_i(t) \ge n_1)\}.
\end{equation*}

Then we know that for any $n_2$ such that ${n_2\cdot p(n_1)\over 4(m-2)} \ge n_1$, 
\begin{eqnarray}
\nonumber \Pr[\mathcal{K}_2(n_2)] &\ge& 1 - \sum_{i\in A_1(n_1)}\sum_{t=n_2}^\infty \exp\left(-{t\cdot (p(n_1))2 \over 32(m-2)^2}\right)\\
\nonumber &\ge& 1 - (m-2)\sum_{t=n_2}^\infty \exp\left(-{t\cdot (p(n_1))2 \over 32(m-2)^2}\right)\\
\label{Eq_978}&\ge& 1 - {32(m-2)^3 \over (p(n_1))^2} \exp\left(-{n_2\cdot (p(n_1))^2 \over 32(m-2)^2}\right).
\end{eqnarray}

In the next step, we define the following three events:
\begin{itemize}
\item $\mathcal{L}_1(n_3) = \{\forall 2\le i \le m-1, (t \ge n_3) \rightarrow (|\hat{\mu}_i(t) - \mu_i|\le \sqrt{4\log t \over N_i(t)},  |\theta_i(t) - \hat{\mu}_i(t)|\le \sqrt{4\log t \over N_i(t)})\}$;
\item $\mathcal{L}_2(n_3) = \{\forall i \in A_1(n_1), ( t \ge n_3 ) \rightarrow (\sqrt{4\log t \over N_i(t)} \le 0.005)\}$;
\item $\mathcal{L}_3(n_3) = \{N_1(n_3)  = 0 \}$.
\end{itemize}

By Chernoff-Hoeffding inequality (Fact \ref{Fact_Chernoff}), we have that for any $2 \le i \le m-1$,
\begin{equation*}
\Pr\left[|\hat{\mu}_i(t) - \mu_i|\le \sqrt{4\log t \over N_i(t)}\right] \le {2\over t^2}.
\end{equation*}

By Lemma \ref{Fact_M5}, we also have that for any $2 \le i \le m-1$,
\begin{equation*}
\Pr\left[|\theta_i(t) - \hat{\mu}_i(t)|\le \sqrt{4\log t \over N_i(t)}\right] \le {2\over t^2}.
\end{equation*}

Thus
\begin{equation}\label{Eq_977}
\Pr[\mathcal{L}_1(n_3)] \ge 1 - \sum_{t=n_3}^\infty {4(m-2)\over t^2} \ge 1 - {4(m-2)\over n_3}.
\end{equation}

Now we consider the probability $\Pr[\mathcal{L}_2(n_3)|\mathcal{K}_2(n_2)]$ with $n_2 < n_3$. When $\mathcal{K}_2(n_2)$ happens, for any arm $i \in A_1(n_1)$ and $t > n_2$, we have that (here $P(a,b)$ denotes $\Pr[a_i(t)=a, b_i(t)=b\mid\mathcal{K}_2(n_2)]$)
\begin{eqnarray}
\nonumber&&\Pr[\theta_i(t) \ge 0.49 \mid \mathcal{K}_2(n_2)]\\
\nonumber  =&& \sum_{a,b>0} \Pr[\theta_i(t) \ge 0.49, a_i(t)=a, b_i(t)=b\mid \mathcal{K}_2(n_2)]\\
\nonumber=&&  \sum_{a,b>0}P(a,b)\Pr[\theta_i(t) \ge 0.49\mid \mathcal{K}_2(n_2),a_i(t)=a, b_i(t)=b]\\
\label{Eq_921}\ge&&  \sum_{a,b>0}P(a,b)\Pr[\theta_i(t)  \ge \hat{\mu}_i(t) - 0.005 \mid \mathcal{K}_2(n_2),a_i(t)=a, b_i(t)=b]\\
\label{Eq_922}\ge&&  \sum_{a,b>0}P(a,b)(1 - \exp(-2N_i(t) \cdot 0.005^2))\\
\label{Eq_923}\ge&& \sum_{a,b>0}P(a,b)(1 - \exp(-2n_1 \cdot 0.005^2))\\
\nonumber\ge&& 1 - \exp(-2n_1 \cdot 0.005^2)\\
\label{Eq_924}\ge&& 0.9\\
\nonumber \ge&& 0.5.
\end{eqnarray}
Eq. \eqref{Eq_921} is because that under $\mathcal{K}_2(n_2)$, $\forall i \in A_1(n_1), (t\ge n_2)\rightarrow (N_i(t) \ge n_1)$, and by definition of $A_1(n_1)$, we know that $\hat{\mu}_i(t) \ge \mu_i - 0.005 \ge 0.495$, this implies that $(\theta_i(t) \ge 0.49) \rightarrow (\theta_i(t)  \ge \hat{\mu}_i(t) - 0.005)$.
Eq. \eqref{Eq_922} is given by Lemma \ref{Fact_M15}, since $\theta_i(t)$ only depends on $a_i(t), b_i(t)$.
Eq. \eqref{Eq_923} is because that under $\mathcal{K}_2(n_2)$, $N_i(t) \ge n_1$ and Eq. \eqref{Eq_924} is because that by definition of $n_1$, we have $1 - \exp(-2n_1 \cdot 0.005^2) \ge 0.9$. 

An arm $i \in [2,m-1]$ with $\theta_i(t)\ge 0.49$ must be feasible. Thus, arm $i \in [2,m-1]$ is pulled with probability at least $0.5 \cdot {1\over 2(m-2)} = {1\over 4(m-2)}$ (under event $\mathcal{K}_2(n_2)$). Now consider the random variables $N_i''(0) = N_i(n_2) - N_i(n_2), N_i''(1) = N_i(n_2+1) - N_i(n_2), \cdots, N_i''(t) = N_i(n_2+t) - N_i(n_2)$, we can see that $\E[N_i''(t)| N_i''(0), \cdots, N_i''(t-1),\mathcal{K}_2(n_2)] \ge N_i''(t-1) + {1\over 4(m-2)}$. 

Because of this, we can use the Azuma's inequality (Fact \ref{Fact_Azuma}) again and get that
\begin{equation*}
\Pr\left[N_i''(t) \le {t - n_2\over 4(m-2)} - \epsilon\mid\mathcal{K}_2(n_2) \right] \le \exp\left(-{\epsilon^2 \over 2(t - n_2)}\right).
\end{equation*}

Notice that $\neg \mathcal{L}_2(n_3)$ implies that there exists $i \in A_1(n_1)$ and $t > n_3$ such that $N_i(t) \le 160000 \log t$. Thus, for any $n_3$ such that ${ n_3-n_2\over 8(m-2)} \ge 160000 \log n_3$, we have
\begin{eqnarray}
\nonumber \Pr[\mathcal{L}_2(n_3)|\mathcal{K}_2(n_2)] &\ge& 1 - \sum_{i \in A_1(n_1)} \sum_{t=n_3}^\infty \Pr[N_i(t) \le 160000 \log t\mid\mathcal{K}_2(n_2)]\\
\nonumber&\ge& 1 - \sum_{i \in A_1(n_1)} \sum_{t=n_3}^\infty \Pr\left[N_i''(t) \le 160000 \log t \mid\mathcal{K}_2(n_2)\right]\\
\nonumber&\ge& 1 - \sum_{i \in A_1(n_1)} \sum_{t=n_3}^\infty \Pr\left[N_i''(t) \le {(t-n_2)\over 8(m-2)} \mid\mathcal{K}_2(n_2)\right]\\
\nonumber&\ge& 1 - \sum_{i \in A_1(n_1)} \sum_{t=n_3}^\infty \Pr\left[N_i''(t) \le {(t-n_2)\over 4(m-2)} -  {(t-n_2)\over 8(m-2)}\mid\mathcal{K}_2(n_2)\right]\\
\nonumber&\ge& 1 - \sum_{i \in A_1(n_1)} \sum_{t=n_3}^\infty \exp\left(-{(t-n_2) \over 128(m-2)^2}\right)\\
\nonumber &\ge& 1 - (m-2) \sum_{t=n_3}^\infty \exp\left(-{(t-n_2) \over 128(m-2)^2}\right)\\
\label{Eq_976}&\ge& 1 - 128(m-2)^3 \exp\left(-{(n_3-n_2) \over 128(m-2)^2}\right).
\end{eqnarray}

Now we consider $\Pr[\mathcal{L}_3(n_3)|\mathcal{K}_1(n_1) \land \mathcal{K}_2(n_2)]$ with $n_2 < n_3$. Notice that we will pull arm $1$ only if all the other arms are not feasible, and under event $\mathcal{K}_2(n_2)$, all the arms $i\in A_1(n_1)$ satisfy that $\Pr[\theta_i(t) \ge 0.49] \ge p(n_1)$ when $N_i(t) < n_1$, and $\Pr[\theta_i(t) \ge 0.49] \ge 0.5$ when $N_i(t) \ge n_1$. Moreover, there are at least ${m-2\over 2}$ arms in $A_1(n_1)$ under event $\mathcal{K}_1(n_1)$. Thus, denote $i(t)$  the chosen arm at time $t$, then for $t < n_2$, we have that
\begin{equation*}
\Pr[i(t) = 1] \le (1-p(n_1))^{|A_1(n_1)|} \le (1-p(n_1))^{(m-2)/2},
\end{equation*}
and for $t \ge n_2$, we have that
\begin{equation*}
\Pr[i(t) = 1] \le 0.5^{|A_1(n_1)|} \le 0.5^{(m-2)/2}.
\end{equation*}

Then we know that
\begin{equation*}
\E[N_1(n_3)|\mathcal{K}_1(n_1) \land \mathcal{K}_2(n_2)] \le n_2(1-p(n_1))^{(m-2)/2} + {n_3 - n_2 \over 2^{(m-2)/2}}.
\end{equation*}

By Markov inequality (Fact \ref{Fact_Markov}), 
\begin{eqnarray}
\nonumber\Pr[\mathcal{L}_3(n_3)|\mathcal{K}_1(n_1) \land \mathcal{K}_2(n_2)] &=&1 - \Pr[N_1(n_3) \ge 1|\mathcal{K}_1(n_1) \land \mathcal{K}_2(n_2)]\\
\nonumber &\ge& 1 - {\E[N_1(n_3) |\mathcal{K}_1(n_1) \land \mathcal{K}_2(n_2)] \over 1}\\
\label{Eq_975} &\ge& 1 - n_2(1-p(n_1))^{(m-2)/2} - {n_3 - n_2 \over 2^{(m-2)/2}}.
\end{eqnarray}

The following lemma shows that the probability that all the five events $\mathcal{K}_1(n_1), \mathcal{K}_2(n_2)$, $\mathcal{L}_1(n_3), \mathcal{L}_2(n_3), \mathcal{L}_3(n_3)$ happen can be arbitrarily close to $1$.

\begin{lemma}\label{Lemma_777}
For any $\delta > 0$, these exists an $(m,n_2, n_3)$ such that
\begin{equation*}
\Pr[\mathcal{K}_1(n_1) \land \mathcal{K}_2(n_2) \land \mathcal{L}_1(n_3) \land \mathcal{L}_2(n_3) \land \mathcal{L}_3(n_3)] \ge 1 - \delta.
\end{equation*}
\end{lemma}

\begin{proof} Notice that
\begin{eqnarray*}
&&\Pr[\mathcal{K}_1(n_1) \land \mathcal{K}_2(n_2) \land \mathcal{L}_1(n_3) \land \mathcal{L}_2(n_3) \land \mathcal{L}_3(n_3)]\\
&\ge& 1 - \Pr[\neg \mathcal{K}_1(n_1)] - \Pr[\neg \mathcal{K}_2(n_2)] - \Pr[\neg \mathcal{L}_1(n_3)] \\
&&- \Pr[\neg \mathcal{L}_2(n_3) \land \mathcal{K}_2(n_2)] - \Pr[\neg \mathcal{L}_3(n_3)\land \mathcal{K}_1(n_1) \land \mathcal{K}_2(n_2)] \\
&\ge&1 - \Pr[\neg \mathcal{K}_1(n_1)] - \Pr[\neg \mathcal{K}_2(n_2)] - \Pr[\neg \mathcal{L}_1(n_3)] \\
&&- \Pr[\neg \mathcal{L}_2(n_3) | \mathcal{K}_2(n_2)] - \Pr[\neg \mathcal{L}_3(n_3)|\mathcal{K}_1(n_1) \land \mathcal{K}_2(n_2)].
\end{eqnarray*}

Based on Eq. \eqref{Eq_979}, \eqref{Eq_978}, \eqref{Eq_977}, \eqref{Eq_976}, and \eqref{Eq_975}, we only need to prove that for any $\delta > 0$, there exists $(m,n_2, n_3)$ pair such that:
\begin{itemize}
\item $\exp(-0.32(m-2)) \le {\delta \over 5}$;
\item ${32(m-2)^3 \over (p(n_1))^2} \exp\left(-{n_2\cdot (p(n_1))^2 \over 32(m-2)^2}\right) \le {\delta \over 5}$ and ${n_2\cdot p(n_1)\over 4(m-2)} \ge n_1$;
\item ${4(m-2)\over n_3} \le {\delta \over 5}$;
\item $128(m-2)^3 \exp\left(-{(n_3-n_2) \over 128(m-2)^2}\right)\le {\delta \over 5}$ and ${ (n_3-n_2)\over 8(m-2)} \ge 160000 \log n_3$;
\item $n_2(1-p(n_1))^{(m-2)/2} +  {n_3 - n_2 \over 2^{(m-2)/2}} \le {\delta \over 5}$.
\end{itemize}

The first one means that we need $m \ge {25 \over 8} \log{5\over \delta} + 2$. 

The second one means that we need $n_2 \ge {32(m-2)^2\over (p(n_1))^2} \log{160(m-2)^3 \over (p(n_1))^2\delta}$ and $n_2 \ge {4(m-2)n_1 \over p(n_1)}$.

The third one means that we need $n_3 \ge {20(m-2)\over \delta} $.

The fourth one means that we need $n_3 \ge 128(m-2)^2 \log{640(m-2)^3 \over \delta} + n_2$, $n_3\ge 2n_2$ and $n_3 \ge 5120000(m-2)\log\left(5120000(m-2)\right)$.

The last one means that we need $n_2 \le {\delta \over 10 (1-p(n_1))^{(m-2)/2}}$ and $n_3 \le n_2 + {2^{(m-2)/2} \delta \over 10}$.

Notice that the only constraint on $m$ is $m \ge {25 \over 8} \log{5\over \delta} + 2$, which is always true when $m \to \infty$. For the constraints on $n_2$ or $n_3$, the upper bounds of them (the last one) grow exponentially as $m$ grows up (recall that $n_1,p(n_1)$ are constants), and the lower bounds (the second, third and fourth ones) grow polynomially as $m$ grows up. Thus, for any $\delta > 0$, there must exist a large enough $m$ and corresponding $n_2,n_3$ such that all these inequalities holds, which means $\Pr[\mathcal{K}_1(n_1) \land \mathcal{K}_2(n_2) \land \mathcal{L}_1(n_3) \land \mathcal{L}_2(n_3) \land \mathcal{L}_3(n_3)] \ge 1 - \delta$.
\end{proof}

Now we consider the approximation regret when all the five events $\mathcal{K}_1(n_1), \mathcal{K}_2(n_2)$, $\mathcal{L}_1(n_3), \mathcal{L}_2(n_3), \mathcal{L}_3(n_3)$ happen. Note that $\mathcal{L}_1(n_3)\land \mathcal{L}_2(n_3)$ means that for any $t > n_3$, $i \in A_1(n_1)$, we have that $\theta_i(t) \in [0.49, 0.51]$. 
Thus arms $i \in A_1(n_1)$ are always feasible after time step $n_3$, which means that we will never choose arm $1$ for $t > n_3$. Moreover, $\mathcal{L}_3(n_3)$ implies that $N_1(n_3) = 0$, which means that we can never pull arm $1$, and its prior distribution remains to be the uniform distribution on $[0,1]$.

When $N_m(t) < \lceil 200 \log 4 \rceil$, we know that there is always a constant probability (even though it is small) that $\theta_m(t) > 0.5$. Since $\theta_m(t) > 0.5$ means that $m$ is feasible, we know that arm $i$ is pulled with a constant probability. Because of this, after a finite number of time steps (at time step $n_4 > n_3$), we can expect that $N_m(t) = \lceil 200 \log 4 \rceil$. After this time slot, we know that 
\begin{eqnarray}
\nonumber&&\Pr[\theta_m(t) \ge 0.25]\\
 \label{Eq_1999}&\ge& 1 - \Pr[\mu_m \ge \hat{\mu}_m(t) + 0.05] - \Pr[\hat{\mu}_m(t) \ge \theta_m(t) + 0.05]\\
\label{Eq_1998}&\ge&1 - \exp(-0.05^2 \cdot 2\lceil 200 \log 4 \rceil )- \exp(-0.05^2 \cdot 2\lceil 200 \log 4 \rceil)\\
\nonumber&\ge& {1\over 2}.
\end{eqnarray}
Eq. \eqref{Eq_1999} is because that $\theta_m(t) < 0.25$ means either $\mu_m \ge \hat{\mu}_m(t) + 0.05$ or $\hat{\mu}_m(t) \ge \theta_m(t) + 0.05$ (recall that $\mu_m = 0.35$), 
and Eq. \eqref{Eq_1998} comes from Chernoff-Hoeffding inequality (Fact \ref{Fact_Chernoff}) and Lemma \ref{Fact_M15}, whose proofs are similar as before.

Since $\theta_1(t)$ follows the uniform distribution on $[0,1]$, and in each time slot all the samples are independent, we have 
\begin{equation*}
\Pr[\theta_m(t) > 0.25, \theta_1(t) < 0.5] \ge {1\over 4}.
\end{equation*}

Then we know that for any $\epsilon > 0$, there exists a $T^{(0)}$ such that for any $T > T^{(0)}$, with probability at least $1-\epsilon$, there are $0.99 \cdot {(T-n_4)\over 4}$ number of times that $\theta_m(t) > 0.25$ and $\theta_1(t) < 0.5$ within period $(n_4,T]$.

Now we define event $\mathcal{U}(t)$ as follows
\begin{equation*}
\mathcal{U}(t) = \{ \theta_m(t) > 0.25, \theta_1(t) < 0.5, \theta_m(t) < 0.49 \max_i \theta_i(t)\},
\end{equation*}
i.e., although $\theta_m(t) > 0.25, \theta_1(t) < 0.5$, arm $m$ is not feasible at time step $t$.

Define $t_1(T) \triangleq \sum_{t=n_4+1}^T \I[\mathcal{U}(t)]$
, then we have the following lemma.

\begin{lemma}\label{Lemma_788}
For any $\epsilon > 0$, there exists an $T^{(1)}$ such that for any $T > T^{(1)}$, with probability at least $1 - \epsilon$, $t_1(T) < 0.99 \cdot {(T-n_4)\over 20}$.
\end{lemma}

\begin{proof} 
%
Let $A_2(t)$ denote the set of arms $i \in [2,m-1]$ with $\sqrt{4\log t \over N_i(t)} \ge 0.005$ at time $t$, and $\mathcal{V}(t)$ denote the event that we pull an arm $i \in A_2(t)$, i.e.,
\begin{equation*}
\mathcal{V}(t) = \{ i(t) \in A_2(t)\},
\end{equation*}
where $i(t)$ is the pulled arm at time slot $t$. Define $t_2(T) \triangleq \sum_{t=n_4+1}^T \I[\mathcal{V}(t)]$ as the number of time steps that we pull an arm $i$ in set $A_2(t)$ during time period $(n_4, T]$.

Notice that when an arm $i$ is pulled for at least $160000 \log T$ number of times, then $\sqrt{4\log t \over N_i(t)} \le \sqrt{4\log T \over 160000 \log T} = 0.005$, which implies that $i \notin A_2(T)$. Because of this, $t_2(T) \le 160000(m-2) \log T$.

Now we consider the probability $\Pr[\mathcal{V}(t)  | \mathcal{U}(t) ]$. Under event $\mathcal{L}_1(n_3)$, all the actions $i \in [2,m-1] \setminus A_2(t)$  satisfy that $\theta_i(t) \le 0.51$. Since $0.51 \times 0.49 \le 0.25$, they are not able to prevent arm $m$ from being feasible. 
Similarly, $\theta_1(t) < 0.5$ means that arm $1$ is not able to prevent arm $m$ from being  feasible too.
Because of this, there must exist $j \in A_2(t)$ such that $\theta_j(t) \ge 0.51$, which means arm $j$ must be feasible. Along with the fact that $m$ is not feasible, we have probability of at least ${1\over m-2}$ to pull an arm $j \in A_2(t)$, i.e.,
\begin{equation*}
\Pr[\mathcal{V}(t)  | \mathcal{U}(t)] \ge {1\over m-2}.
\end{equation*}

Thus
\begin{eqnarray}
\nonumber\E[t_2(T)] &=& \sum_{t=n_4+1}^{T} \Pr[\mathcal{V}(t)]\\
\nonumber&=&\sum_{t=n_4+1}^{T} \Pr[\mathcal{V}(t) \land  \mathcal{U}(t)] + \sum_{t=n_4+1}^{T} \nonumber\Pr[\mathcal{V}(t) \land \neg \mathcal{U}(t)]\\
\nonumber&\ge &\sum_{t=n_4+1}^{T} \Pr[\mathcal{V}(t) \land  \mathcal{U}(t)]\\
\nonumber&=& \sum_{t=n_4+1}^{T} \Pr[\mathcal{V}(t) | \mathcal{U}(t)] \Pr[\mathcal{U}(t)]\\
\label{Eq_6666}&\ge&\sum_{t=n_4+1}^{T}{1\over m-2}\Pr[\mathcal{U}(t)]\\
\nonumber&=&{1\over m-2}\sum_{t=n_4+1}^{T}\Pr[\mathcal{U}(t)]\\
\nonumber&=&{1\over m-2}\E[t_1(T)],
\end{eqnarray}
where Eq. \eqref{Eq_6666} is because that $\Pr[\mathcal{V}(t)  | \mathcal{U}(t)] \ge {1\over m-2}$.

Since $t_2(T) \le 160000(m-2) \log T$, we have $\E[t_2(T)] \le 160000(m-2) \log T$, which implies that
\begin{equation*}
\E[t_1(T)] \le (m-2)\E[t_2(T)]\le 160000(m-2)^2 \log T.
\end{equation*}

By Markov inequality (Fact \ref{Fact_Markov}), we have that
\begin{equation*}
\Pr\left[t_1(T) \ge 0.99 \cdot {(T-n_4) \over 20}\right] \le {\E[t_1(T)] \over 0.99 \cdot {(T-n_4) \over 20}} \le {3200000(m-2)^2\log T \over 0.99 (T-n_4)}.
\end{equation*}

Thus, for any $\epsilon > 0$, there must exist a $T^{(1)}$ such that for any $T > T^{(1)}$, $\Pr[t_1(T) < 0.99 \cdot {(T-n_4)\over 20}] \ge 1 - \epsilon$.
\end{proof}

By Lemma \ref{Lemma_788}, we know that for any $T > \max\{T^{(0)}, T^{(1)}\}$, with probability at least $1 - 2\epsilon$, there are $0.99 \cdot {(T-n_4)\over 4} - 0.99 \cdot {(T-n_4)\over 20} = 0.99 \cdot {(T-n_4)\over 5}$ number of time steps such that arm $m$ is feasible (i.e., $\theta_m(t) \ge 0.49 \max_i \theta_i(t)$) during period $(n_4, T]$. Note that arm $m$ is pulled with probability ${1\over 2}$ when it is feasible. Then by Chernoff-Hoeffding inequality (Fact \ref{Fact_Chernoff}), we know that for any $\epsilon > 0$, there must exist $T^{(2)} > \max\{T^{(0)}, T^{(1)}\} $ such that for any $T > T^{(2)}$, with probability at least $1-3\epsilon$, there are $0.98 \cdot {(T-n_4)\over 10}$ of pulls on arm $m$ during period $(n_4, T]$.

Because of this, the expected number of times that we pull arm $m$ during period $(n_4, T]$ is at least $(1-3\epsilon) \cdot 0.98 \cdot {(T-n_4)\over 10}$ for $T > T^{(2)}$. Let $T^{(3)} = 11n_4$, and choosing $\epsilon = 0.01$. Then for $T > \max\{T^{(2)}, T^{(3)}\}$, $(1-3\epsilon) \cdot 0.98 \cdot {(T-n_4)\over 10} \ge {0.95T\over 11}$.

Thus, when all the five events $\mathcal{K}_1(n_1), \mathcal{K}_2(n_2)$, $\mathcal{L}_1(n_3), \mathcal{L}_2(n_3), \mathcal{L}_3(n_3)$ happen, there are at least $ {0.95T\over 11}$ pulls on arm $m$ in expectation (for time horizon $T > \max\{T^{(2)}, T^{(3)}\}$), while all the other pulls should be in arm set $[2,m-1]$. Recall that the approximation regret of arm $m$ is $0.14$, and the approximation regret of arm $i\in [2,m-1]$ is $-0.01$, we know that the approximation regret until $T$ is lower bounded by 
\begin{equation*}
\left({0.95T\over 11} \times 0.14 + {10.05T\over 11} \times (-0.01)\right) = {325T\over 110000}.
\end{equation*}



Based on Lemma \ref{Lemma_777}, for any $\delta > 0$, there exists a problem instance such that applying Algorithm \ref{Algorithm_TS} on this instance will cause an approximation regret of at least
\begin{equation*}
{325T\over 110000}(1-\delta) + (-T)\delta = {325-110325\delta \over 110000}T.
\end{equation*}

Choosing $\delta = 0.001$, we know that the approximation regret is $\Theta(T)$.

%
%
%
%




\subsection{The Exponential Constant Term} \label{sec:exponentialconstant}

In this subsection, we show that the exponential constant regret term is unavoidable for applying Thompson sampling in CMAB model. 

Since every arm's sample $\theta_i(t)$ is chosen independently, the worst case is that we need all the samples for base arms in the best super arm to be close to their true means to choose that super arm. Under this case, the probability that we have no regret in each time slot is exponentially (small) with $k^*$, thus we will have such a constant term.
\begin{restatable}{theorem}{Theoremtwo}\label{Theorem_TS_EXP}
	There exists a CMAB instance such that the regret of Algorithm \ref{Algorithm_TS}
		on this instance is at least $\Omega(\min\{2^{k^*}, T\})$.
\end{restatable}

\begin{proof}
We analyze the regret of Algorithm \ref{Algorithm_TS} on the following CMAB instance:

\begin{restatable}{problem}{Problem}\label{Problem1}
	$m = k^* + 1$, there are only two super arms in $\mathcal{I}$, where $S_1 = \{1,2,\cdots,k^*\}$ and $S_2 = \{k^*+1\}$. The mean vector $\bm{\mu}_{S_1} = [1,\cdots,1]$. The reward function $R$ follows $R(S_1,\bm{X}) = \prod_{i\in S_1} X_i$, and $R(S_2,\bm{X}) = 1-\Delta$, while $\Delta = 0.5$.
	The distributions $D_i$ are all independent Bernoulli distributions with mean $\mu_i$ (since $\mu_i = 1$, the observations are always 1).
\end{restatable}


We can use $T_1$ to represent the first time step that we choose to pull super arm $S_1$, and the regret is at least $\Delta\min \{\E[T_1] - 1, T\}$. Thus it is sufficient to prove that $\E[T_1] = \Omega(2^{k^*})$.

Firstly, we know from $R(S_1,\bm{X}) = \prod_{i\in S_1} X_i$ that $r(S_1,\bm{\mu}) = \prod_{i\in S_1} \mu_i$.

Notice that $R(S_2,\bm{\theta}(t))$ is always 0.5, so we only need to consider the probability that $r(S_1,\bm{\theta}(t)) \ge 0.5$.

When there are no observations on base arms $1,\cdots,k^*$, the prior distributions for them all have mean $0.5$, thus $\E[r(S_1,\bm{\theta}(t))] = {1\over 2^{k^*}}$. By Markov inequality (Fact \ref{Fact_Markov}), $\Pr[r(S_1,\bm{\theta}(t)) \ge 0.5] \le {1\over 0.5 \times 2^{k^*}}$, which implies
\begin{equation*}
  \E[T_1] \ge {1\over 2} \times 2^{k^*} = \Omega(2^{k^*}).
\end{equation*}
\end{proof}


The exponential term comes from the bad prior distribution at the beginning of the algorithm. In fact, from the proof of Theorem \ref{theorem_1} we know that if we can pull each base arm for $\tilde{O}({1\over \varepsilon^2})$ times at the beginning and run the CTS policy with prior distribution at the beginning to be the posterior distribution after those observations, then we can reduce the exponential constant term to $O({m\over \epsilon^4})$. However, since $\varepsilon$ depends on $\Delta_{\min}$, which is unknown to the player, we can not simply run each base arm for a few time steps to avoid the exponential constant regret term. 
Perhaps an adaptive choice can be used here, and this is a further research item.

\section{Matroid Bandit Case}


In matroid bandit, we suppose the oracle we use is the greedy one, i.e., given a parameter vector $\bm{\theta}$, it starts from an empty set $S$, and then in each step $k$ adds the arm $i^{(k)} = \argmax_{i\in [m]\setminus S, S\cup \{i\} \in \mathcal{I}} \theta_i$ into $S$, finally stops when no base arm can be included.
Existing results show that this greedy policy is an exact offline oracle \cite{edmonds1971matroids}.
	%




\subsection{Regret Upper Bound}
Let $S^* \in \argmax_{S\in \mathcal{I}} r(S,\bm{\mu})$
be one of the optimal super arm.
Define  $\Delta_i = \min_{j \mid j\in S^*, \mu_j > \mu_i} \mu_j - \mu_i$. If $i\notin S^*$ but $\{j \mid j\in S^*, \mu_j > \mu_i\} = \emptyset$, we define $\Delta_i = \infty$, so that ${1\over \Delta_i} = 0$.
Let $K = \max_{S\in {\cal I}} |S| = |S^*|$.

Notice that the reward function is linear in matroid bandit case. Thus Assumption~\ref{Assumption2} holds with $r(S,\bm{\mu}) = \sum_{i\in S} \mu_i$, and Assumption~\ref{Assumption_Continouos1} holds with $B=1$ naturally. Besides, in matroid bandit we do not need the independent assumption (Assumption \ref{Assumption_IND}) as well. This means that there are no further constraints on the following theorem, which states the regret upper bound of CTS algorithm under the matroid bandit case:


\begin{restatable}{theorem}{TheoremFive}\label{Theorem_TS_MB}
The regret upper bound of Algorithm \ref{Algorithm_TS} for a matroid bandit is:
  \begin{equation*}
    Reg(T) \le \sum_{i\notin S^*}{4\log T\over \Delta_i - 2\varepsilon}{\Delta_i - \varepsilon \over \Delta_i - 2\varepsilon} +
	    \alpha_2 \cdot \left({m\over \varepsilon^4} \right)+ m^2
  \end{equation*}
  for any $\varepsilon > 0$ such that $\forall i \notin S^*$ with $\Delta_i > 0$, $\Delta_i - 2\varepsilon > 0$, where $\alpha_2$ is a constant not dependent
  on the problem instance.
\end{restatable}







Notice that we do not need the outcome distributions of all the base arms to be independent due to the special structure of matroid. When $\varepsilon$ is small, the leading $\log T$ term of the above regret bound
	matches the regret lower bound $\sum_{i\notin S^*} {1\over \Delta_i}\log T$ given by \cite{KWAEE14}.
For the constant term, we have an $O({1\over \varepsilon^4})$ factor while \cite{agrawal2013further} have an $O({1\over \varepsilon^2})$ factor
	in their theorem.
However, even following their analysis, we can only obtain $O({1\over \varepsilon^4})$ and cannot recover the $O({1\over \varepsilon^2})$
	in their analysis.

%

We now explain the reason that we can remove the independence assumption in matroid bandit setting. The detailed proof is referred to \ref{Section_A3}, since it is almost the same as our conference paper.


Firstly, we introduce a fact from \cite{KWAEE14}.

\begin{restatable}{fact}{FactKWAEE}(Lemma $1$ in \cite{KWAEE14})\label{Fact_Bijection_zheng}
  For each $S(t) = \{i^{(1)}(t),\cdots,i^{(K)}(t)\}$ chosen by Algorithm \ref{Algorithm_TS} (the superscript is the order when they are chosen by the greedy policy), we could find a
  bijection $L_t$ from $\{1,2,\ldots, K\}$ to $S^*$ such that:

  1) If $i^{(k)}(t) \in S^*$, then $L_t(k) = i^{(k)}(t)$;

  2) $\forall 1 \le k \le K$, $\{i^{(1)}(t),\cdots,i^{(k-1)}(t),L_t(k)\} \in \mathcal{I}$.
\end{restatable}

With a bijection $L_t$, we could decouple the regret of playing one action $S(t)$ to each pair of mapped arms between
	$S(t)$ and $S^*$, i.e., the regret of time $t$ is $\sum_{k=1}^K \mu_{L_t(k)} - \mu_{i^{(k)}(t)}$.

Note that under this decoupling method, if $i^{(k)}(t) = i$ and $L_t(k) = j$, then both arm $i$ and arm $j$ must be feasible when the greedy policy needs to add the $k$-th base arm into $S(t)$, i.e., we will choose arm $i$ as $i^{(k)}(t)$ only when $\theta_i(t) \ge \theta_j(t)$. This is almost the same as the classic MAB setting, where we will choose sub-optimal arm $i$ only when $\theta_i(t) \ge \theta_1(t)$ (assume arm 1 is optimal).  Fact \ref{Fact_Bijection_zheng} enable us to compare two base arms instead of two super arms in one step of the greedy policy. Hence we do not need the independence assumption.


\section{Experiments}\label{Section_e}

We conduct some preliminary experiments to empirically evaluate the performance of CTS versus CUCB, C-KL-UCB and ESCB. The reason that we choose C-KL-UCB is that: a) in classical MAB model, KL-UCB behaves better than UCB; b) similar with TS, it is also a policy based on Bayes' rule. 
We also make simulations on CUCB and C-KL-UCB with chosen parameters, represented by CUCB-m and C-KL-UCB-m. 
In CUCB, we choose the confidence radius to be $\rad_i(t) = \sqrt{3\log t\over 2N_i(t)}$, while in CUCB-m, it is $\sqrt{\log t\over 2N_i(t)}$. 
In C-KL-UCB we choose $f(t) = \log t + 2\log \log t$ (and the KL-upper confidence bound is given by $\argmax_{q} N_i(t)KL(\hat{\mu}_i(t), q) \le f(t)$), while in C-KL-UCB-m it is $\log t$.
 Those chosen parameters in CUCB-m and C-KL-UCB-m make them behave better, but lack performance analysis.

\subsection{Matroid Bandit}

It is well known that spanning trees form a matroid.
Thus, we test the maximum spanning tree problem as an example of matroid bandits, where edges are arms, and
	super arms are forests.

We first generate a random graph with $M$ nodes, and each pair of nodes has an edge with probability $p$.
If the resulting graph has no spanning tree, we regenerate the graph again.
The mean of the distribution $D_i$ is randomly and uniformly chosen from $[0,1]$.
The expected reward for any spanning tree is the sum of the means of all edges in it.
It is easy to see that this setting is an instance of the matroid bandit.

The results are shown in Figure \ref{Figure_tree} with the probability $p = 0.6$ and $M = 30$.
In Figure \ref{fig:subfig:a}, we set all the arms to have independent distributions. In Figure \ref{fig:subfig:b}, each time slot we generate a global random variable $rand$ uniformly in $[0,1]$, all edges with mean larger than $rand$ will have outcome $1$, while others have outcome $0$. In other words, the distributions of base arms are correlated.
We can see that CTS has smaller regret than CUCB, CUCB-m and C-KL-UCB in both two experiments. As for C-KL-UCB-m algorithm, it behaves better with small $T$, but loses when $T$ is very large. We emphasize that C-KL-UCB-m policy uses parameters without theoretical guarantee, thus CTS algorithm is a better choice.



\begin{figure} 
\centering 
\subfigure[]{ \label{fig:subfig:a} 
\includegraphics[width=2.0in]{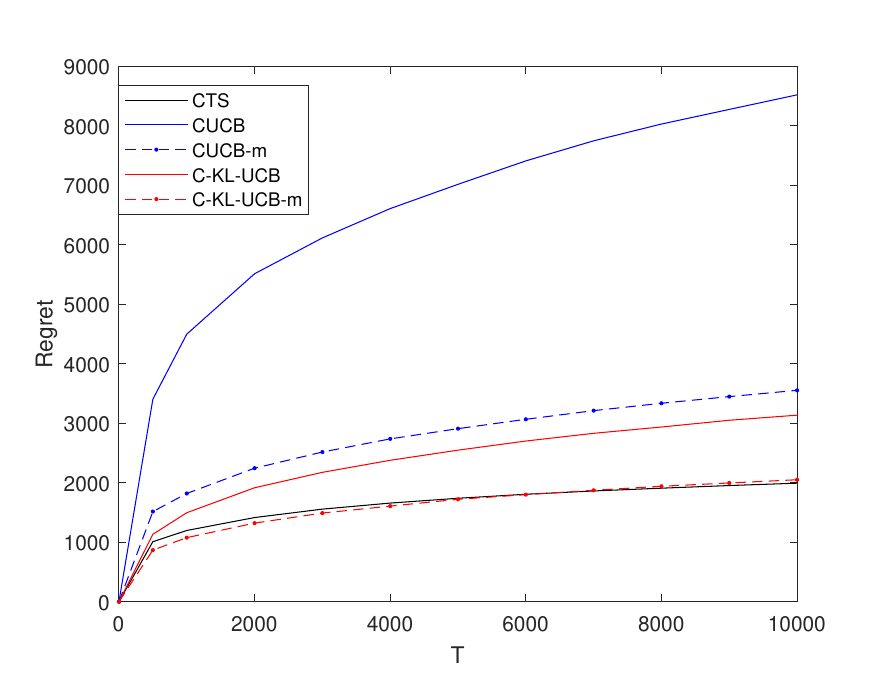}}
 \subfigure[]{ \label{fig:subfig:b} 
\includegraphics[width=2.0in]{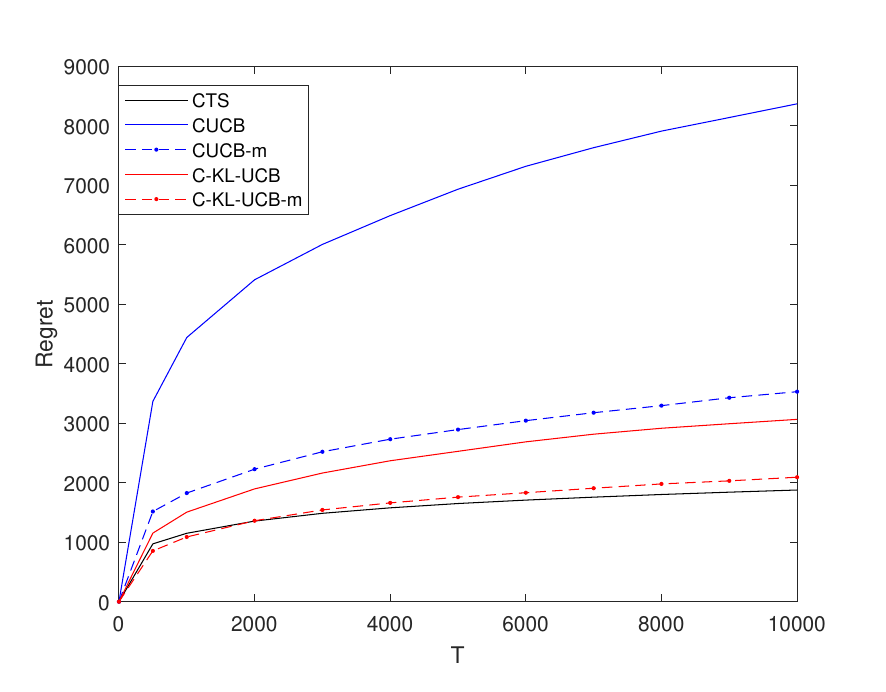}}
 \caption{Experiments on matroid bandit: Maximum Spanning Tree} \label{Figure_tree}  
\end{figure}

\subsection{General CMAB with Linear Reward Function}


In the general CMAB case, we consider two kinds of problem.

\subsubsection{The Shortest Path}

We first consider the shortest path problem. We build two graphs 
 for this experiment, the results of them are shown in Figure \ref{fig:subfig:c} and Figure \ref{fig:subfig:d}. 
The cost of a path is the sum of all edges' mean in that path, while the outcome of each edge $e$ follows an independent Bernoulli distribution with mean $\mu_e$.
The objective is to find the path with minimum cost.
To make the problem more challenging, in both graphs we construct a lot of paths from the source node $s$ to
	the sink node $t$ that only have a little larger cost than the optimal one, and some of them are totally disjoint with the optimal path.


Similar to the case of matroid bandit,
the regret of CTS is also much smaller than that of CUCB, CUCB-m and C-KL-UCB, especially when $T$ is large. As for the C-KL-UCB-m algorithm, although it behaves best in the four UCB-based policies, it still has a large difference between CTS.


\begin{figure} 
\centering 
\subfigure[]{ \label{fig:subfig:c} 
\includegraphics[width=2.0in]{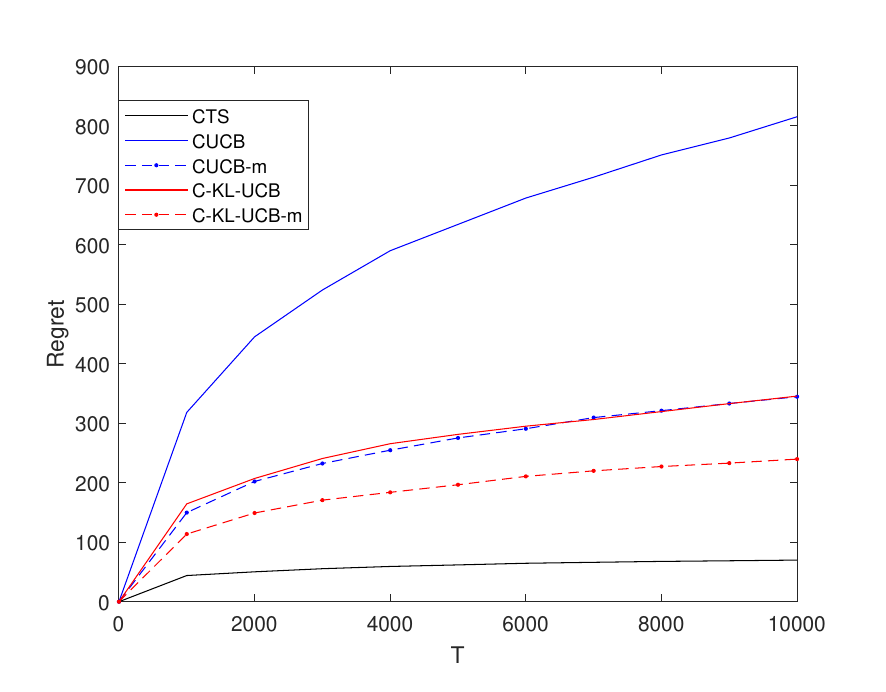}}
 \subfigure[]{ \label{fig:subfig:d} 
\includegraphics[width=2.0in]{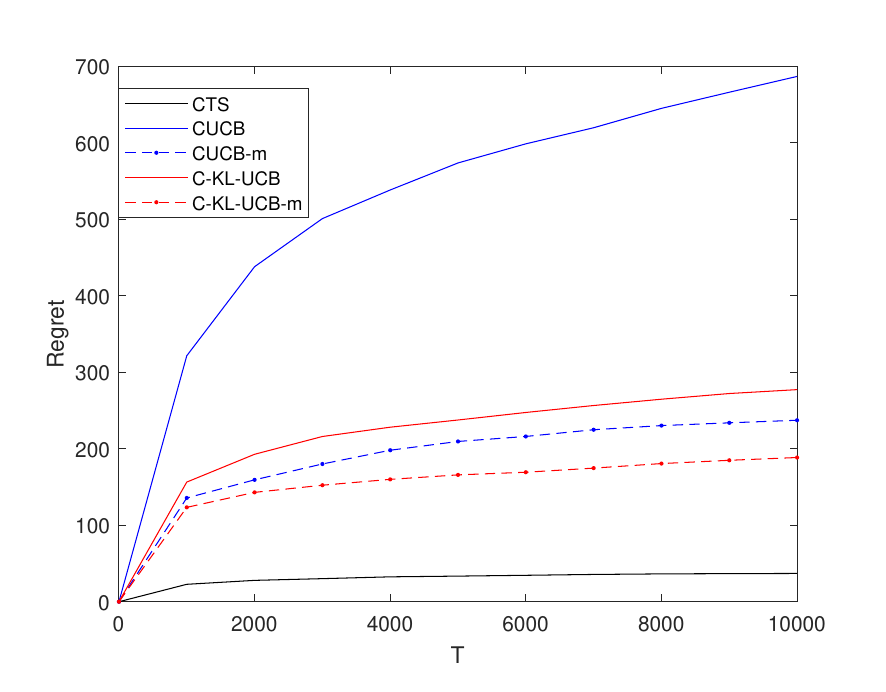}}
 \caption{Experiments on general CMAB: Shortest Path} \label{Figure_path}  
\end{figure}

\subsubsection{Compare with ESCB}

Since ESCB policy needs to compute the upper confidence bounds for every super arm, its time complexity is too large when applying in the shortest path problem. Thus in this section, we do not use such a combinatorial structure. In this experiment, there are totally 20 base arms, and each super arm contains 5 different base arms. 
In Figure \ref{fig:subfig:e}, there are totally 10 available super arms, and in Figure \ref{fig:subfig:f}, there are 100 available super arms. 

We compare the CTS policy with the two kinds of ESCB policies in \cite{Combes2015}, and we can see that CTS behaves better than both the ESCB policies, while all of them are better than CUCB. 

\begin{figure} 
\centering 
\subfigure[]{ \label{fig:subfig:e} 
\includegraphics[width=2.0in]{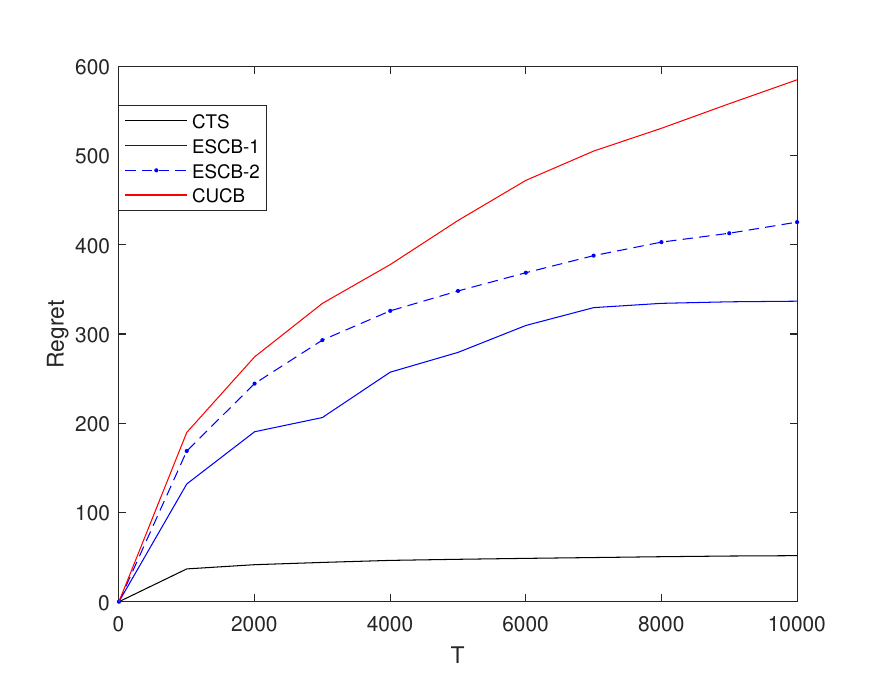}}
 \subfigure[]{ \label{fig:subfig:f} 
\includegraphics[width=2.0in]{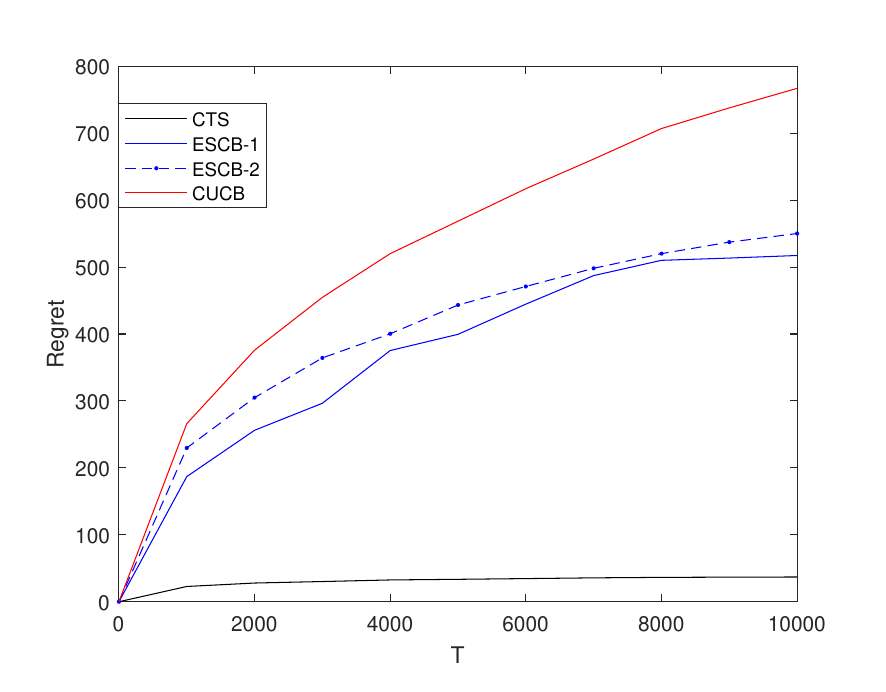}}
 \caption{Experiments on general CMAB: Shortest Path} \label{Figure_path}  
\end{figure}

\section{Conclusion and Future Work}

In this paper, we apply combinatorial Thompson sampling to combinatorial multi-armed bandit and matroid bandit problems,
	and obtain theoretical regret upper bounds for these settings.

There are still a number of interesting questions that may worth further investigation. For example, pulling each base arm for a number of time slots at the beginning of the game can decease the constant term to non-exponential, but the point is that the player does not know how many time slots are enough. Thus how can we use an adaptive policy or some further assumptions to do so is a good question. In this paper, we suppose that all the distributions for base arms are independent. Another question is how to find analysis for using CTS under correlated arm distributions.


\bibliography{sample}

\appendix

\section{Proof of Lemmas in Section \ref{Section_Proof_T1}} \label{sec:proofTheorem1}



\subsection{Proof of Lemma \ref{Lemma45}}

{\KeyLemma*}

\begin{proof}
%
  Firstly, consider the case that we choose $Z = S^*$, i.e. we change $\bm{\theta}_{S^*}(t)$ to some $\bm{\theta'}_{S^*}$ with $||\bm{\theta'}_{S^*}- \bm{\mu}_{S^*}||_{\infty} \le \varepsilon$ and get a new vector $\bm{\theta'} = (\bm{\theta'}_{S^*}, \bm{\theta}_{{S^*}^c}(t))$.
  We claim that for any $S'$ such that $S' \cap S^* = \emptyset$, ${\sf Oracle}(\bm{\theta'}) \ne S'$. This is because
  \begin{eqnarray}
     \label{eq:z111} r(S',\bm{\theta'}) &=& r(S',\bm{\theta}(t)) \\
    \label{eq:z112} &\le& r(S(t),\bm{\theta}(t))\\
    \label{eq:z113} &\le& r(S(t),\bm{\mu}) + B({\Delta_{S(t)}\over B} - ({k^*}^2+1)\varepsilon)  \\
    \label{eq:z114} &\le& r(S^*,\bm{\mu}) - B({k^*}^2+1)\varepsilon\\
    \label{eq:00991}  &<& r(S^*,\bm{\mu}) - Bk^*\varepsilon \\
    \label{eq:z116}&\le& r(S^*,\bm{\theta'}).
  \end{eqnarray}
Eq.~\eqref{eq:z111} is because $\bm{\theta'}$ and $\bm{\theta}(t)$ only differs on arms in $S^*$ but $S' \cap S^* = \emptyset$.
Eq.~\eqref{eq:z112} is by the optimality of $S(t)$ on input $\bm{\theta}(t)$.
Eq.~\eqref{eq:z113} is by the event $\neg \mathcal{C}(t)$ and Lipschitz continuity (Assumption~\ref{Assumption_Continouos1}).
Eq.~\eqref{eq:z114} is by the definition of $\Delta_{S(t)}$.
Eq.~\eqref{eq:z116} again uses the Lipschitz continuity.
Thus, the claim holds.

So we have two possibilities for ${\sf Oracle}(\bm{\theta'})$: 1a) for all $\bm{\theta'}_{S^*}$ with $||\bm{\theta'}_{S^*}- \bm{\mu}_{S^*}||_{\infty} \le \varepsilon$,
		  $S^* \subseteq {\sf Oracle}(\bm{\theta'})$;
	  1b) for some $\bm{\theta'}_{S^*}$ with
	  $||\bm{\theta'}_{S^*}- \bm{\mu}_{S^*}||_{\infty} \le \varepsilon$,
	  ${\sf Oracle}(\bm{\theta'}) = S_1$ where $S_1 \cap S^* = Z_1$ and $Z_1 \ne S^*$, $Z_1 \ne \emptyset$.

  In 1a), let $S_0= {\sf Oracle}(\bm{\theta'})$.
  Then we have $r(S_0,\bm{\theta'}) \ge r(S^*,\bm{\theta'}) \ge r(S^*,\bm{\mu}) - Bk^*\varepsilon$.
  If $S_0 \not\in {\sf OPT}$, then we have $r(S^*,\bm{\mu}) = r(S_0,\bm{\mu}) + \Delta_{S_0}$.
  Together we have $r(S_0,\bm{\theta'}) \ge r(S_0,\bm{\mu}) + \Delta_{S_0} - Bk^*\varepsilon$.
  By the Lipschitz continuity assumption,
  this implies that $||\bm{\theta}_{S_0}' - \bm{\mu}_{S_0}||_1 \ge {\Delta_{S_0}\over B} - k^*\varepsilon > {\Delta_{S_0}\over B} - ({k^*}^2+1)\varepsilon $.
  That is, we conclude that either $S_0 \in {\sf OPT}$ or $||\bm{\theta}_{S_0}' - \bm{\mu}_{S_0}||_1 \ge {\Delta_{S_0}\over B} - k^*\varepsilon > {\Delta_{S_0}\over B} - ({k^*}^2+1)\varepsilon $, which means that $\mathcal{E}_{S^*,1}(\bm{\theta'}(t)) = \mathcal{E}_{S^*,1}(\bm{\theta}(t))$ holds.

%
   Then we consider 1b). Fix a $\bm{\theta'}_{S^*}$ with $||\bm{\theta'}_{S^*}- \bm{\mu}_{S^*}||_{\infty} \le \varepsilon$.
  Let $S_1 = {\sf Oracle}(\bm{\theta'})$ which does not equals to $S^*$.  Then $r(S_1,\bm{\theta'}) \ge r(S^*,\bm{\theta'}) \ge r(S^*,\bm{\mu}) - Bk^*\varepsilon$.  
%
%

Now we try to choose $Z = Z_1$. For all $\bm{\theta'}_{Z_1}$ with $||\bm{\theta'}_{Z_1}- \bm{\mu}_{Z_1}||_{\infty} \le \varepsilon$, consider
	$\bm{\theta'} = (\bm{\theta'}_{Z_1},\bm{\theta}_{{Z_1}^c}(t))$.
We see that $\sum_{i \in Z_1} |\theta'_{S^*} (i) -  \theta'_{Z_1}(i)|\le 2(k^*-1)\varepsilon$, where $\theta'_{S^*} (i)$ represents the $i$-th term in vector $\bm{\theta'}_{S^*}$
Thus $r(S_1,\bm{\theta'}) \ge r(S^*,\bm{\mu}) - Bk^*\varepsilon - 2B(k^*-1)\varepsilon = r(S^*,\bm{\mu}) - B(3k^*-2)\varepsilon$.
Similarly, we have the following inequalities for any $S' \cap Z_1 = \emptyset$:
  \begin{eqnarray}
    \nonumber  r(S',\bm{\theta'}) &=& r(S',\bm{\theta}(t)) \\
    \nonumber &\le& r(S(t),\bm{\theta}(t))\\
    \nonumber &\le& r(S(t),\bm{\mu}) + B({\Delta_{S(t)}\over B} - ({k^*}^2+1)\varepsilon)  \\
    \label{eq:00993} &\le& r(S^*,\bm{\mu}) - B({k^*}^2+1)\varepsilon\\
    \label{eq:00992} &<& r(S^*,\bm{\mu}) - B(3k^*-2)\varepsilon \\
    \nonumber&\le& r(S_1,\bm{\theta'}).
  \end{eqnarray}
  That is, ${\sf Oracle}(\bm{\theta'}) \cap Z_1 \ne \emptyset$.
 Thus we will also have two possibilities for ${\sf Oracle}(\bm{\theta'})$: 2a) for all $\bm{\theta'}_{Z_1}$ with
	  $||\bm{\theta'}_{Z_1}- \bm{\mu}_{Z_1}||_{\infty} \le \varepsilon$, $Z_1 \subseteq {\sf Oracle}(\bm{\theta'})$;
	2b) for some $\bm{\theta'}_{Z_1}$ with $||\bm{\theta'}_{Z_1}- \bm{\mu}_{Z_1}||_{\infty} \le \varepsilon$, ${\sf Oracle}(\bm{\theta'}) = S_2$ where $S_2 \cap Z_1= Z_2$ and $Z_2 \ne Z_1$, $Z_2 \ne \emptyset$.

We could repeat the above argument and each time the size of $Z_i$ is decreased by at least $1$. 
In the first step, the terms contain $\varepsilon$ (in Eq. \eqref{eq:00991}) is $Bk^*\varepsilon$, and in the second step, the terms contain $\varepsilon$ (in Eq. \eqref{eq:00992}) becomes $Bk^*\varepsilon + 2B(k^* - 1)\varepsilon =  B(3k^*-2)\varepsilon$.
Thus, after at most $k^*-1$ steps, this term is at most $B(k^* + 2(k^*-1) + \cdots + 2\times 1)\varepsilon = B{k^*}^2\varepsilon$, which is still less than $B({k^*}^2+1)\varepsilon$ (in Eq. \eqref{eq:z114} or \eqref{eq:00993}). This means that the above analysis works for any steps in the induction procedure. When we  reach the end, we could find a $Z_i \subseteq S^*$ and $Z_i \ne \emptyset$ such that $\mathcal{E}_{Z_i,1}(\bm{\theta}(t))$
	occurs.
%
%
\end{proof}

\subsection{Proof of Lemma \ref{Lemma3}}

Before the proof of Lemma \ref{Lemma3}, we first show the following two lemmas.

\begin{lemma}\label{Lemma1}

Let $p_{i,q,\varepsilon}$ be the probability of $|\theta_i(t) - \mu_i| \le \varepsilon$ when there are $q$ observations of base arm $i$, then for $q > {8\over \varepsilon^2}$,
  \begin{equation*}
    \E\left[{1\over p_{i,q,\varepsilon}}\right] \le 1+ 6\alpha_1'{1\over \varepsilon^2}e^{-{\varepsilon^2 \over 2}q} + {2\over e^{{1\over 8}\varepsilon^2q} - 2},
  \end{equation*} where $\alpha_1'$ is a constant which does not depend on any parameters of the MAB model, i.e. $m,k,T,M,\bm{\mu}$, and the expectation is taken over all possible $q$ observations of base arm $i$.
\end{lemma}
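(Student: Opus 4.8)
The plan is to expose the two independent sources of randomness and reduce everything to estimates on a single Binomial count. Conditioning on the $q$ observations, which are i.i.d.\ Bernoulli$(\mu_i)$, the number of successes $Z$ is $\mathrm{Bin}(q,\mu_i)$, the empirical mean is $\hat\mu_i=Z/q$, and the posterior sample is $\theta_i\sim\beta(Z+1,\,q-Z+1)$. Hence $p_{i,q,\varepsilon}=p(Z):=\Pr_{\theta_i}[\,|\theta_i-\mu_i|\le\varepsilon\,]$ is a deterministic function of $Z$, and the target is $\E_Z[1/p(Z)]=\sum_z\Pr[Z=z]/p(z)$. I would first write $1/p(z)=1+(1-p(z))/p(z)$; since $\sum_z\Pr[Z=z]=1$, this peels off the leading constant $1$, leaving me to bound $\E_Z[(1-p(Z))/p(Z)]$ by the two exponentially small terms.

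Next I would split the support of $\hat\mu_i$ into a good band $|\hat\mu_i-\mu_i|\le\varepsilon/2$ and its complement. On the good band the posterior Beta has standard deviation $O(1/\sqrt q)$, and the hypothesis $q>8/\varepsilon^2$ is exactly what makes this small relative to the half-window slack $\varepsilon/2$ separating the posterior mean from each endpoint of $[\mu_i-\varepsilon,\mu_i+\varepsilon]$. Invoking the Beta--Binomial identity (Fact~\ref{Fact33}) together with Chernoff--Hoeffding and $KL(p,q)\ge 2(p-q)^2$, exactly as in the proof of Lemma~\ref{Fact_M5}, gives $1-p(z)\le 2e^{-\varepsilon^2 q/2}$ and in particular $p(z)\ge 1/2$ throughout the band. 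The good band therefore contributes at most an $O(e^{-\varepsilon^2q/2})$ amount to $\E_Z[(1-p(Z))/p(Z)]$, which after absorbing elementary constants is the term $6\alpha_1'\varepsilon^{-2}e^{-\varepsilon^2 q/2}$.

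The substantial work is the complementary region $|\hat\mu_i-\mu_i|>\varepsilon/2$, where $\Pr[Z=z]$ is exponentially small but $1/p(z)$ is exponentially large, so the product must be controlled rather than either factor alone. I would organize this region into shells of width $\Theta(\varepsilon)$ in $\hat\mu_i$, indexed by $k\ge1$. Using the Beta--Binomial identity again to write $p(z)$ as a difference of Binomial upper tails, one obtains a recursion relating $1/p$ at adjacent success counts in which each shell costs a multiplicative factor at most $2$, while the ratio of the Binomial pmfs across a shell contributes a decay of $e^{-\varepsilon^2 q/8}$. Summing the resulting geometric series $\sum_{k\ge1}2^k e^{-k\varepsilon^2 q/8}$, which converges precisely because $q>8/\varepsilon^2$ forces $2e^{-\varepsilon^2 q/8}<1$, yields exactly the closed form $\tfrac{2}{e^{\varepsilon^2 q/8}-2}$.

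The step I expect to be the main obstacle is the tail lower bound on $p(z)$ underlying that recursion: I must show that when the posterior mean lies a distance $\Theta(k\varepsilon)$ outside the window, the mass it still places inside the window shrinks by only a bounded factor per shell, so that $1/p(z)$ grows no faster than geometrically in $k$; a cruder estimate would make the series diverge. I also expect the good-band term to carry an extra $\varepsilon^{-2}$ factor relative to the one-sided estimate of \cite{agrawal2013further}, which is ultimately what inflates the matroid constant from $1/\varepsilon^2$ to the $1/\varepsilon^4$ reported after Theorem~\ref{Theorem_TS_MB} (once this bound is summed over the plays $q$). The remaining ingredients---the Binomial tail weights and the good-band concentration---reduce to the Chernoff estimates already recorded in Lemma~\ref{Lemma_M1} and Lemma~\ref{Fact_M5}.
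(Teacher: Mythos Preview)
Your decomposition into the good band $|\hat\mu_i-\mu_i|\le\varepsilon/2$ and its complement is the same split the paper uses, but you have the two error terms assigned to the wrong regions, and the tail argument you sketch does not go through.

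On the good band, your own estimates already yield $(1-p(z))/p(z)\le C\,e^{-c\varepsilon^2 q}$ with no $\varepsilon^{-2}$ factor anywhere; summed against the binomial weights this is precisely the source of the term $\tfrac{2}{e^{\varepsilon^2 q/8}-2}$, not of $6\alpha_1'\varepsilon^{-2}e^{-\varepsilon^2 q/2}$. The $\varepsilon^{-2}$ is not an ``elementary constant'' that can be absorbed; it arises only from the tails.

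For the complementary region, the claim that $1/p(z)$ grows by at most a bounded factor per $\varepsilon$-shell is false. When $\hat\mu_i$ sits $k\varepsilon$ outside the window and $q>8/\varepsilon^2$, the posterior standard deviation is $O(1/\sqrt q)\ll\varepsilon$, so shifting the posterior mean one further $\varepsilon$-shell away multiplies $1/p(z)$ by a factor of order $e^{\Theta(\varepsilon^2 q)}$, not by $2$. Your geometric series $\sum_{k\ge 1}2^k e^{-k\varepsilon^2 q/8}$ therefore has the wrong growth rate in the $1/p$ factor and the argument collapses. What actually controls the tails is a delicate cancellation between $f_{q,\mu_i}(\ell)$ and $p(\ell)$: the paper first reduces the two-sided $p(\ell)=F_{q+1,\mu_i-\varepsilon}(\ell)-F_{q+1,\mu_i+\varepsilon}(\ell)$ to the one-sided $F_{q+1,\mu_i-\varepsilon}(\ell)$ (using monotonicity of the ratio $F_{q+1,\mu_i-\varepsilon}(\ell)/F_{q+1,\mu_i+\varepsilon}(\ell)$ to show it is at least $3/2$ throughout the left tail), and then invokes the Agrawal--Goyal estimate
\[
\sum_{\ell=0}^{\lfloor(\mu_i-\varepsilon/2)q\rfloor}\frac{f_{q,\mu_i}(\ell)}{F_{q+1,\mu_i-\varepsilon}(\ell)}\ \le\ \alpha_1'\,\frac{1}{\varepsilon^2}\,e^{-\varepsilon^2 q/2}.
\]
This lemma is where both $\alpha_1'$ and the factor $\varepsilon^{-2}$ originate, and it is not a consequence of a shell recursion. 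The right tail is then handled by the symmetry $\ell\mapsto q-\ell$, $\mu_i\mapsto 1-\mu_i$, giving the matching $3\alpha_1'\varepsilon^{-2}e^{-\varepsilon^2 q/2}$ and hence the total $6\alpha_1'\varepsilon^{-2}e^{-\varepsilon^2 q/2}$.
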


\begin{proof} From the definition of $p_{i,q,\varepsilon}$ and the results in  \cite{agrawal2013further}, we know that
  \begin{equation*}
    \E\left[{1\over p_{i,q,\varepsilon}}\right] = \sum_{\ell=0}^q {f_{q,\mu_i}(\ell) \over F_{q+1,\mu_i -\varepsilon}(\ell) -  F_{q+1,\mu_i +\varepsilon}(\ell)},
  \end{equation*}
  where $F_{q,p}(\ell)$ and $f_{q,p}(\ell)$ are the cumulative distribution function and probability  distribution function of binomial distribution with parameters $q,p$. $f_{q,\mu_i}(\ell)$ is the probability that there are $\ell$ positive feedbacks for $i$ in totally $q$ observations, and $F_{q+1,\mu_i -\varepsilon}(\ell) -  F_{q+1,\mu_i +\varepsilon}(\ell)$ is the probability for $|\bm{\theta}_i(t) - \mu_i| \le \varepsilon$ under this Beta Distribution $\beta(\ell+1,q-\ell+1)$.

  To calculate that, we divide it into three parts: a) $\ell=0$ to $\ell=\lfloor (\mu_i - {\varepsilon \over 2})q\rfloor$; b) $\ell=\lceil(\mu_i - {\varepsilon \over 2})q\rceil$ to $\ell=\lfloor(\mu_i + {\varepsilon \over 2})q\rfloor$; c) $\ell=\lceil(\mu_i + {\varepsilon \over 2})q\rceil$ to $\ell=q$.

  For part a), we need an inequality from \cite{agrawal2013further} (in the proof of Lemma 4 in \cite{agrawal2013further}):
  \begin{equation}\label{Eq_1221}
   \forall q > {8\over \varepsilon^2}, \sum_{\ell=0}^{\lfloor (\mu_i - {\varepsilon \over 2})q\rfloor} {f_{q,\mu_i}(\ell) \over F_{q+1,\mu_i -\varepsilon}(\ell)} \le \alpha_1'{1\over \varepsilon^2}e^{-{\varepsilon^2 \over 2}q}.
  \end{equation}

  Using Chernoff-Hoeffding inequality (Fact \ref{Fact_Chernoff}), when $q > {8\over \varepsilon^2}$, we have
  \begin{equation*}
    F_{q+1,\mu_i-\varepsilon}\left(\left\lceil(\mu_i - {\varepsilon \over 2})q\right\rceil\right) \ge 1 - e^{-2(\mu_i+{\varepsilon\over 2}q+ \varepsilon)^2/(q+1)}\ge1 - e^{-{1\over 8}\varepsilon^2q}.
  \end{equation*}

  Similarly,
  \begin{equation*}
    F_{q+1,\mu_i+\varepsilon}\left(\left\lceil(\mu_i - {\varepsilon \over 2})q\right\rceil\right) \le e^{-{1\over 8}\varepsilon^2q}.
  \end{equation*}

  Since the proportion ${f_{q+1,\mu_i-\varepsilon}(\ell) \over f_{q+1,\mu_i+\varepsilon}(\ell)} = ({1-\mu_i+\varepsilon \over 1 - \mu_i-\varepsilon})^{q+1-\ell}({\mu_i-\varepsilon \over  \mu_i+\varepsilon})^\ell$ is decreasing as $\ell$ increases. Thus the proportion of ${F_{q+1,\mu_i-\varepsilon}(\ell) \over F_{q+1,\mu_i+\varepsilon}(\ell)}$ is also decreasing.

  This means for all $\ell \le \lceil(\mu_i - {\varepsilon \over 2})q\rceil$, we have
  \begin{equation*}
    {F_{q+1,\mu_i -\varepsilon}(\ell) \over F_{q+1,\mu_i +\varepsilon}(\ell)} \ge {F_{q+1,\mu_i -\varepsilon}(\lceil(\mu_i - {\varepsilon \over 2})q\rceil) \over F_{q+1,\mu_i +\varepsilon}(\lceil(\mu_i - {\varepsilon \over 2})q\rceil)}\ge {1 - e^{-{1\over 8}\varepsilon^2q} \over e^{-{1\over 8}\varepsilon^2q}}
    \ge{3\over 2}.
  \end{equation*}

Thus
  \begin{equation*}
    \sum_{\ell=0}^{\lfloor(\mu_i - {\varepsilon \over 2})q\rfloor}{f_{q,\mu_i}(\ell) \over F_{q+1,\mu_i -\varepsilon}(\ell) - F_{q+1,\mu_i +\varepsilon}(\ell)} \le 3\sum_{\ell=0}^{\lfloor(\mu_i - {\varepsilon \over 2})q\rfloor} {f_{q,\mu_i}(\ell) \over F_{q+1,\mu_i -\varepsilon}(\ell)}
    \le 3\alpha_1'{1\over \varepsilon^2}e^{-{\varepsilon^2 \over 2}q},
  \end{equation*}
where the second inequality comes from Eq. \eqref{Eq_1221}.

  Then we consider the part b), notice that $F_{q+1,\mu_i-\varepsilon}(\ell) - F_{q+1,\mu_i+\varepsilon}(\ell)$ is first increasing and then decreasing. Thus, the minimum value of $F_{q+1,\mu_i-\varepsilon}(\ell) - F_{q+1,\mu_i+\varepsilon}(\ell)$ for $\lceil(\mu_i - {\varepsilon \over 2})q\rceil\le \ell \le \lfloor(\mu_i + {\varepsilon \over 2})q\rfloor$ is taken at the endpoints, i.e. $\ell = \lceil(\mu_i - {\varepsilon \over 2})q\rceil$ or $\ell = \lfloor(\mu_i + {\varepsilon \over 2})q\rfloor$.

  We have proved that for $q > {8\over \varepsilon^2}$, $F_{q+1,\mu_i-\varepsilon}(\lceil(\mu_i - {\varepsilon \over 2})q\rceil) - F_{q+1,\mu_i+\varepsilon}(\lceil(\mu_i - {\varepsilon \over 2})q\rceil) \ge 1 - 2e^{-{1\over 8}\varepsilon^2q}$.

  Using the same way, we can also get $F_{q+1,\mu_i-\varepsilon}(\lfloor(\mu_i + {\varepsilon \over 2})q\rfloor) - F_{q+1,\mu_i+\varepsilon}(\lfloor(\mu_i + {\varepsilon \over 2})q\rfloor) \ge 1 - 2e^{-{1\over 8}\varepsilon^2q}$.

  Thus
  \begin{eqnarray*}
    \sum_{\ell=(\lceil(\mu_i - {\varepsilon \over 2})q\rceil}^{\lfloor(\mu_i + {\varepsilon \over 2})q\rfloor}{f_{q,\mu_i}(\ell) \over F_{q+1,\mu_i -\varepsilon}(\ell) - F_{q+1,\mu_i +\varepsilon}(\ell)} &\le& \sum_{\ell=\lceil(\mu_i - {\varepsilon \over 2})q\rceil}^{\lfloor(\mu_i + {\varepsilon \over 2})q\rfloor} {f_{q,\mu_i}(\ell) \over 1 - 2e^{-{1\over 8}\varepsilon^2q}}\\
    &\le& {1\over 1 - 2e^{-{1\over 8}\varepsilon^2q}} \sum_{\ell=\lceil(\mu_i - {\varepsilon \over 2})q\rceil}^{\lfloor(\mu_i + {\varepsilon \over 2})q\rfloor} f_{q,\mu_i}(\ell)\\
    &\le& {1\over 1 - 2e^{-{1\over 8}\varepsilon^2q}}\\
    &\le& 1 + {2\over e^{{1\over 8}\varepsilon^2q} - 2}.
  \end{eqnarray*}

  Then we come to the last part, notice that $F_{q+1,\mu_i}(\ell) = 1 - F_{q+1,1-\mu_i}(q-\ell)$ and $f_{q,\mu_i}(\ell) = f_{q,1-\mu_i}(q-\ell)$. Thus
  \begin{eqnarray*}
    \sum_{\ell=\lceil(\mu_i + {\varepsilon \over 2})q\rceil}^{q}{f_{q,\mu_i}(\ell) \over F_{q+1,\mu_i -\varepsilon}(\ell) - F_{q+1,\mu_i +\varepsilon}(\ell)} &=& \sum_{\ell=\lceil(\mu_i + {\varepsilon \over 2})q\rceil}^{q}{f_{q,1-\mu_i}(q-\ell) \over F_{q+1,1-\mu_i -\varepsilon}(q-\ell) - F_{q+1,1-\mu_i +\varepsilon}(q-\ell)} \\
    &=& \sum_{\ell'=0}^{q - \lceil(\mu_i + {\varepsilon \over 2})q\rceil}{f_{q,1-\mu_i}(\ell') \over F_{q+1,1-\mu_i -\varepsilon}(\ell') - F_{q+1,1-\mu_i +\varepsilon}(\ell')} \\
    &\le& 3\alpha_1'{1\over \varepsilon^2}e^{-{\varepsilon^2 \over 2}q}.
  \end{eqnarray*}

  Summing up the three parts, for $q > {8\over \varepsilon^2}$, we have the following upper bound:
  \begin{equation*}
    \E\left[{1\over p_{i,q,\varepsilon}}\right] \le 1+ 6\alpha_1'{1\over \varepsilon^2}e^{-{\varepsilon^2 \over 2}q} + {2\over e^{{1\over 8}\varepsilon^2q} - 2}.
  \end{equation*}
\end{proof}

\begin{lemma}\label{Lemma2}
  For any value $q$, we have
  \begin{equation*}
    \E\left[{1\over p_{i,q,\varepsilon}}\right] \le {4\over \varepsilon^2}.
  \end{equation*}
\end{lemma}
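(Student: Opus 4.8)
The plan is to start from the exact series expression for $\E[1/p_{i,q,\varepsilon}]$ already derived at the top of the proof of Lemma~\ref{Lemma1}, namely $\E[1/p_{i,q,\varepsilon}] = \sum_{\ell=0}^q f_{q,\mu_i}(\ell)/g(\ell)$, where $g(\ell) = F_{q+1,\mu_i-\varepsilon}(\ell) - F_{q+1,\mu_i+\varepsilon}(\ell)$ is the probability that a $\beta(\ell+1,q-\ell+1)$ sample lands in $[\mu_i-\varepsilon,\mu_i+\varepsilon]$. A convenient reformulation of the denominator is the density identity: the $\beta(\ell+1,q-\ell+1)$ density at a point $x$ equals $(q+1)f_{q,x}(\ell)$, so that $g(\ell) = (q+1)\int_{I} f_{q,x}(\ell)\,\mathrm{d}x$ with $I = [\mu_i-\varepsilon,\mu_i+\varepsilon]\cap[0,1]$ (this is just the Beta--Binomial trick of Fact~\ref{Fact33} read as an integral). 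Since Lemma~\ref{Lemma1} already controls large $q$, I would split on the same threshold $q_0 = 8/\varepsilon^2$ used there.

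For $q > 8/\varepsilon^2$ I would simply invoke Lemma~\ref{Lemma1}. Setting $u = \varepsilon^2 q/8 > 1$, its bound reads $1 + 6\alpha_1'\varepsilon^{-2}e^{-4u} + 2/(e^{u}-2)$; the last term is at most $2/(e-2) < 3$ and the middle term is at most $6\alpha_1'e^{-4}\varepsilon^{-2}$, so the whole expression is bounded by $4 + 6\alpha_1'e^{-4}\varepsilon^{-2}$, which is at most $4/\varepsilon^2$ whenever $4\varepsilon^2 + 6\alpha_1'e^{-4} \le 4$ — true in our regime since $\alpha_1'$ is a fixed constant and $\varepsilon$ is small. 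Thus the large-$q$ case is immediate and in fact far from tight.

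The real work is the range $q \le 8/\varepsilon^2$, where Lemma~\ref{Lemma1} does not apply. Here I would re-run the three-part decomposition of $\sum_\ell f_{q,\mu_i}(\ell)/g(\ell)$ used in the proof of Lemma~\ref{Lemma1} --- a central block $\ell\in[(\mu_i-\tfrac{\varepsilon}{2})q,(\mu_i+\tfrac{\varepsilon}{2})q]$ and the two tails --- but now dropping the hypothesis $q>8/\varepsilon^2$ and accepting a cruder constant. For the central block, the interval $I$ straddles the mode $\ell/q$ of $\beta(\ell+1,q-\ell+1)$, so $g(\ell)\ge c\,(q+1)\varepsilon\,f_{q,\mu_i}(\ell)$ and the block contributes only $O(1/\varepsilon)$ after summing the at most $q+1$ terms. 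For the two tail blocks the denominators $g(\ell)$ are small, but this is offset by the Chernoff decay of the binomial weights $f_{q,\mu_i}(\ell)$; using the monotonicity of the ratio $F_{q+1,\mu_i-\varepsilon}(\ell)/F_{q+1,\mu_i+\varepsilon}(\ell)$ exactly as in part a) of Lemma~\ref{Lemma1}, together with the reflection symmetry $\mu_i\leftrightarrow 1-\mu_i$, $\ell\leftrightarrow q-\ell$ to fold the right tail onto the left, each tail contributes $O(1/\varepsilon^2)$. Summing the three blocks and tuning the constant yields the claimed $4/\varepsilon^2$.

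The main obstacle is precisely this tail estimate in the small-$q$ regime: without $q>8/\varepsilon^2$ the exponential factors $e^{-\Theta(\varepsilon^2 q)}$ are no longer negligible, so the bound cannot collapse to $O(1)$ as in Lemma~\ref{Lemma1}, and one must both retain the full $1/\varepsilon^2$ order and verify that the accumulated constants stay below $4$. Everything else --- the central block and the large-$q$ reduction --- is routine once the density identity and Fact~\ref{Fact33} are in hand.
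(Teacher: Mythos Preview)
Your proposal has a real gap in the small-$q$ tails. You plan to handle $q\le 8/\varepsilon^2$ by re-running the three-block argument of Lemma~\ref{Lemma1}, bounding the tails via ``Chernoff decay of the binomial weights'' and the monotonicity step ``exactly as in part a) of Lemma~\ref{Lemma1}''. But both ingredients of part~a) genuinely require $q>8/\varepsilon^2$: the Chernoff bounds on $F_{q+1,\mu_i\pm\varepsilon}$ at the boundary point $\lceil(\mu_i-\tfrac{\varepsilon}{2})q\rceil$ only yield $F_-/F_+\ge 3/2$ when $e^{-\varepsilon^2 q/8}$ is small, and the tail inequality $\sum_{\ell\le(\mu_i-\varepsilon/2)q} f_{q,\mu_i}(\ell)/F_{q+1,\mu_i-\varepsilon}(\ell)\le \alpha_1'\varepsilon^{-2}e^{-\varepsilon^2 q/2}$ invoked there is likewise stated only for $q>8/\varepsilon^2$. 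In your range $e^{-\varepsilon^2 q/8}=\Theta(1)$ and neither step goes through; you flag this obstacle yourself but give no substitute tool. (Your large-$q$ reduction also does not pin down the constant~$4$, since $6\alpha_1'e^{-4}\le 4$ is not guaranteed by the mere finiteness of the universal constant $\alpha_1'$.)

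The paper's proof avoids any case split on $q$. Writing $F_\pm(\ell)=F_{q+1,\mu_i\pm\varepsilon}(\ell)$ and $g(\ell)=F_-(\ell)-F_+(\ell)$, the two new ingredients are: (i) the bound $\sum_{\ell=0}^q f_{q,\mu_i}(\ell)/F_-(\ell)\le 3/\varepsilon$ from Lemma~2 of \citet{agrawal2013further}, valid for \emph{every} $q>0$; and (ii) the telescoping identity $\sum_{\ell} g(\ell)=2(q+1)\varepsilon$, which forces $g(L)\ge 2\varepsilon$ at $L=\argmax_\ell g(\ell)$. The sum is then split at $L$ rather than at $(\mu_i\pm\tfrac{\varepsilon}{2})q$. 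For $\ell<L$ the same monotonicity of $F_-(\ell)/F_+(\ell)$ you cite gives $F_-(\ell)/F_+(\ell)\ge F_-(L)/F_+(L)\ge 1/(1-2\varepsilon)$, hence $g(\ell)\ge 2\varepsilon\,F_-(\ell)$, so that block contributes at most $(2\varepsilon)^{-1}\cdot 3/\varepsilon = 3/(2\varepsilon^2)$; the reflection $\mu_i\leftrightarrow 1-\mu_i$ handles $\ell>L$ symmetrically, and the single term $\ell=L$ contributes at most $1/(2\varepsilon)$, giving $3/\varepsilon^2+1/(2\varepsilon)\le 4/\varepsilon^2$. The piece you are missing for the tails is precisely this $q$-uniform $3/\varepsilon$ bound together with the $\argmax$-based split.
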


\begin{proof}
  From Lemma 2 in \cite{agrawal2013further}, for any $q>0$, we have
    \begin{equation}\label{Eq_1222}
    \sum_{\ell=0}^{q} {f_{q,\mu_i}(\ell) \over F_{q+1,\mu_i -\varepsilon}(\ell)} \le {3\over \varepsilon}.
  \end{equation}

  Now we consider the value $\sum_{\ell=0}^{q} F_{q+1,\mu_i - \varepsilon}(\ell) - F_{q+1,\mu_i+\varepsilon}(\ell)$. Since $F_{q+1,\mu}(\ell+1) = \mu F_{q,\mu}(\ell) + (1-\mu)F_{q,\mu}(\ell+1)$, we have the following equations:
  \begin{eqnarray*}
  \sum_{\ell=0}^{q+1} (F_{q+1,\mu_i - \varepsilon}(\ell) - F_{q+1,\mu_i+\varepsilon}(\ell)) &=& \sum_{\ell=0}^{q} (F_{q+1,\mu_i - \varepsilon}(\ell) - F_{q+1,\mu_i+\varepsilon}(\ell))\\
  \nonumber&=&\sum_{\ell=0}^{q} ((\mu_i-\varepsilon)F_{q,\mu_i - \varepsilon}(\ell-1) - (\mu_i+\varepsilon)F_{q,\mu_i+\varepsilon}(\ell-1) \\
    & & + (1-\mu_i+\varepsilon)F_{q,\mu_i - \varepsilon}(\ell) - (1-\mu_i-\varepsilon)F_{q,\mu_i+\varepsilon}(\ell))\\
    \nonumber&=& \sum_{\ell=0}^{q}(F_{q,\mu_i - \varepsilon}(\ell) - F_{q,\mu_i+\varepsilon}(\ell))+ \varepsilon\sum_{\ell=0}^q(F_{q,\mu_i - \varepsilon}(\ell)\\
    & &- F_{q,\mu_i - \varepsilon}(\ell-1) + F_{q,\mu_i + \varepsilon}(\ell) - F_{q,\mu_i + \varepsilon}(\ell-1))\\
    &=& \sum_{\ell=0}^{q}(F_{q,\mu_i - \varepsilon}(\ell) - F_{q,\mu_i+\varepsilon}(\ell)) + 2\varepsilon.
  \end{eqnarray*}

  When $q = 1$, $F_{1,\mu-\varepsilon}(0) - F_{1,\mu+\varepsilon}(0) = 2\varepsilon$. Therefore, by induction, we know that  $\sum_{\ell=0}^{q+1} (F_{q+1,\mu_i - \varepsilon}(\ell) - F_{q+1,\mu_i+\varepsilon}(\ell)) = 2(q+1)\varepsilon$.

  Thus $\max_\ell F_{q+1,\mu_i - \varepsilon}(\ell) - F_{q+1,\mu_i+\varepsilon}(\ell) \ge 2\varepsilon$.
Let $L = \argmax_\ell F_{q+1,\mu_i - \varepsilon}(\ell) - F_{q+1,\mu_i+\varepsilon}(\ell)$, then we can write the value $\sum_{\ell=0}^{q} {f_{q,\mu_i}(\ell) \over F_{q+1,\mu_i -\varepsilon}(\ell) - F_{q+1,\mu_i +\varepsilon}(\ell)}$ as
  \begin{equation}\label{Eq_1223}
    \sum_{\ell=0}^{L-1} {f_{q,\mu_i}(\ell) \over F_{q+1,\mu_i -\varepsilon}(\ell) - F_{q+1,\mu_i +\varepsilon}(\ell)} + \sum_{\ell=L+1}^{q} {f_{q,\mu_i}(\ell) \over F_{q+1,\mu_i -\varepsilon}(\ell) - F_{q+1,\mu_i +\varepsilon}(\ell)} + {f_{q,\mu_i}(L) \over F_{q+1,\mu_i -\varepsilon}(L) - F_{q+1,\mu_i +\varepsilon}(L)}.
  \end{equation}

 Consider the value of first term, since ${F_{q+1,\mu_i -\varepsilon}(\ell) \over F_{q+1,\mu_i +\varepsilon}(\ell)}$ is decreasing, then for $0 \le \ell \le L-1$, ${F_{q+1,\mu_i -\varepsilon}(\ell) \over F_{q+1,\mu_i +\varepsilon}(\ell)} \ge {F_{q+1,\mu_i -\varepsilon}(L) \over F_{q+1,\mu_i +\varepsilon}(L)} \ge {1\over 1-2\varepsilon}$.

  Thus
  \begin{eqnarray*}
    \sum_{\ell=0}^{L-1} {f_{q,\mu_i}(\ell) \over F_{q+1,\mu_i -\varepsilon}(\ell) - F_{q+1,\mu_i +\varepsilon}(\ell)} &\le& {1\over 2\varepsilon}\sum_{\ell=0}^{L-1} {f_{q,\mu_i}(\ell) \over F_{q+1,\mu_i -\varepsilon}(\ell)}  \\
    &\le& {1\over 2\varepsilon} \sum_{\ell=0}^{q} {f_{q,\mu_i}(\ell) \over F_{q+1,\mu_i -\varepsilon}(\ell)}\\
    &\le& {3\over 2\varepsilon^2},
  \end{eqnarray*}
where the last inequality comes from Eq. \eqref{Eq_1222}.

Similarly, for the second term in Eq. \eqref{Eq_1223}, we know that
  \begin{eqnarray*}
    \sum_{\ell=L+1}^{q} {f_{q,\mu_i}(\ell) \over F_{q+1,\mu_i -\varepsilon}(\ell) - F_{q+1,\mu_i +\varepsilon}(\ell)} &=& \sum_{\ell=L+1}^{q}{f_{q,1-\mu_i}(q-\ell) \over F_{q+1,1-\mu_i -\varepsilon}(q-\ell) - F_{q+1,1-\mu_i +\varepsilon}(q-\ell)}\\
    &=& \sum_{\ell'=0}^{q-L-1}{f_{q,1-\mu_i}(\ell') \over F_{q+1,1-\mu_i -\varepsilon}(\ell') - F_{q+1,1-\mu_i +\varepsilon}(\ell')}.
  \end{eqnarray*}

  Since $F_{q+1,1-\mu_i -\varepsilon}(\ell') - F_{q+1,1-\mu_i +\varepsilon}(\ell')$ equals to $F_{q+1,1-\mu_i -\varepsilon}(q-\ell) - F_{q+1,1-\mu_i +\varepsilon}(q-\ell)$, we still have that:
  \begin{eqnarray*}
    \sum_{\ell'=0}^{q-L-1}{f_{q,1-\mu_i}(\ell') \over F_{q+1,1-\mu_i -\varepsilon}(\ell'+1) - F_{q+1,1-\mu_i +\varepsilon}(\ell'+1)} &\le& {1\over 2\varepsilon}\sum_{\ell'=0}^{q-L-1}{f_{q,1-\mu_i}(\ell') \over F_{q+1,1-\mu_i -\varepsilon}(\ell')}\\
    &\le& {1\over 2\varepsilon}\sum_{\ell'=0}^{q}{f_{q,1-\mu_i}(\ell') \over F_{q+1,1-\mu_i -\varepsilon}(\ell')}\\
    &\le&{3\over 2\varepsilon^2}.
  \end{eqnarray*}

  From the fact that $F_{q+1,\mu_i - \varepsilon}(L) - F_{q+1,\mu_i+\varepsilon}(L) \ge 2\varepsilon$, we know  the last term in Eq. \eqref{Eq_1223} satisfies ${f_{q,\mu_i}(L) \over F_{q+1,\mu_i -\varepsilon}(L) - F_{q+1,\mu_i +\varepsilon}(L)} \le {1\over 2\varepsilon}$.

  Thus
  \begin{equation*}
    \E\left[{1\over p_{i,q,\varepsilon}}\right] \le {3\over 2\varepsilon^2} + {3\over 2\varepsilon^2} + {1\over 2\varepsilon} \le {4\over \varepsilon^2}.
  \end{equation*}
\end{proof}

Now we provide the proof of Lemma \ref{Lemma3}.

{\LemmaThree*}

\begin{proof} First we consider a setting that we only update the prior distribution when $\mathcal{E}_{Z,1}(\bm{\theta}(t))$ occurs but $\mathcal{E}_{Z,2}(\bm{\theta}(t))$ does not occur, i.e., for a fixed $Z \subseteq S^*$, before running the {\sf Update} procedure in line 5 of Algorithm \ref{Algorithm_TS}, we first check whether the sample vector $\bm{\theta}(t)$ satisfies that $\mathcal{E}_{Z,1}(\bm{\theta}(t))$ happens but $\mathcal{E}_{Z,2}(\bm{\theta}(t))$ does not happen. Only if the answer is yes, we run the {\sf Update} procedure, and otherwise we skip the {\sf Update} procedure.
Note that we will have a feedback for all $i\in Z$ when $\mathcal{E}_{Z,1}(\bm{\theta}(t))$ occurs but $\mathcal{E}_{Z,2}(\bm{\theta}(t))$ does not occur. 
Let these time steps be $\tau_1,\tau_2,\cdots$ (let $\tau_0 = 0$).
  Then at time $\tau_{q}+1$, the number of feedbacks for any $i\in Z$ is $q$.

  Let $\bm{N}_Z(t) = [N_{z_1}(t),\cdots,N_{z_k}(t)]$ denote the number of feedbacks of base arms in $Z$ until time slot $t$, and $p_{Z,\bm{N}_Z(t),\varepsilon}$ be the probability that $\mathcal{E}_{Z,2}(\bm{\theta}(t))$ does not occur at time $t$.
Note that $\mathcal{E}_{Z,1}(\bm{\theta}(t)), \mathcal{E}_{Z,2}(\bm{\theta}(t))$ are independent events conditioned on the
	  observation history $\mathcal{F}_{t-1}$ (since $\mathcal{E}_{Z,1}(\bm{\theta}(t))$ only depends on $\bm{\theta}_{Z^c}(t)$ and $\mathcal{E}_{Z,2}(\bm{\theta}(t))$ only depends on $\bm{\theta}_{Z}(t)$, and $\bm{\theta}_{Z^c}(t)$, $\bm{\theta}_{Z}(t)$ are independent random variables conditioned on $\mathcal{F}_{t-1}$), and the observation history does not change from time steps
	  $\tau_q+1$ to $\tau_{q+1}-1$, $p_{Z,\bm{N}_Z(\tau_q+1),\varepsilon}$ is also the probability that
	  $\mathcal{E}_{Z,2}(\bm{\theta}(t))$ does not occur conditioned on $\mathcal{E}_{Z,1}(\bm{\theta}(t))$ occurs for any $t$
	  between $\tau_q +1 $ and $\tau_{q+1}$.
	  Therefore, for given $\mathcal{F}_{\tau_q}$, the vector $\bm{N}_Z(\tau_q+1)$ is fixed, and the expected number of time steps for both $\mathcal{E}_{Z,1}(\bm{\theta}(t))$ and $\mathcal{E}_{Z,2}(\bm{\theta}(t))$ occur from $\tau_q+1$ to $\tau_{q+1}$ is ${1\over p_{Z,\bm{N}_Z(\tau_q+1),\varepsilon}} - 1$.
	  Take another expectation over the observation history $\mathcal{F}_{\tau_q}$, the expected number of time steps for both $\mathcal{E}_{Z,1}(\bm{\theta}(t))$ and $\mathcal{E}_{Z,2}(\bm{\theta}(t))$ occur from $\tau_q+1$ to $\tau_{q+1}$ is $\E[{1\over p_{Z,\bm{N}_Z(\tau_q+1),\varepsilon}}] - 1$.
	  

  Recall that event $\neg\mathcal{E}_{Z,2}(\bm{\theta}(t)) = \{||\bm{\theta}_Z(t) - \bm{\mu}_Z||_\infty \le \varepsilon \}
	  = \{\forall i\in Z, |\theta_i(t) - \mu_i| \le \varepsilon \}$. Then we can write $\E[{1\over p_{Z,\bm{N}_Z(\tau_q+1),\varepsilon}}]$ as:
  \begin{equation*}
\E [{1\over p_{Z,\bm{N}_Z(\tau_q+1),\varepsilon}}] = \sum_{\bm{a}_Z(\tau_q+1), \bm{b}_Z(\tau_q+1)} {\Pr\left[\I[\bm{a}_Z(\tau_q+1),\bm{b}_Z(\tau_q+1),\bm{N}_Z(\tau_q+1)] \right]\over \Pr[\neg \mathcal{E}_{Z,2}(\bm{\theta}(\tau_q+1)) | \bm{a}_Z(\tau_q+1),\bm{b}_Z(\tau_q+1),\bm{N}_Z(\tau_q+1)]},
\end{equation*}where $\bm{a}_Z(t), \bm{b}_Z(t)$ represent the value vector of $a_i(t)$ and $b_i(t)$ in base arm set $Z$, respectively; and  $\Pr\left[\I[\bm{a}_Z(\tau_q+1),\bm{b}_Z(\tau_q+1),\bm{N}_Z(\tau_q+1)] \right]$ is the probability that $N_i(\tau_q+1)$ observations form pair $(a_i(\tau_q+1),b_i(\tau_q+1))$ for any arm $i\in Z$.

Since we draw independent samples in each time slot, and the sample $\theta_i(t)$ only depends on $a_i(t),b_i(t)$, we have 
\begin{eqnarray*}
&&\Pr\left[\neg\mathcal{E}_{Z,2}(\bm{\theta}(\tau_q+1)) | \bm{a}_Z(\tau_q+1),\bm{b}_Z(\tau_q+1),\bm{N}_Z(\tau_q+1)\right]\\
&=&\prod_{i\in Z} \Pr\left[ |\theta_i(\tau_q+1) - \mu_i| \le \varepsilon | a_i(\tau_q+1),b_i(\tau_q+1)\right].
\end{eqnarray*}

Only if for all the arms $i\in Z$, the first $N_i(\tau_q+1)$ observations $\{Y_i(\tau_k)\}_{k=1}^q$ contains $a_i(\tau_q+1)-1$ 1's and $b_i(\tau_q+1)-1$ 0's, $\I[\bm{a}_Z(\tau_q+1),\bm{b}_Z(\tau_q+1),\bm{N}_Z(\tau_q+1)]$ equals to 1. Then by Assumption \ref{Assumption_IND}, we know that:
\begin{equation*}
\Pr\left[\I[\bm{a}_Z(\tau_q+1),\bm{b}_Z(\tau_q+1),\bm{N}_Z(\tau_q+1)] \right] \le \prod_{i\in Z} {N_i(\tau_q+1)\choose a_i(\tau_q +1) - 1} \mu_i^{a_i(\tau_q +1) - 1}(1-\mu_i)^{b_i(\tau_q +1) - 1}.
\end{equation*}

Thus
\begin{eqnarray*}
&&\E [{1\over p_{Z,\bm{N}_Z(\tau_q+1),\varepsilon}}] \\
&\le& \sum_{\bm{a}_Z(\tau_q+1), \bm{b}_Z(\tau_q+1)} \left(\prod_{i\in Z} {{N_i(\tau_q+1)\choose a_i(\tau_q +1) - 1} \mu_i^{a_i(\tau_q +1) - 1}(1-\mu_i)^{b_i(\tau_q +1) - 1} \over \Pr\left[ |\theta_i(\tau_q+1) - \mu_i| \le \varepsilon | a_i(\tau_q+1),b_i(\tau_q+1)\right]}\right)\\
&=& \prod_{i\in Z} \left(\sum_{a_i(\tau_q+1),b_i(\tau_q+1)} {{N_i(\tau_q+1)\choose a_i(\tau_q +1) - 1} \mu_i^{a_i(\tau_q +1) - 1}(1-\mu_i)^{b_i(\tau_q +1) - 1} \over \Pr\left[ |\theta_i(\tau_q+1) - \mu_i| \le \varepsilon | a_i(\tau_q+1),b_i(\tau_q+1)\right]}\right)\\
&=& \prod_{i\in Z}\E[{1\over p_{i,q,\varepsilon}}].
\end{eqnarray*} where $p_{i,q,\varepsilon}$ is the probability of $|\theta_{i}(t) - \mu_{i}| \le \varepsilon$ when there are $q$
   observations of base arm $i$.

  
In the following, we choose $R = {8\over \varepsilon^2}(\log(12\alpha_1'|Z|+4|Z|+2) + \log {1\over \varepsilon^2}) > {8\over \epsilon^2}$ and let $B_q$ be the upper bound for $\E[{1\over p_{i,q,\varepsilon}}]$ (when $q > R$, use the bound from Lemma \ref{Lemma1}, when $q \le R$, use the bound from Lemma \ref{Lemma2}).

   Then in this setting, we have 
  \begin{eqnarray*}
    &&\sum_{t=1}^T \E[\I\{\mathcal{E}_{Z,1}(\bm{\theta}(t)), \mathcal{E}_{Z,2}(\bm{\theta}(t))\}] \\
    &\le& \sum_{q=0}^T \left(\prod_{i\in Z}\E\left[{1\over p_{i,q,\varepsilon}}\right]-1\right) \\
    &\le& \sum_{q=0}^T \left(\prod_{i\in Z}B_q-1\right)\\
    &\le& \sum_{q=0}^{R}\left({4\over \varepsilon^2}\right)^{|Z|} + \sum_{q=R+1}^\infty \left(\prod_{i \in Z} \left(1 + 6\alpha_1'{1\over \varepsilon^2}e^{-{\varepsilon^2 \over 2}q} + {2\over e^{{1\over 8}\varepsilon^2q} - 2}\right) - 1\right)\\
    &\le& R\left({4\over \varepsilon^2}\right)^{|Z|} + \sum_{q=R+1}^\infty \left(1 + |Z|^2\left(6\alpha_1'{1\over \varepsilon^2}e^{-{\varepsilon^2 \over 2}q} + {2\over e^{{1\over 8}\varepsilon^2q} - 2}\right) - 1\right)\\
    &=& R\left({4\over \varepsilon^2}\right)^{|Z|} +  \sum_{q=R + 1}^{\infty}|Z|^2\left(6\alpha_1'{1\over \varepsilon^2}e^{-{\varepsilon^2 \over 2}q} + {2\over e^{{1\over 8}\varepsilon^2q} - 2}\right)\\
  &\le& R\left({4\over \varepsilon^2}\right)^{|Z|} + |Z|^2\left({12\alpha_1'\over \varepsilon^4} + {24\over \varepsilon^2}\right)\\
  &=&{2^{2|Z|+3}(\log(12\alpha_1'|Z|+4|Z|+2) + \log {1\over \varepsilon^2}) \over \varepsilon^{2|Z|+2}} + {12|Z|^2\alpha_1'\over \varepsilon^4} + {24|Z|^2\over \varepsilon^2}\\
  &\le& 13\alpha'_2 \cdot \left({2^{2|Z|+3}\log{|Z|\over \varepsilon^2} \over \varepsilon^{2|Z|+2}}\right).
  \end{eqnarray*}

  In the real setting, the priors in $Z$ can be updated at any time step. Then the expected time slots for both $\mathcal{E}_{Z,1}(\bm{\theta}(t))$ and $\mathcal{E}_{Z,2}(\bm{\theta}(t))$ occur from $\tau_q+1$ to $\tau_{q+1}$ is not $\E [{1\over p_{Z,\bm{N}_Z(\tau_q+1),\varepsilon}}]-1$, but a weighted mean of $\E [{1\over p_{Z,\bm{N}'_Z,\varepsilon}}]-1$ for all $\bm{N}'_Z \ge \bm{N}_Z(\tau_q+1)$ (here $\bm{N}' \ge \bm{N}$ means that for any $i$, $N_i' \ge N_i$).

  Notice that $\E [{1\over p_{Z,\bm{N}'_Z,\varepsilon}}] \le \prod_{i\in Z} \E[{1\over p_{i,N'_i,\varepsilon}}]$ and $B_q$ is non-increasing, then we have
  \begin{equation*}
    \E [{1\over p_{Z,\bm{N}'_Z,\varepsilon}}]= \prod_{i\in Z} \E[{1\over p_{i,N'_i,\varepsilon}}]
    \le \prod_{i\in Z} B_{N'_i}
    \le \prod_{i\in Z} B_{N_i(\tau_q+1)}
    \le \prod_{i\in Z} B_q.
  \end{equation*}

  Although we do not know what the exact weights are, we can see that $\sum_{q=0}^T \left(\prod_{i\in Z}B_q-1\right)$ is still an upper bound. Thus, in the real setting, $\sum_{t=1}^T \E[\I\{\mathcal{E}_{Z,1}(\bm{\theta}(t)),\mathcal{E}_{Z,2}(\bm{\theta}(t))\}]$ should still be smaller than $13\alpha'_2 \left({2^{2|Z|+3}\log{|Z|\over \varepsilon^2} \over \varepsilon^{2|Z|+2}}\right)$.
\end{proof}

\section{Proof of Theorem \ref{Theorem_TS_MB}}\label{Section_A3}

In this section, we first recall Fact \ref{Fact_Bijection_zheng}.

{\FactKWAEE*}

Under Fact \ref{Fact_Bijection_zheng}, with a bijection $L_t$, we could decouple the regret of playing one action $S(t)$ to each pair of mapped arms between
	$S(t)$ and $S^*$, i.e., the regret of time $t$ is $\sum_{k=1}^K \mu_{L_t(k)} - \mu_{i^{(k)}(t)}$.

We use $N_{i,j}(t)$ to denote the number of rounds that $i^{(k)}(t) = i$ and $L_t(k) = j$ for $i\notin S^*$, $j\in S^*$ within in time slots $1,2,\cdots,T$, then
\begin{equation*}
  Reg(T) \le \sum_{i\notin S^*} \sum_{j: j\in S^*, \mu_j > \mu_i} \E[N_{i,j}(t)](\mu_j - \mu_i).
\end{equation*}

We can see that if $\{j: j\in S^*, \mu_j > \mu_i\} = \emptyset$, then $\sum_{i\notin S^*} \sum_{j: j\in S^*, \mu_j > \mu_i} \E[N_{i,j}(t)](\mu_j - \mu_i) = 0$, thus we do not need to consider the regret from base arm $i$, so we set $\Delta_i = \infty$ to make ${1\over \Delta_i} = 0$.

Now we just need to bound the value $N_{i,j}(t)$, similarly, we can defined the following three events:

\begin{itemize}
\item $\mathcal{A}_{i,j}(t) = \{\exists k, i^{(k)}(t) = i \land L_t(k) = j\}$;
\item $\mathcal{B}_{i}(t) = \{\hat{\mu}_i(t) > \mu_i + \varepsilon\}$;
\item $\mathcal{C}_{i,j}(t) = \{\theta_i(t) > \mu_j - \varepsilon\}$.
\end{itemize}

Thus \begin{eqnarray*}
\E[N_{i,j}(t)]&= &\E\left[\sum_{t=1}^T \I[\mathcal{A}_{i,j}(t) ]\right] \\
&\le& \E\left[\sum_{t=1}^T \I[\mathcal{A}_{i,j}(t) \land \mathcal{B}_{i}(t)]\right] + \E\left[\sum_{t=1}^T \I[\mathcal{A}_{i,j}(t) \land \neg \mathcal{B}_{i}(t)\land \mathcal{C}_{i,j}(t) ]\right] \\
&&+ \E\left[\sum_{t=1}^T \I[\mathcal{A}_{i,j}(t)\land  \neg\mathcal{C}_{i,j}(t)]\right].
\end{eqnarray*}





We now show some lemmas, and then provide the complete proof of Theorem \ref{Theorem_TS_MB}.

\subsection{Proof of Some Lemmas}

\begin{lemma}\label{Fact_M5prime}
In Algorithm \ref{Algorithm_TS}, for any base arm $i$, we have the following two inequalities:
\begin{equation*}
  \Pr\left[\theta_i(t)-\hat{\mu}_i(t)  > \sqrt{2\log T\over N_i(t)}\right] \le {1\over T},
\end{equation*}
\begin{equation*}
  \Pr\left[\hat{\mu}_i(t) - \theta_i(t) > \sqrt{2\log T\over N_i(t)}\right] \le {1\over T}.
\end{equation*}
\end{lemma}

\begin{proof}
The proof is the same as Lemma \ref{Fact_M5}.
\end{proof}


\begin{restatable}{lemma}{KeyLemmatwo}\label{Lemma46}
  Suppose the vector $\bm{\theta}(t)$ satisfy that $\mathcal{A}_{i,j}(t)\land \neg\mathcal{C}_{i,j}(t)$ happens. Then if we change $\theta_j(t)$ to $\theta_j'(t) > \mu_j - \varepsilon$ and set other values in $\bm{\theta}(t)$ unchanged to get $\bm{\theta}'(t)$, arm $j$ must be chosen in $\bm{\theta}'(t)$.
\end{restatable}

\begin{proof} Since the oracle use greedy algorithm to get the result, then by Fact \ref{Fact_Bijection_zheng}, there are two possibilities:
  (a) in the greedy algorithm, the steps remain until that one to choose arm $i$, (b) those steps have changed because of $\theta_j(t)$ is modified to $\theta_j'(t)$.

  If (a) happens, as now $\theta_j'(t) > \mu_j - \varepsilon \ge \theta_i(t)$, we will choose arm $j$ instead of arm $i$.

  If (b) happens, notice that arm $j$ is always available during all the previous steps, and only its sample value becomes larger. So the only way is to choose arm $j$ earlier.
\end{proof}

We use the notation $\bm{\theta}_{-i}$ to be the vector $\bm{\theta}$ without $\theta_i$.

For any $j \in S^*$, let $W_{j}$ be the set of all possible values of $\bm{\theta}$ satisfies that $\mathcal{A}_{i,j}(t)\land \neg\mathcal{C}_{i,j}(t)$ happens for some $i$, and $W_{-j} = \{\bm{\theta}_{-j}: \bm{\theta} \in W_{j}\}$.



%
%

\begin{lemma}\label{Lemma_Independent}
  In Algorithm \ref{Algorithm_TS}, for any $j\in S^*$, we have
  \begin{equation*}
    \E\left[\sum_{t=1}^T \Pr[\bm{\theta}(t) \in W_{j} | \mathcal{F}_{t-1}]\right] \le \alpha_2' \cdot {1\over \varepsilon^4},
  \end{equation*}
  where $\alpha_2'$ is a constant not dependent on the problem instance.
\end{lemma}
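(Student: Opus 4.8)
The plan is to reduce the event $\bm{\theta}(t) \in W_j$ to a statement about the single coordinate $\theta_j(t)$, and then control the number of "bad" draws of $\theta_j$ by a Beta/Binomial counting argument in the spirit of Lemma~\ref{Lemma3}, but now involving only one base arm, so that no cross-arm independence (Assumption~\ref{Assumption_IND}) is needed. The whole point of the matroid structure is that isolating a single $j$ replaces the set $Z$ of the general case.

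First I would extract two consequences of Lemma~\ref{Lemma46}. \textbf{(i)} If $\bm{\theta}(t) \in W_j$, then necessarily $\theta_j(t) \le \mu_j - \varepsilon$ and $\bm{\theta}_{-j}(t) \in W_{-j}$: were $\theta_j(t) > \mu_j - \varepsilon$, Lemma~\ref{Lemma46} would force arm $j$ to be played, but then the bijection of Fact~\ref{Fact_Bijection_zheng} maps the position of $j$ to $j$ itself, making $\mathcal{A}_{i,j}(t)$ impossible for any $i \ne j$. \textbf{(ii)} Conversely, whenever $\bm{\theta}_{-j}(t) \in W_{-j}$ and $\theta_j(t) > \mu_j - \varepsilon$, arm $j$ \emph{is} played at time $t$: by definition of $W_{-j}$ there is a witness completion $\theta_j^0$ with $(\bm{\theta}_{-j}(t),\theta_j^0) \in W_j$, and applying Lemma~\ref{Lemma46} to this vector while replacing its $j$-th coordinate by $\theta_j(t)$ forces $j$ into the played action. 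Combining, $\I[\bm{\theta}(t)\in W_j] \le \I[\bm{\theta}_{-j}(t)\in W_{-j}]\cdot\I[\theta_j(t) \le \mu_j - \varepsilon]$.

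Next I would exploit that CTS draws $\theta_j(t)$ independently of $\bm{\theta}_{-j}(t)$ given the history $\mathcal{F}_{t-1}$, so the conditional probability factors. Writing $\tilde p_{j,q}$ for the posterior probability $\Pr[\theta_j(t) > \mu_j - \varepsilon]$ after $q$ observations of $j$, each occurrence of the "relevant" event $\{\bm{\theta}_{-j}(t)\in W_{-j}\}$ plays arm $j$ with probability $\tilde p_{j,q}$ (incrementing $q$) by (ii), and contributes a bad step with probability at most $1-\tilde p_{j,q}$ by (i). Exactly as in the modified-setting argument of Lemma~\ref{Lemma3} (freeze $j$'s posterior except at relevant plays, then use that the per-step bound is non-increasing in $q$), the true count is upper bounded by $\sum_{q\ge 0}(\E[1/\tilde p_{j,q}]-1)$. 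Since the threshold $\mu_j-\varepsilon$ lies below the mean, $\tilde p_{j,q} \ge p_{j,q,\varepsilon} = \Pr[|\theta_j(t)-\mu_j|\le\varepsilon]$, so I may replace $\tilde p_{j,q}$ by $p_{j,q,\varepsilon}$ and invoke Lemmas~\ref{Lemma1} and~\ref{Lemma2}. Splitting at $q_0 = 8/\varepsilon^2$ gives $\le q_0\cdot(4/\varepsilon^2) = O(1/\varepsilon^4)$ from Lemma~\ref{Lemma2} on the low range, and an $O(1/\varepsilon^4)$ geometric-type tail from the exponentially decaying bound of Lemma~\ref{Lemma1} on the high range; note that, unlike the $|Z|\ge 2$ case, no $\log(1/\varepsilon^2)$ factor appears because a single arm needs no product of $\E[1/p]$ terms. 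This produces the claimed $\alpha_2'/\varepsilon^4$.

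The main obstacle I expect is steps (i)--(ii): correctly chaining Lemma~\ref{Lemma46} with the bijection of Fact~\ref{Fact_Bijection_zheng} so that both directions hold --- that membership in $W_j$ pins $\theta_j(t)$ to the low range, \emph{and} that a high draw genuinely produces an observation of $j$ (otherwise the counting would not advance $q$). Once this equivalence is in place, the remainder is the standard Thompson-sampling Beta/Binomial bookkeeping already packaged in Lemmas~\ref{Lemma1}, \ref{Lemma2}, and~\ref{Lemma3}, and the sample-level independence of $\theta_j(t)$ from $\bm{\theta}_{-j}(t)$ --- which always holds for CTS --- is what lets us dispense with Assumption~\ref{Assumption_IND} here.
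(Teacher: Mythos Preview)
Your proposal is correct and follows essentially the same route as the paper's own proof: reduce $\{\bm{\theta}(t)\in W_j\}$ via Lemma~\ref{Lemma46} to $\{\theta_j(t)\le\mu_j-\varepsilon,\ \bm{\theta}_{-j}(t)\in W_{-j}\}$, use the sample-level independence of $\theta_j(t)$ from $\bm{\theta}_{-j}(t)$ given $\mathcal{F}_{t-1}$, and then bound the count by $\sum_{q\ge 0}(\E[1/p_{j,q}]-1)$ grouped by observations of arm $j$. The only cosmetic difference is that the paper invokes Lemma~1 and the tail analysis of \cite{agrawal2013further} directly for the $(1-p)/p$ ratio trick and the final $O(1/\varepsilon^4)$ sum, whereas you spell out consequences (i)--(ii) explicitly and route the tail through the paper's own Lemmas~\ref{Lemma1}--\ref{Lemma2} after the harmless one-sided/two-sided replacement $\tilde p_{j,q}\ge p_{j,q,\varepsilon}$.
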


\begin{proof} By Lemma \ref{Lemma46}, we have that $\{\bm{\theta}(t) \in W_{j}\} \to \{\theta_j(t) \le \mu_j -\varepsilon, \bm{\theta}_{-j}(t) \in W_{-j}\}$, thus we can use
  \begin{equation*}
    \E\left[\sum_{t=1}^T \Pr[\theta_j(t) \le \mu_j -\varepsilon, \bm{\theta}_{-j}(t) \in W_{-j} | \mathcal{F}_{t-1}]\right]
  \end{equation*}
  as an upper bound.

  Denote the value $p_{j,t}$ as $\Pr[\theta_j(t) > \mu_j - \varepsilon| \mathcal{F}_{t-1}]$. Notice that given $\mathcal{F}_{t-1}$, the value $\theta_j(t)$ and the value set $\bm{\theta}_{-j}(t)$ are independent in our Algorithm \ref{Algorithm_TS}, i.e.,
  \begin{equation*}
    \Pr[\theta_j(t) \le \mu_j -\varepsilon, \bm{\theta}_{-j}(t) \in W_{-j}|\mathcal{F}_{t-1}] = p_{j,t}\Pr[\bm{\theta}_{-j}(t) \in W_{-j}|\mathcal{F}_{t-1}].
  \end{equation*}

  Then we can use Lemma 1 in \cite{agrawal2013further}, which implies that
  \begin{equation*}
    \Pr[\theta_j(t) \le \mu_j -\varepsilon, \bm{\theta}_{-j}(t) \in W_{-j}| \mathcal{F}_{t-1}]\le {1-p_{j,t}\over p_{j,t}}\Pr[j\in S(t), \bm{\theta}_{-j}(t) \in W_{-j}| \mathcal{F}_{t-1}].
  \end{equation*}

  Then \begin{eqnarray}
         \nonumber\E\left[\sum_{t=1}^T \Pr[\bm{\theta}(t) \in W_{j} | \mathcal{F}_{t-1} ]\right]
         &\le&\E\left[\sum_{t=1}^T \Pr[\theta_j(t) \le \mu_j -\varepsilon, \bm{\theta}_{-j}(t) \in W_{-j} | \mathcal{F}_{t-1}]\right]  \\
         \nonumber&\le&\E\left[\sum_{t=1}^T {1-p_{j,t}\over p_{j,t}}\Pr[j\in S(t), \bm{\theta}_{-j}(t) \in W_{-j}| \mathcal{F}_{t-1}]\right] \\
          \nonumber&=& \sum_{t=1}^T  \E\left[{1-p_{j,t}\over p_{j,t}}\I[j\in S(t), \bm{\theta}_{-j}(t) \in W_{-j}]\right] \\
          \nonumber&\le& \sum_{t=1}^T\E\left[{1-p_{j,t}\over p_{j,t}} \I[j\in S(t)]\right] \\
          \label{eq:z21}&\le& \sum_{q=0}^{T-1} \E\left[{1-p_{j,\tau_{j,q}+1}\over p_{j,\tau_{j,q}+1}}\sum_{t=\tau_{j,q}+1}^{\tau_{j,q+1}}  \mathbb{I}(i^{(k)}(t) = j)]\right] \\
          \nonumber&=& \sum_{q=0}^{T-1} \E\left[{1-p_{j,\tau_{j,q}}\over p_{j,\tau_{j,q}}}\right],
       \end{eqnarray}
  where $\tau_{j,q}$ is the time step that arm $j$ is observed for the $q$-th time.

  Eq. \eqref{eq:z21} is because the fact that the probability $\Pr[\theta_j(t) > \mu_j - \varepsilon| \mathcal{F}_{t-1}]$ only changes when we get a feedback of base arm $j$, but during time slots $[\tau_{j,q}+1,\tau_{j,q+1}]$, we do not have any such feedback.

  From analysis in \cite{agrawal2013further}, we know $\sum_{q=0}^{T-1} \E[{1-p_{j,\tau_{j,q}}\over p_{j,\tau_{j,q}}}] \le \alpha_2' \cdot {1\over \varepsilon^4}$ for some constant $\alpha_2'$ that is not dependent on the problem instance, thus
  \begin{equation*}
    \E\left[\sum_{t=1}^T \Pr[\bm{\theta}(t) \in W_{j} | \mathcal{F}_{t-1}]\right] \le \alpha_2' \cdot {1\over \varepsilon^4}.
  \end{equation*}
\end{proof}


\begin{fact}(Lemma $3$ in \cite{KWAEE14})\label{Lemma_COM}
  For any $\Delta_1 \ge \Delta_2 \ge \cdots \ge \Delta_K > 0$,
  \begin{equation*}
    \Delta_1{1\over \Delta_1^2} + \sum_{k=2}^K \Delta_k\left({1\over \Delta_k^2} - {1\over \Delta_{k-1}^2}\right) \le {2\over \Delta_K}.
  \end{equation*}
\end{fact}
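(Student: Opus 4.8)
The plan is to reduce this purely algebraic inequality to a one-dimensional Riemann-sum comparison by an order-reversing substitution. First I would set $x_k = 1/\Delta_k^2$ for each $k$. Since $\Delta_1 \ge \Delta_2 \ge \cdots \ge \Delta_K > 0$, the new sequence is positive and nondecreasing, $0 < x_1 \le x_2 \le \cdots \le x_K$, with $\Delta_k = 1/\sqrt{x_k}$. Each summand rewrites as $\Delta_k\left(1/\Delta_k^2 - 1/\Delta_{k-1}^2\right) = (x_k - x_{k-1})/\sqrt{x_k}$, and the leading term becomes $\Delta_1/\Delta_1^2 = \sqrt{x_1}$, so the left-hand side equals
\[
\sqrt{x_1} + \sum_{k=2}^K \frac{x_k - x_{k-1}}{\sqrt{x_k}},
\]
and the goal reduces to showing that this is at most $2\sqrt{x_K} = 2/\Delta_K$.

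The key step is to view each term $(x_k - x_{k-1})/\sqrt{x_k}$ as a right-endpoint rectangle under the \emph{decreasing} function $f(x) = 1/\sqrt{x}$. Because $f$ attains its minimum on $[x_{k-1}, x_k]$ at the right endpoint $x_k$, the rectangle underestimates the integral:
\[
\frac{x_k - x_{k-1}}{\sqrt{x_k}} = f(x_k)\,(x_k - x_{k-1}) \le \int_{x_{k-1}}^{x_k} \frac{\mathrm{d}x}{\sqrt{x}} = 2\sqrt{x_k} - 2\sqrt{x_{k-1}}.
\]
Summing over $k = 2, \ldots, K$ telescopes the right side to $2\sqrt{x_K} - 2\sqrt{x_1}$, and adding back the leading term $\sqrt{x_1}$ gives
\[
\sqrt{x_1} + \sum_{k=2}^K \frac{x_k - x_{k-1}}{\sqrt{x_k}} \le 2\sqrt{x_K} - \sqrt{x_1} \le 2\sqrt{x_K} = \frac{2}{\Delta_K},
\]
which is exactly the claimed bound. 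The base case $K = 1$ is immediate, since the left-hand side is then $1/\Delta_1 \le 2/\Delta_1$.

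There is no substantive obstacle here; the only point demanding care is the direction of the Riemann comparison, where one must use that $1/\sqrt{x}$ is decreasing so that the right-endpoint value underestimates the integral and the inequality points the correct way. If I preferred to avoid calculus entirely, I would instead establish the per-term bound $(x_k - x_{k-1})/\sqrt{x_k} \le 2(\sqrt{x_k} - \sqrt{x_{k-1}})$ directly: multiplying through by $\sqrt{x_k} > 0$ and rearranging shows it is equivalent to $0 \le x_k - 2\sqrt{x_k x_{k-1}} + x_{k-1} = (\sqrt{x_k} - \sqrt{x_{k-1}})^2$, which always holds, and the same telescoping then finishes the proof.
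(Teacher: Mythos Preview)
Your proof is correct. The paper does not actually prove this statement; it is quoted verbatim as a cited fact (Lemma~3 of \cite{KWAEE14}) and used without justification in the proof of Theorem~\ref{Theorem_TS_MB}. Your argument via the substitution $x_k = 1/\Delta_k^2$ and the per-term bound $(x_k - x_{k-1})/\sqrt{x_k} \le 2(\sqrt{x_k} - \sqrt{x_{k-1}})$ (equivalently AM--GM, or the right-endpoint Riemann comparison for $x \mapsto 1/\sqrt{x}$) followed by telescoping is clean and complete, so there is nothing to compare against here beyond noting that you have supplied a proof the paper omits.
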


\subsection{Main Proof of Theorem \ref{Theorem_TS_MB}}

Now we provide the main proof of Theorem \ref{Theorem_TS_MB}.

{\TheoremFive*}

\begin{proof} With a bijection $L_t$, we could decouple the regret of playing one action $S(t)$ to each pair of mapped arms between
	$S(t)$ and $S^*$. For example, the regret of time $t$ is $\sum_{k=1}^K \mu_{L_t(k)} - \mu_{i^{(k)}(t)}$.

We use $N_{i,j}(t)$ to denote the number of rounds that $i^{(k)}(t) = i$ and $L_t(k) = j$ for $i\notin S^*$, $j\in S^*$ within in time slots $1,2,\cdots,T$, then
\begin{eqnarray*}
  \E\left[\sum_{t=1}^T r(S^*,\mu) - r(S(t),\mu)\right] &=& \E\left[\sum_{t=1}^T \sum_{k=1}^K (\mu_{L_t(k)} - \mu_{i^{(k)}(t)})\right] \\
  &=&  \sum_{i\notin S^*} \sum_{j\in S^*} \E[N_{i,j}(t)](\mu_j - \mu_i)\\
  &\le& \sum_{i\notin S^*} \sum_{j: j\in S^*, \mu_j > \mu_i} \E[N_{i,j}(t)](\mu_j - \mu_i).
\end{eqnarray*}

We can see that if $\{j: j\in S^*, \mu_j > \mu_i\} = \emptyset$, then $\sum_{i\notin S^*} \sum_{j: j\in S^*, \mu_j > \mu_i} \E[N_{i,j}(t)](\mu_j - \mu_i) = 0$, thus we do not need to consider the regret from base arm $i$, so we set $\Delta_i = \infty$ to make ${1\over \Delta_i} = 0$.

Now we bound the value $N_{i,j}(t)$, recall that 

\begin{itemize}
\item $\mathcal{A}_{i,j}(t) = \{i^{(k)}(t) = i \land L_t(k) = j\}$;
\item $\mathcal{B}_{i}(t) = \{\hat{\mu}_i(t) > \mu_i + \varepsilon\}$;
\item $\mathcal{C}_{i,j}(t) = \{\theta_i(t) > \mu_j - \varepsilon\}$.
\end{itemize}

Then we have
\begin{eqnarray*}
\E[N_{i,j}(t)] &=& \E\left[\sum_{t=1}^T \I[\mathcal{A}_{i,j}(t)]\right]\\
&\le& \E\left[\sum_{t=1}^T \I[\mathcal{A}_{i,j}(t) \land \mathcal{B}_{i}(t)]\right] + \E\left[\sum_{t=1}^T \I[\mathcal{A}_{i,j}(t) \land \neg \mathcal{B}_{i}(t)\land \mathcal{C}_{i,j}(t) ]\right] \\
&&+ \E\left[\sum_{t=1}^T \I[\mathcal{A}_{i,j}(t)\land \neg\mathcal{C}_{i,j}(t)]\right].
\end{eqnarray*}



\noindent\textbf{The first term:}

Summing over all possible pairs $(i,j)$, the first term has upper bound 
\begin{eqnarray*}
\sum_{i \notin S, j\in S} \E\left[\sum_{t=1}^T \I[\mathcal{A}_{i,j}(t) \land \mathcal{B}_{i}(t)]\right] &=&\sum_{i \notin S} \E\left[\sum_{t=1}^T \sum_{j\in S}\I[\mathcal{A}_{i,j}(t) \land \mathcal{B}_{i}(t)]\right]\\
&\le& \sum_{i \notin S } \E\left[\sum_{t=1}^T\I[\mathcal{B}_{i}(t) \land \{i \in S(t)\}]\right].
\end{eqnarray*}

From Lemma \ref{Lemma_M1}, we can see that $\E[|t:1\le t \le T, i\in S(t), |\hat{\mu}_i(t) - \mu_i| > \varepsilon|] \le 1+{1\over \varepsilon^2}$ for any base arm $i \notin S^*$.  Summing all the base arms up, the upper bound is $(m-K)(1+{1\over \varepsilon^2})$.

\noindent\textbf{The second term:}

In the second term, let $\mu_{j_1} \ge \mu_{j_2} \ge \cdots \ge \mu_{j_{i(s)}} > \mu_i \ge \mu_{j_{s(i)+1}} \ge \cdots \ge \mu_{j_K}$, where $S^* = \{j_1,\cdots,j_K\}$, and $s(i)$ is the number of base arms in $S^*$ with mean larger than $\mu_i$, then we can write this term as
\begin{eqnarray*}
\sum_{i \notin S, j\in S} (\mu_j - \mu_i)\E\left[\sum_{t=1}^T \I[\mathcal{A}_{i,j}(t) \land \neg \mathcal{B}_{i}(t)\land \mathcal{C}_{i,j}(t) ]\right].
\end{eqnarray*}

Notice that with probability $1-{1\over T}$, $\theta_i(t) < \hat{\mu}_i(t) + \sqrt{2\log T\over N_i(t)}$ (Lemma \ref{Fact_M5prime}), thus in expectation, the time steps that $\mathcal{A}_{i,j}(t) \land \neg \mathcal{B}_{i}(t)\land \mathcal{C}_{i,j}(t)$ occurs is upper bounded by ${2\log T \over (\mu_{j_k} - \mu_i - 2\varepsilon)^2} + 1$

Then the total expected regret due to choosing $i$ in $S(t)$ while $\mathcal{A}_{i,j}(t) \land \neg \mathcal{B}_{i}(t)\land \mathcal{C}_{i,j}(t)$ occurs is upper bounded by
\begin{equation}\label{eq111}
  2\log T\cdot \left({\mu_{j_1} - \mu_i\over (\mu_{j_1} - \mu_i - 2\varepsilon)^2} + \sum_{n=2}^{s(i)} (\mu_{j_n} - \mu_i)\left({1\over (\mu_{j_n} - \mu_i - 2\varepsilon)^2} - {1\over (\mu_{j_{n-1}} - \mu_i - 2\varepsilon)^2}\right)\right) + K.
\end{equation}

The reason is that for any value of $k$, $\theta_i(t)$ can be larger than $\mu_{j_k} - \varepsilon$ for at most the first ${\log T \over 2(\mu_{j_k} - \mu_i - 2\varepsilon)^2}$ times, and the first term of Eq. \eqref{eq111} is the largest regret satisfying this constraint. The second term of Eq. \eqref{eq111} is the expectation on the small error probability.

By Fact \ref{Lemma_COM}, we have the following upper bound for that:
\begin{equation*}
  2\log T\cdot {2\Delta_i - 4\varepsilon + 2\varepsilon \over (\Delta_i-2\varepsilon)^2} + K = {4\log T\over \Delta_i - 2\varepsilon}{\Delta_i - \varepsilon \over \Delta_i - 2\varepsilon} + K.
\end{equation*}

\noindent\textbf{The third term:}

Lemma \ref{Lemma_Independent} shows that $\E\left[\sum_{t=1}^T \Pr[\bm{\theta}(t) \in W_{j}]\right] \le \alpha_2' \cdot {1\over \varepsilon^4}$. Then from definition of $W_{j}$, we have
\begin{eqnarray*}
\sum_{i \notin S, j\in S} \E\left[\sum_{t=1}^T \I[\mathcal{A}_{i,j}(t)\land \neg\mathcal{C}_{i,j}(t)]\right] &=& \sum_{j\in S}  \E\left[\sum_{t=1}^T \sum_{i\notin S}\I[\mathcal{A}_{i,j}(t)\land \neg\mathcal{C}_{i,j}(t)]\right]\\
&\le& \sum_{j\in S}\E\left[\sum_{t=1}^T \I [\bm{\theta}(t) \in W_j]\right]\\
&\le&\alpha_2' \cdot {K\over \varepsilon^4}.
\end{eqnarray*}

\noindent\textbf{Sum of all the terms:}

Thus, the total regret upper bound is:
\begin{eqnarray*}
  \nonumber\sum_{i\notin S^*} \sum_{j: j\in S^*, \mu_j > \mu_i} \E[N_{i,j}(t)](\mu_j - \mu_i) &\le& \sum_{i\notin S^*}{4\log T\over \Delta_i - 2\varepsilon}{\Delta_i - \varepsilon \over \Delta_i - 2\varepsilon} + \alpha_2' \cdot {K\over \varepsilon^4} + (m-K)\left({1\over \varepsilon^2} +K +1\right) \\
  &\le& \sum_{i\notin S^*}{4\log T\over \Delta_i - 2\varepsilon}{\Delta_i - \varepsilon \over \Delta_i - 2\varepsilon} + (\alpha_2' + 1)\cdot{m\over \varepsilon^4} + m^2.
\end{eqnarray*}

Let $\alpha_2 = \alpha_2' + 1$, and we know it is also a constant not dependent on the problem instance.

\end{proof}

\end{document}